\NewDocumentCommand{\bang}
{ mO{} }{\textcolor{red}{\textsuperscript{\textit{Bang}}\textsf{\textbf{\small[#1]}}}}
\begin{document}

\title{Phase Transition for Budgeted Multi-Agent Synergy}

\author{\name Bang Liu \email bang.liu@umontreal.ca \\
       \addr Department of Computer Science and Operations Research\\
       Université de Montréal \& Mila - Quebec AI Institute\\
       Montreal, QC, Canada
       \AND
       \name Linglong Kong \email lkong@ualberta.ca \\
       \addr Department of Mathematical \& Statistical Science\\
       University of Alberta, AB, Canada
       \AND
       \name Jian Pei \email j.pei@duke.edu \\
       \addr Department of Computer Science\\
       Duke University, NC, United States
       }

\editor{My editor}

\maketitle

\begin{abstract}
Multi-agent systems can improve reliability, yet under a fixed test-time budget they often help, saturate, or even collapse. We develop a deliberately minimal and \emph{calibratable} theory that predicts these regimes from three binding constraints of modern agent stacks: finite context windows, lossy inter-agent communication, and shared failures among similar agents. Each leaf agent is summarized by a compute--performance scaling exponent $\beta$; communication is captured by a message-length fidelity curve $\gamma(m)$; dependence is captured by an effective shared-error correlation $\rho$; and a context window $W$ imposes hard fan-in limits that make hierarchy necessary. For binary success/failure tasks with majority aggregation, we prove a sharp phase transition for deep $b$-ary trees with correlated inputs and lossy communication: a single scalar $\alpha_\rho$ (combining $\gamma(m)$, $\rho$, and fan-in $b$) determines whether weak signal is amplified to a nontrivial fixed point or washed out to chance. In the amplifying regime, we derive an \emph{organization exponent} $s$ and show that \emph{budgeted synergy}, i.e., outperforming the best single agent under the same total budget, occurs exactly when $s>\beta$, yielding closed-form compute allocation rules and explicit budget thresholds. We further characterize saturation via a mixing depth and provide a conservative clipped predictor that remains accurate across growth and saturation. A continuous-performance warm-up gives closed-form risks for star, chain, and tree organizations, making correlation- and communication-induced floors explicit and exposing the core design trade-offs in a smooth setting. Finally, we validate the predicted phase boundaries in controlled synthetic simulations and show how the same mechanisms explain the dominant bottlenecks reported in recent large-scale matched-budget studies of LLM agent-system scaling, including context saturation, subcritical error cascades, and diminishing returns at strong baselines.
\end{abstract}

\begin{keywords}
  Budgeted Multi-Agent Synergy, Phase Transition, Shared-failure Correlation, Communication Bottlenecks, Finite Context Windows
\end{keywords}

\section{Introduction}
\label{sec:intro}

Multi-agent systems are often presented as a reliability primitive: if a single agent is fallible, run many agents, let them interact, and aggregate their outputs.
Under a \emph{fixed computational budget per task}, however, multi-agent coordination is not reliably beneficial.
The same additional agents that sometimes improve performance can also lead to saturation or even outright degradation.
This paper develops a \emph{predictive} theory of this brittleness: when multi-agent organization produces \emph{budgeted synergy}, i.e., outperforming the best single agent under the same total computational budget, and when scaling out must fail.

The LLM era has made this phenomenon impossible to ignore.
It is now routine to instantiate multiple LLM-based agents, assign roles, and coordinate them through discussion, critique, or voting in so-called ``agent societies'' and orchestration frameworks \citep{brown2020language,yao2023react,wu2023autogen,li2023camel,chen2024agentverse,hong2023metagpt}.
The underlying intuition is classical: ensembling can reduce error \citep{breiman1996bagging,freund1997decision,dietterich2000ensemble}, and interaction protocols such as debate or amplification aim to turn many weak judgments into a stronger one \citep{irving2018debate,christiano2018amplifying,du2024multiagentdebate}.
At the same time, controlled matched-budget evaluations increasingly reveal strong task- and topology-dependence, including regimes in which multi-agent variants \emph{degrade} once coordination overhead and information loss dominate \citep{kim2025scalingagents}.

Our starting point is that these failures are not implementation accidents but structural consequences of a small number of constraints that repeatedly bind in modern agent stacks.
First, agent errors are not independent: multiple instances derived from the same base model and prompt often share systematic failure modes, producing ``groupthink'' rather than error cancellation.
Second, coordination is communication-limited: as systems scale, agents must compress reasoning into bounded-length messages, and the resulting information loss compounds across layers.
Third, coordination is context-limited: a central aggregator cannot read arbitrarily many messages under a finite context window, so naive scale-out saturates structurally.
Recent work on large-population collaboration and topology learning highlights the same bottleneck: token and context limits are frequently the binding resource \citep{qian2025scalingcollaboration,zhang2024gdesigner}.

Today, most agent-system design still proceeds by heuristics: choose a topology, message format, and number of agents, then iterate.
In spirit, this resembles the hand-crafted feature engineering era of machine learning: powerful components exist, but system-level behavior depends on ad hoc choices and expensive trial-and-error.
Our goal is to push multi-agent design toward an ``agentic intelligence from first principles'' approach: make the dominant constraints explicit, explain \emph{why} scaling out works or fails under a fixed computational budget, and derive sharp, quantitative regime boundaries, most notably, a phase transition for budgeted multi-agent synergy.

\paragraph{A minimal, calibratable coordination theory.}
We develop a deliberately small framework that isolates the coordination bottlenecks above while remaining analytically tractable.
The term \emph{agent} is used abstractly: a black-box solver that, given a task instance and an allocated compute budget, produces an output that another component can consume (a decision, a score, or an $m$-token message).
This deliberately includes LLM-based agents as a primary motivating instance, because (i) tokens provide a natural budget unit, (ii) finite context windows induce hard fan-in constraints, and (iii) single-agent capability is often well described by compute scaling laws \citep{kaplan2020scaling,hoffmann2022computeoptimal}.
At the same time, the abstraction is broader than LLMs: none of the results relies on language as such, only on (a) how single-agent quality scales with compute, (b) how much usable information survives bounded-length communication, and (c) how strongly agents' failures co-move.

\begin{figure}[!ht]
  \centering
  \includegraphics[width=\linewidth]{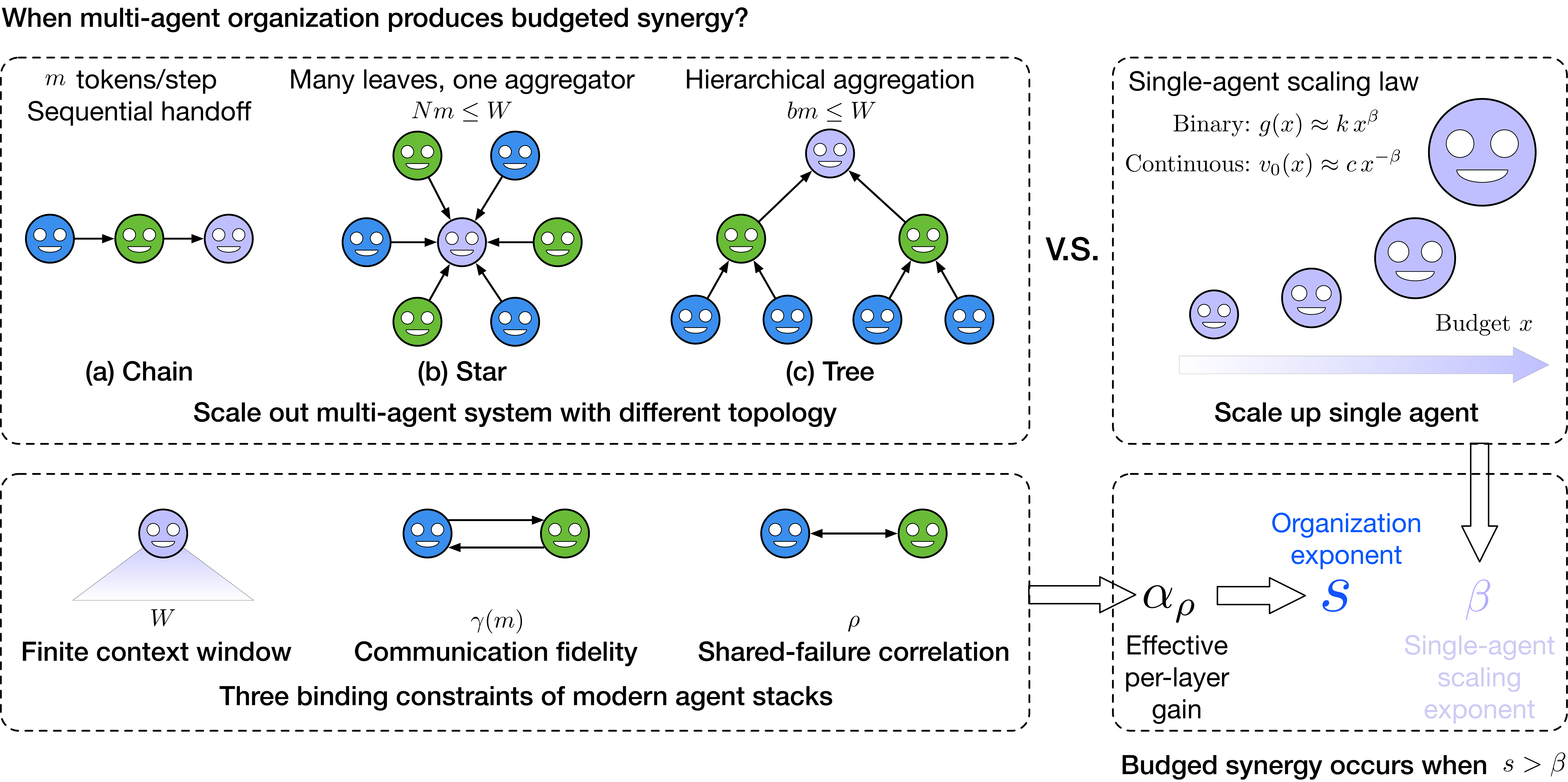}
  \caption{When does multi-agent organization produce budgeted synergy?
  An overview of our theoretical framework and core contribution.
  Left: scaling out multi-agent systems under different topologies (chain, star, and hierarchical tree) subject to a finite context window $W$, which constrains fan-in globally for stars ($Nm \le W$) and locally for trees ($bm \le W$).
  Bottom: three binding constraints of modern agent stacks: finite context windows, lossy communication with fidelity $\gamma(m)$, and shared-failure correlation $\rho$.
  Right: the competing baseline of scaling up a single agent, governed by a single-agent scaling law with exponent $\beta$.
  Our theory combines these elements into an effective per-layer gain $\alpha_\rho$, which induces an organization exponent $s$ for hierarchical aggregation.
  \textbf{Budgeted synergy occurs precisely when $s > \beta$}, meaning that scaling out via organization can outperform scaling up a single agent under the same total budget.}
  \label{fig:overview}
\end{figure}

By \emph{calibratable} we do not mean a theory ``about measurement.''
We mean a coordination theory parameterized by a small set of \emph{environment-dependent effective quantities} that can be estimated once (via targeted experiments) and then used to predict qualitative regimes.
Concretely, the framework is driven by single-agent scaling (exponent $\beta$), communication fidelity as a function of message length (e.g., $\gamma(m)$), shared-failure correlation ($\rho$), and the context window constraint ($W$), together with a total budget $B$ that fixes the scale-up/scale-out trade-off.
These parameters are not claimed to capture every detail of a real agent stack; they form a minimal interface for diagnosing which constraint is binding and for predicting when organization helps, saturates, or fails.

\paragraph{Phase transition for budgeted multi-agent synergy.}
Within this framework we analyze three canonical organizations: star (central aggregation), chain (sequential handoff), and hierarchical tree.
Our main results characterize when hierarchy is viable and when it is provably counterproductive.
For binary success/failure tasks with majority aggregation, we prove a sharp amplification--collapse transition for deep trees: a scalar $\alpha_\rho$ (combining communication fidelity, correlation, and fan-in) determines whether weak signal amplifies across layers or is washed out to chance, in the spirit of reconstruction thresholds in noisy broadcasting on trees \citep{evans2000broadcasting,mossel2001reconstruction,kesten1966additional}.
In the amplifying regime, we derive an \emph{organization exponent} $s$ governing growth with the number of leaves and show that \emph{budgeted} synergy in the growth range occurs exactly when $s>\beta$, yielding closed-form compute allocation and explicit budget thresholds.
We also characterize finite-depth saturation via a nontrivial fixed point and mixing-depth guarantees.
Finally, we make precise why topology becomes decisive only because constraints bind: absent context limits, central aggregation is information-dominant (data processing inequality) \citep{cover2006elements}, and hierarchy is valuable only insofar as it bypasses fan-in limits without crossing into the collapsing regime. Figure~\ref{fig:overview} illustrates our core contribution.

Beyond the binary core, a continuous-performance warm-up yields closed-form risks for star/chain/tree organizations and makes correlation and communication floors explicit.
We also translate theorems into design diagnostics (e.g., monotone communication design curves) and connect the predicted bottlenecks to recent controlled large-scale agent-scaling studies \citep{kim2025scalingagents}.
Detailed calibration templates are collected in Appendix~\ref{app:calibration}.

\paragraph{Methodological stance: theory vs.\ benchmarking.}
Our aim in this paper is to theoretically delineate regimes of behavior and identify the mechanisms that govern when multi-agent systems help, saturate, or collapse under realistic constraints.
Large-scale matched-budget evaluations with real LLM agent stacks are expensive and can be confounded by prompt engineering and rapidly changing model/tool ecosystems.
Accordingly, we validate the sharp regime boundaries via controlled synthetic simulations (Section~\ref{sec:synthetic_experiments}) that probe the theory under its own assumptions, and we use external controlled evidence \citep{kim2025scalingagents} to show that the bottlenecks isolated by the theory (context saturation, subcritical collapse, and diminishing returns) are salient in modern deployments.

\paragraph{Organization.}
Section~\ref{sec:framework} defines the framework, measurable parameters, and the budget and context constraints.
Section~\ref{sec:continuous_warmup} develops the continuous warm-up.
Section~\ref{sec:binary_core} presents the binary theory, including the phase transition, small-signal amplification, and saturation; it also compares topologies under context constraints and derives budget thresholds.
Section~\ref{sec:design} turns the theory into design diagnostics.
Section~\ref{sec:experiments} provides empirical touchpoints via external controlled evidence and synthetic sanity checks.
We place detailed proofs and optional calibration/evaluation templates in the appendices.

\section{Framework and Measurable Parameters}
\label{sec:framework}

This section formalizes the abstraction used throughout the paper.
We keep only three constraints that repeatedly bind in practice: finite context windows, lossy communication, and shared failures, and summarize the environment by a small set of quantities.
These quantities play distinct roles: \(\beta\) captures how a single agent improves with compute, \(\gamma(m)\) (or \(\sigma_c^2(m)\)) captures what survives an \(m\)-token message, and \(\rho\) captures how correlated different agents' errors are.
Together with a total budget \(B\) and a context window \(W\), they determine whether scaling out amplifies signal, saturates at a floor, or collapses.

We model a multi-agent system as a directed acyclic computation graph.
Leaf nodes (agents) produce signals about a latent task variable \(Y\); internal nodes receive messages from their children (possibly through a lossy channel) and apply an aggregation rule; the root output is the system prediction.
Throughout, an \emph{agent} is a black-box solver with an adjustable compute budget \(x\) that emits a bounded-length output (a decision, a score, or an \(m\)-token message).
LLM-based agents are the primary motivating instance, but the framework only assumes a single-agent scaling curve, a communication-fidelity curve, and a shared-failure correlation.

\subsection{Tasks and performance metrics}
\label{sec:tasks-metrics}

We study two task primitives that are deliberately simple but theoretically expressive.
They are not meant to exhaust the diversity of real-world tasks; rather, they isolate the coordination bottlenecks that arise repeatedly inside more complex multi-agent pipelines.
Binary success/failure captures decision-like outcomes (e.g., accept/reject, pass/fail, correctness of a proposed action), while continuous performance on \([0,1]\) captures graded judgments, scores, or calibrated estimates.
Together, these two primitives cover a wide range of evaluation signals used in practice and allow clean analysis of aggregation, communication loss, and shared failures. From a theoretical standpoint, any richer task that can be decomposed into local scoring, communication, and aggregation steps must confront the same coordination constraints analyzed here.

The continuous setting serves as a smooth warm-up where many quantities admit closed forms and correlation and communication floors are explicit.
The binary setting is not a special case of the continuous one: the nonlinearity of majority vote introduces qualitatively new behavior, including a sharp amplification--collapse phase transition that has no analogue in linear aggregation.
Many realistic agent workflows combine both primitives. For example, agents may exchange graded scores or confidence estimates internally, but the final system decision is binary. So analyzing both is necessary to capture the full coordination picture.

\paragraph{Binary tasks.}
Let \(Y \in \{ -1, +1 \}\) denote the ground truth.
A system outputs \(\widehat{Y}\in\{-1,+1\}\).
We measure performance through the \emph{bias}
\begin{equation}
\mu \;:=\; \mathbb{E}[\widehat{Y}Y] \in [-1,1],
\qquad
\Pr(\widehat{Y}=Y)=\frac{1+\mu}{2}.
\label{eq:bias_def}
\end{equation}
The bias \(\mu\) is a natural summary statistic for weak agents: it measures how far performance is above chance and composes cleanly under simple channel models and aggregation maps.
This makes it particularly suitable for analyzing whether small, local improvements are amplified or destroyed by hierarchical organization.

\paragraph{Continuous tasks.}
Let \(Y\in[0,1]\) and a system output \(\widehat{Y}\in[0,1]\).
We use mean squared error (MSE)
\begin{equation}
v \;:=\; \mathbb{E}\big[(\widehat{Y}-Y)^2\big],
\label{eq:mse_def}
\end{equation}
and occasionally report a bounded performance score such as \(\mathrm{Perf}=1/(1+v)\), which is monotone in \(v\).
Continuous metrics capture graded information that is often exchanged within agent systems (scores, confidence levels, value estimates) and lead to linear aggregation dynamics.
In this setting, the effect of correlation and communication loss appears as explicit error floors rather than phase transitions, providing intuition that complements the binary analysis.
\subsection{Single-agent capability and scaling}
\label{sec:scaling}

Our central comparison is \emph{fixed budget}: the same total computational budget \(B\) can be allocated either to \textbf{scale up} a single agent (making one agent stronger) or to \textbf{scale out} across many agents (running more agents in parallel).
Understanding when scale out can outperform scale up under the same budget is the core question of this paper.

We model per-leaf compute by a single nonnegative knob \(x\ge 0\), representing the share of the per-task computational budget allocated to an individual agent.
In LLM-based agents, \(x\) may correspond to inference tokens devoted to reasoning, the number of samples or self-consistency votes, structured test-time inference procedures (e.g., generate--select or tournament-style algorithms), the number of tool calls, or any other additive per-task resource that trades off directly against parallelism.

We summarize the effect of per-agent compute via a \emph{single-agent scaling law}.
Classical scaling laws are typically reported for training loss as a function of training compute \citep{kaplan2020scaling,hoffmann2022computeoptimal}.
Here, however, we use scaling laws in a different but related sense: as an \emph{effective description} of how a fixed trained agent’s \emph{test-time performance} improves when allocated additional per-task computational resources.
Recent work on test-time or inference-time scaling shows that such improvements can follow systematic power-law or exponential trends in appropriate regimes and metrics, including accuracy, error probability, or calibrated scores \citep{chen2024provable,levi2025simple,snell2024scaling,wu2025inference}.
Motivated by these findings, we treat the scaling exponent as a \emph{per-task scaling exponent} governing scale up, not as a statement about retraining the model.

For binary success/failure tasks, a leaf agent allocated compute \(x\) produces a vote \(\widehat{Y}\) with bias
\[
\mu_0(x)=g(x),
\]
where \(g(\cdot)\) is increasing and exhibits diminishing returns.
For continuous tasks, a leaf produces an estimate \(X\) with conditional variance \(v_0(x)\), where \(v_0(\cdot)\) decreases with \(x\).

In our main theorems, we work in a regime where these functions are well-approximated by power laws:
\begin{align}
\text{Binary:}\quad & g(x) \approx k\,x^{\beta} \quad \text{(small-signal regime)},\label{eq:binary_scaling}\\
\text{Continuous:}\quad & v_0(x) \approx c\,x^{-\beta}, \label{eq:cont_scaling}
\end{align}
with constants \(k,c>0\) and a \emph{single-agent scaling exponent} \(\beta>0\).
Here ``small-signal regime'' refers to the \emph{operating range of \(x\)} in which (i) the local power-law approximation is accurate enough for comparison, and (ii) the induced leaf performance is not already saturated.
In the binary analysis, this typically corresponds to weak leaf votes (small bias \(\mu_0(x)\), i.e., accuracy only slightly above chance), which is precisely the regime where the majority update map is well-approximated by its derivative at the origin and where the organization exponent \(s\) governs growth.
In the continuous warm-up, it corresponds to the pre-saturation range where aggregation has not yet hit the communication/correlation floor.
The exponent \(\beta\) is not a universal constant: it depends on the model, prompting, tools, test-time procedure, and task family, and is intended to be \emph{measured} in the operating regime of interest.
Our theory does not require the power-law approximation to hold globally or asymptotically; it relies only on the existence of a local scaling exponent \(\beta\) in the budget range where the scale-up versus scale-out comparison is made.

\subsection{Communication as a controllable lossy channel}
\label{sec:communication}

Agents coordinate through messages.
Our analysis uses a compact abstraction of how message length affects what downstream nodes can reliably use.

We model each edge as a channel controlled by a message length parameter \(m\) (in tokens).
Longer messages generally preserve more usable information, but they also cost more budget and reduce feasible fan-in under a finite context window.

\paragraph{Binary: an effective bit channel.}
For binary tasks and majority-style protocols, it is natural to treat each child as intending to transmit a single bit of information (its vote or decision), encoded into a message of length \(m\).
We model the receiver's decoded bit as passing through a binary symmetric channel (BSC) with \emph{reliability} \(\gamma(m)\in(0,1]\):
\begin{equation}
\mu_{\text{recv}} = \gamma(m)\,\mu.
\label{eq:binary_channel}
\end{equation}
Equivalently, \(\gamma(m)=1-2\epsilon(m)\), where \(\epsilon(m)\) is an effective flip probability.

\paragraph{Continuous: additive distortion.}
For continuous tasks, we model transmission as additive zero-mean distortion,
\begin{equation}
\widetilde{X}=X+\eta,
\qquad
\mathbb{E}[\eta]=0,\quad \mathrm{Var}(\eta)=\sigma_c^2(m),
\label{eq:cont_channel}
\end{equation}
where \(\sigma_c^2(m)\) decreases with \(m\).
This captures compression loss, imperfect interpretation, and degradation introduced by summarization or constrained prompts.

The intent is not to claim that real messages are literally bits or Gaussian perturbations, but to represent an \emph{effective} fidelity curve as a function of \(m\).
In particular, \(\gamma(m)\) or \(\sigma_c^2(m)\) can be estimated by simple encode--decode experiments at the same message length.

\subsection{Shared-failure correlation and the \texorpdfstring{$\rho$}{rho} model}
\label{sec:correlation}

A key departure from idealized ensemble analyses is that agent errors are not independent, even though many ensemble gains are easiest to analyze under weak dependence assumptions \citep{dietterich2000ensemble,germain2015risk}.
Agents instantiated from the same base model and prompt often share systematic error modes, which can dominate any aggregation gain.
We capture this effect with a single, measurable correlation parameter \(\rho\in[0,1)\).

\paragraph{Binary: correlation of signed correctness.}
For a leaf vote \(\widehat{Y}_i\), define the signed correctness variable
\[
S_i := \widehat{Y}_i Y \in \{-1,+1\}.
\]
Then \(\mathbb{E}[S_i]=\mu_0\), and we define
\[
\rho \;:=\; \mathrm{Corr}(S_i,S_j), \qquad i\neq j,
\]
estimated empirically by averaging pairwise correlations across tasks and agent instances.
Here \(i\) and \(j\) index two agent instances whose outputs are aggregated together: for a star this is any pair of leaves; for a tree it should be interpreted as the \emph{sibling} correlation at an internal node. In practice the effective correlation can depend on depth, heterogeneity, or the message schema; for tractability we treat \(\rho\) as a depth-independent effective parameter and return to extensions in Section~\ref{sec:discussion}.
Intuitively, \(\rho=0\) corresponds to independent errors, while \(\rho\) close to 1 indicates near-perfect groupthink.

For analysis, we use a concrete generative model that matches this definition exactly.
At each aggregation node, with probability \(\rho\) the children share a common ``mode'' (their signed correctness variables are identical), and with probability \(1-\rho\) they are conditionally independent given \(Y\).
This model produces the same pairwise correlation \(\rho\) while remaining analytically tractable and yields a closed-form correlated aggregation map used throughout the paper.

\paragraph{Continuous: correlated residuals.}
For continuous outputs \(X_i\), define residuals \(E_i := X_i - Y\).
We use the standard equal-correlation model
\[
\mathrm{Var}(E_i)=v,\qquad \mathrm{Cov}(E_i,E_j)=\rho v,\quad i\neq j,
\]
which makes correlation floors explicit and leads to closed-form recursions in Section~\ref{sec:continuous_warmup}.
The same \(\rho\) can be estimated by pairwise residual correlations.

\paragraph{Why a single parameter is useful.}
Real systems may have richer dependence structure than a single coefficient.
The intent here is not to model all dependence, but to isolate the dominant failure direction: shared errors.
As we show later, this single parameter already shifts phase boundaries and imposes aggregation floors that are otherwise difficult to diagnose.

\subsection{Budget and context constraints}
\label{sec:budget-context}

Our central comparison is \emph{fixed-budget}: a multi-agent system should be judged against the best single-agent baseline under the same total cost.
We therefore make computation and communication costs explicit.

\paragraph{Budget.}
Let \(B\) denote the total available budget.
In token-based LLM systems, a natural accounting unit is tokens, including both generated tokens and tokens processed as input.
Our theory is compatible with other budget units (time, API cost, calls), as long as they are additive and comparable across designs.

We separate two costs:
\begin{itemize}
  \item \textbf{Leaf computation:} each leaf agent is allocated compute \(x\).
  \item \textbf{Communication:} each edge transmits a message of length \(m\) tokens.
\end{itemize}
To keep the model simple, we account a per-edge cost proportional to \(m\).
A useful approximation in token-based systems is
\[
\text{edge cost} \approx 2m,
\]
representing \(m\) tokens produced by the sender and \(m\) tokens read by the receiver.

\paragraph{Context window as a fan-in constraint.}
Let \(W\) denote the maximum number of tokens that an internal node can read and process.
If each incoming message has length \(m\), then any aggregation node can read at most
\begin{equation}
\text{fan-in} \;\le\; \Big\lfloor \frac{W}{m} \Big\rfloor.
\label{eq:fanin_constraint}
\end{equation}
This inequality is the reason depth matters:
a star is capped by \(N\le \lfloor W/m\rfloor\) leaves, while a tree can scale to \(N=b^L\) leaves by distributing the fan-in constraint across levels.

\paragraph{Topology-specific cost summaries.}
Let \(N\) be the number of leaves.
Under the accounting above:
\begin{itemize}
  \item \textbf{Star:} \(C_{\text{star}}(N,x,m)\approx N(x+2m)\), with feasibility constraint \(Nm\le W\).
  \item \textbf{Chain:} for length \(L\), \(C_{\text{chain}}(L,x,m)\approx (L+1)x + 2mL\) (assuming each handoff step runs an agent with compute x, e.g., one model call), with feasibility constraint \(m\le W\) (usually nonbinding).
  \item \textbf{Tree:} for a full \(b\)-ary tree of depth \(L\) with \(N=b^L\) leaves, the number of edges is
  \[
  E = b\cdot \frac{N-1}{b-1},
  \]
  so \(C_{\text{tree}}(b,L,x,m)\approx Nx + 2mE\).
  It is convenient to write this as a per-leaf cost:
  \[
  C_{\text{tree}}(b,L,x,m)\approx N\big(x+c_0(b,m)\big),
  \qquad
  c_0(b,m):=2m\frac{b}{b-1},
  \]
  subject to the fan-in feasibility \(bm\le W\). Alternative accounting (templating overheads, parsing costs, asymmetric protocols) rescales \(c_0(b,m)\) and can be treated as an implementation constant.
\end{itemize}
This approximation isolates the basic trade-off: increasing \(m\) can improve fidelity, but it also increases coordination cost and reduces the number of leaves affordable under \(B\).

\subsection{Topologies and protocols}
\label{sec:topologies_protocols}

A \emph{topology} specifies how information flows; a \emph{protocol} specifies what each node does with the information it receives.
We keep protocols deliberately simple to make the effect of constraints and scaling transparent.
In particular, we focus on three canonical organizations that recur in practice and isolate the key trade-offs: star, chain, and hierarchical tree (Figure~\ref{fig:overview}).

\paragraph{Star.}
Leaves transmit their outputs directly to a central aggregator.
Star avoids multi-hop loss, but it concentrates fan-in at one node and therefore saturates at \(N \approx W/m\) leaves regardless of budget.

\paragraph{Chain.}
Outputs are passed sequentially through \(L\) steps.
Chains avoid fan-in bottlenecks but repeatedly retransmit information; without new evidence injected along the path, each step can only preserve or degrade the signal.

\paragraph{Hierarchical tree.}
A tree distributes aggregation across many small nodes.
Each internal node aggregates \(b\) child messages and forwards a summary upward, enforcing \(bm\le W\) locally and enabling many more leaves.
The cost is multi-hop communication and repeated aggregation, so performance hinges on whether amplification dominates compounding loss and correlation.

\paragraph{Why focus on star, chain, and tree?}
These motifs capture the dominant structural trade-offs induced by finite context and imperfect coordination.
Star minimizes depth but is capped by a global fan-in constraint \(Nm \le W\).
Chain removes fan-in limits but maximizes the number of lossy transmissions.
Trees are the simplest family that bypasses the fan-in cap (\(bm \le W\) locally) while still aggregating many signals; this is precisely the setting in which phase transitions and budget thresholds emerge.

\begin{figure}[!ht]
\centering
  \includegraphics[width=0.65\linewidth]{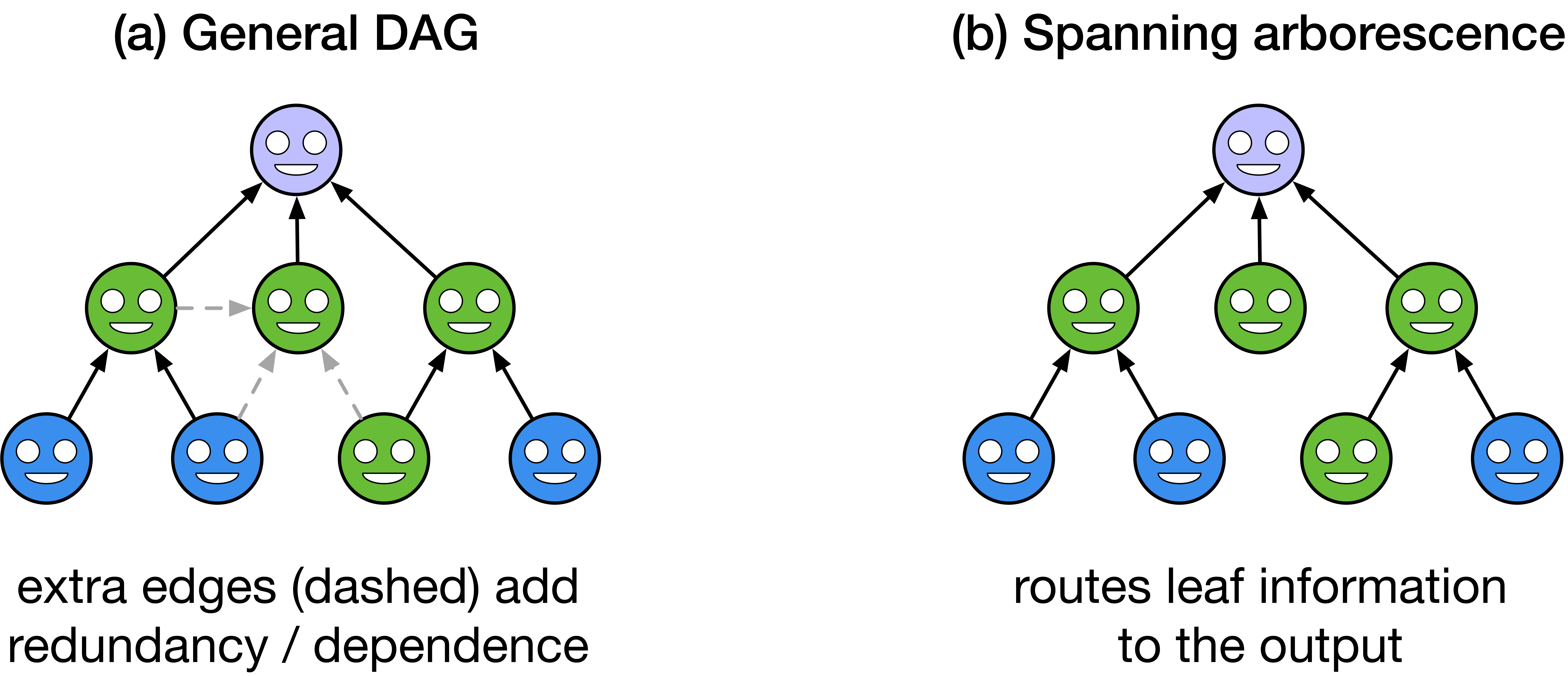}
\caption{From general DAGs to spanning arborescences.
  A general DAG~(a) may contain extra edges (dashed) that add redundancy or dependence,
  but any feed-forward organization contains a spanning arborescence that routes
  information from leaves to the output~(b).}
\label{fig:dag_to_arborescence}
\end{figure}

It is also useful to view more general feed-forward organizations through these motifs.
With a finite context window, every node can only aggregate a bounded number of incoming messages, i.e., each local computation is an \emph{in-star} with fan-in at most \(\lfloor W/m \rfloor\).
Globally, any connected directed acyclic communication graph contains a spanning arborescence that routes leaf information to the output; star and chain are the two extreme cases (depth \(1\) versus fan-in \(1\)), and bounded-fan-in trees are the minimal structure that scales beyond the star’s context cap.
Additional edges or multi-round interaction are valuable mainly when they effectively \emph{change the measurable parameters}, for example by improving effective fidelity through redundancy or by reducing effective correlation through independent verification, rather than by rerouting the same information along multiple dependent paths.
Figure~\ref{fig:dag_to_arborescence} illustrates this reduction from general DAGs to a spanning arborescence.

\paragraph{Aggregation rules.}
We analyze majority vote at internal nodes for binary tasks (ties are avoided by choosing odd fan-in) and averaging for continuous tasks (the MSE-optimal linear unbiased aggregator under equal-variance equal-correlation assumptions).
More sophisticated protocols (critique, verification, debate) can be interpreted in our framework as mechanisms that effectively increase communication fidelity or reduce shared-error correlation.
We return to this connection in the discussion.

\subsection{What is measurable and how to measure it}
\label{sec:measurable-params}

The framework is parameterized by a small set of environment-dependent quantities.
Table~\ref{tab:measurable_params} summarizes their meaning and outlines how one might estimate them in a specific model--task setting.
These estimates are optional: the theoretical results that follow hold given \(\beta\), \(\rho\), and the communication curves, while calibration is a way to connect the regime predictions back to concrete agent stacks at a coarse level.

\begin{table}[t]
\centering
\small
\renewcommand{\arraystretch}{1.15}
\begin{tabular}{l p{0.45\linewidth} p{0.33\linewidth}}
\hline
\textbf{Quantity} & \textbf{Meaning in the framework} & \textbf{How to estimate in practice} \\
\hline
\(\beta\) & Single-agent scaling exponent: how fast a single agent improves with compute \(x\). & Sweep \(x\) and fit a power law to bias \(g(x)\) (binary) or MSE \(v_0(x)\) (continuous) in the non-saturated regime. \\
\(\gamma(m)\) / \(\sigma_c^2(m)\) & Communication fidelity as a function of message length \(m\). Binary uses an effective bit reliability \(\gamma(m)\); continuous uses a distortion variance \(\sigma_c^2(m)\). & Encode known labels/values into \(m\) tokens and measure decode flip rate (binary) or MSE (continuous). \\
\(\rho\) & Shared-error correlation (groupthink strength) among agents within an aggregation group. & Run multiple agents on the same tasks and estimate pairwise correlation of signed correctness \(S_i=\widehat{Y}_iY\) (binary) or residuals \(E_i=X_i-Y\) (continuous). \\
\(W\) & Context window constraint controlling fan-in: each node can read at most \(W\) tokens. & System constraint minus templating/tool overhead; verify effective \(W\) in the deployed scaffold. \\
\(B\) & Total budget (tokens, time, calls) to be allocated across agents and communication. & Define a consistent additive accounting unit and record \(B\) from logs. \\
\hline
\end{tabular}
\caption{Key parameters of the framework and indicative estimation strategies. The theory treats these quantities as inputs; calibration is optional and necessarily coarse.}
\label{tab:measurable_params}
\end{table}

\begin{figure}[t]
\centering
\includegraphics[width=0.8\linewidth]{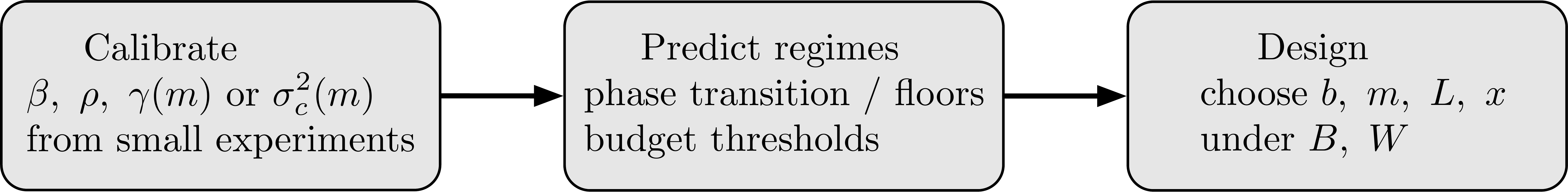}
\caption{A possible use of the framework. Estimate \(\beta\) (single-agent scaling), \(\rho\) (shared failures), and \(\gamma(m)\) or \(\sigma_c^2(m)\) (communication fidelity) in a given setting, then use the theory to map budgets \(B\) and context windows \(W\) to plausible organizations and message lengths.}
\label{fig:framework_loop}
\end{figure}

Sections~\ref{sec:continuous_warmup}--\ref{sec:design} use these parameters to derive correlation/communication floors, a binary amplification--collapse phase transition, and budget-dependent topology comparisons.

\section{Continuous Warm-up: Averaging Under Correlation and Communication Noise}
\label{sec:continuous_warmup}

We begin with a continuous-performance setting in which agents estimate a scalar target \(Y\in[0,1]\) and performance is measured by mean squared error (MSE).
This warm-up makes the basic organization trade-offs transparent in closed form: averaging reduces independent noise, shared errors impose floors, and communication loss introduces additional distortion.
Throughout we adopt the framework of Section~\ref{sec:framework} and use averaging at internal nodes, which is the natural choice under squared loss and symmetric noise assumptions.

Although we phrase the warm-up as each leaf producing an estimate of the same latent \(Y\), it should be read as an abstraction of role-specialized pipelines.
A child agent may solve a subtask and transmit an \(m\)-token message; \(X_i\) denotes the \emph{effective} scalar contribution about the downstream target that the parent can decode from that message.
Role heterogeneity then enters through the effective parameters (\(\rho\) and \(\sigma_c^2(m)\) or \(\gamma(m)\)); we return to richer semantics and extensions in Section~\ref{sec:discussion}.

\subsection{Model assumptions for the continuous setting}
\label{sec:cont_model}

This warm-up instantiates the continuous-task primitive from Section~\ref{sec:framework} using the communication and shared-failure models already defined in Sections~\ref{sec:communication} and \ref{sec:correlation}. 
The only additional assumptions are that leaf estimates are (approximately) unbiased and admit a compute-dependent MSE description.

Let \(Y\in[0,1]\) denote the latent target and let each leaf output an estimate \(X\in[0,1]\).
To keep the warm-up focused on organization rather than estimator bias, we assume conditional unbiasedness,
\begin{equation}
\mathbb{E}[X\mid Y]=Y.
\label{eq:cont_unbiased}
\end{equation}
Write the residual \(E:=X-Y\), so \(\mathbb{E}[E\mid Y]=0\), and summarize leaf quality by the MSE
\begin{equation}
v_0(x):=\mathbb{E}\big[(X-Y)^2\big]=\mathbb{E}[E^2],
\label{eq:v0_def}
\end{equation}
which depends on the per-leaf compute allocation \(x\).
In the operating range where we compare \textbf{scale up} versus \textbf{scale out}, we use the scaling approximation
\begin{equation}
v_0(x)\approx c\,x^{-\beta},
\label{eq:cont_scaling_recall}
\end{equation}
with constants \(c>0\) and \(\beta>0\) (Section~\ref{sec:scaling}).

Communication follows the additive distortion model from Section~\ref{sec:communication}, with message length \(m\); we treat edge distortions as independent across edges and independent of \(Y\).

Within any aggregation group, residuals are modeled by the equal-correlation structure from Section~\ref{sec:correlation}: for children \(i\neq j\),
\begin{equation}
\mathrm{Var}(E_i)=v,\qquad \mathrm{Cov}(E_i,E_j)=\rho v,
\label{eq:cont_corr}
\end{equation}
where \(\rho\in[0,1)\) is the shared-failure correlation parameter.

Finally, an internal node with fan-in \(b\) outputs the average of its received inputs,
\begin{equation}
\widehat{Y} = \frac{1}{b}\sum_{i=1}^b \widetilde{X}_i.
\label{eq:avg_rule}
\end{equation}
Under these symmetry assumptions, averaging is the MSE-optimal linear unbiased aggregator.

\subsection{Closed-form risk for star, chain, and tree}
\label{sec:cont_closed_form}

We now derive exact MSE expressions (or recursions) for the three canonical topologies.
These formulas already reveal three key messages: star saturates under context constraints, chains compound loss, and trees trade scale for multi-hop noise.

\paragraph{Star.}
Consider a star with \(N\) leaves feeding a single aggregator.
Each leaf sends a length-\(m\) message, so the aggregator is feasible only if \(Nm\le W\).
Let \(v_0\) denote the leaf MSE (for a given compute allocation \(x\)) and let \(\rho\) denote the pairwise residual correlation within this aggregation group.
After one hop, each received estimate has residual \(\widetilde{E}_i = (X_i-Y)+\eta_i\), with
\[
\mathrm{Var}(\widetilde{E}_i)=v_0 + \sigma_c^2(m),\qquad \mathrm{Cov}(\widetilde{E}_i,\widetilde{E}_j)=\rho v_0.
\]
The star output is the average of the received values, so the MSE is
\begin{equation}
v_{\text{star}}(N)
=
\mathbb{E}\big[(\widehat{Y}-Y)^2\big]
=
\frac{v_0\big(1+(N-1)\rho\big)}{N}
+
\frac{\sigma_c^2(m)}{N}.
\label{eq:star_mse}
\end{equation}

If \(N\) could grow arbitrarily, the channel noise term \(\sigma_c^2(m)/N\) would vanish.
However, correlation produces an irreducible floor:
\[
v_{\text{star}}(N)\ \to\ \rho v_0 \qquad \text{as } N\to\infty \text{ (for fixed } v_0\text{)}.
\]
Thus, even perfect communication cannot eliminate a globally shared residual component.
In practice, the context constraint \(Nm\le W\) prevents \(N\) from growing with budget once \(m\) is fixed, which is why star organizations often stop improving beyond a modest scale.

\paragraph{Chain.}
Consider a chain of length \(L\), where an estimate is handed off sequentially through \(L\) communication steps, each of length \(m\), without introducing new independent evidence.
Starting from a leaf estimate with MSE \(v_0\), each hop adds independent distortion variance \(\sigma_c^2(m)\).
Thus,
\begin{equation}
v_{\text{chain}}(L)=v_0 + L\,\sigma_c^2(m).
\label{eq:chain_mse}
\end{equation}

Equation~\eqref{eq:chain_mse} isolates the unavoidable cost of multi-hop \emph{relay} communication: if intermediate nodes primarily re-encode the same information (no new independent evidence), each hop injects distortion and MSE grows linearly in \(L\).
Pipeline-style chains can still be beneficial when each stage performs additional computation or incorporates new information; in that case, the estimation error may decrease across stages, but the same \(\sigma_c^2(m)\) term captures a communication bottleneck that accumulates with depth and must be outweighed by per-stage gains.

\paragraph{Hierarchical tree.}
Consider a full \(b\)-ary tree with depth \(L\), where each internal node averages its \(b\) children.
Let \(v_t\) denote the MSE of a node output at level \(t\) (leaves are \(t=0\); the root is \(t=L\)).
We assume that at each aggregation node, the \(b\) child residuals satisfy the equal-correlation model~\eqref{eq:cont_corr} with coefficient \(\rho\), and that channel noise on incoming edges is independent.

A parent receives \(\widetilde{X}_i = X_i + \eta_i\) from each child, so each received residual has variance \(v_t+\sigma_c^2(m)\), while covariances remain \(\rho v_t\) because the channel noises are independent.
Averaging~\eqref{eq:avg_rule} yields the recursion
\begin{equation}
v_{t+1}
=
a\,v_t
+\frac{\sigma_c^2(m)}{b},
\qquad
a:=\frac{1+(b-1)\rho}{b}=\rho + \frac{1-\rho}{b}.
\label{eq:tree_rec}
\end{equation}
Unrolling gives the closed form
\begin{equation}
v_L
=
a^L v_0
+
\frac{\sigma_c^2(m)}{b}\cdot \frac{1-a^L}{1-a}.
\label{eq:tree_closed}
\end{equation}

The coefficient \(a\in(0,1)\) when \(\rho<1\), so the first term \(a^L v_0\) shrinks exponentially in depth.
This is the statistical benefit of hierarchy: averaging repeatedly reduces the variance of what is being passed upward.
The second term is the cost: each level injects fresh communication distortion, which accumulates into an error floor described next.

\subsection{Saturation floors and mixing depth}
\label{sec:cont_floors}

Equation~\eqref{eq:tree_closed} exposes a sharp and practical phenomenon: deeper trees do not improve indefinitely.
Even if leaves become arbitrarily accurate, repeated communication loss produces a floor.
Figure~\ref{fig:cont_saturation} illustrates the typical behavior: $v_L$ shrinks rapidly with depth at first (roughly at rate $a^L$) and then saturates at the floor $v^\star$ determined by $\sigma_c^2(m)$, $b$, and $\rho$.

\paragraph{Tree floor.}
For \(\rho<1\) we have \(a<1\), and therefore \(a^L\to 0\) as \(L\to\infty\).
Taking limits in~\eqref{eq:tree_closed} yields
\begin{equation}
v^\star(b,m,\rho)
:=
\lim_{L\to\infty} v_L
=
\frac{\sigma_c^2(m)}{b(1-a)}
=
\frac{\sigma_c^2(m)}{(b-1)(1-\rho)}.
\label{eq:tree_floor}
\end{equation}

This expression cleanly separates the levers available to a designer:
\begin{itemize}
  \item Increasing message length \(m\) decreases \(\sigma_c^2(m)\), lowering the floor.
  \item Increasing fan-in \(b\) lowers the floor approximately as \(1/(b-1)\), but \(b\) is limited by \(bm\le W\).
  \item Reducing shared correlation \(\rho\) lowers the floor and also accelerates convergence (since \(a\) decreases).
\end{itemize}

\paragraph{Mixing depth.}
The recursion~\eqref{eq:tree_rec} is linear, so convergence to the floor is explicit:
\begin{equation}
v_L - v^\star
=
a^L\,(v_0 - v^\star).
\label{eq:tree_mix_identity}
\end{equation}
Thus, to achieve a relative tolerance \(v_L \le (1+\varepsilon)v^\star\) when \(v_0>v^\star\), it suffices to choose
\begin{equation}
L \ \ge\
\left\lceil
\frac{\log\big((v_0-v^\star)/(\varepsilon v^\star)\big)}{\log(1/a)}
\right\rceil.
\label{eq:cont_mix_depth}
\end{equation}
Equation~\eqref{eq:cont_mix_depth} is the continuous analogue of the ``mixing depth'' results we prove for binary trees later.
It is also a direct design rule: beyond a certain depth, additional leaves yield negligible improvement unless the designer reduces the floor by improving communication or reducing correlation.

\paragraph{A note on when deeper is worse.}
If leaves are already more accurate than the floor (i.e., \(v_0 < v^\star\)), then \eqref{eq:tree_closed} implies that increasing depth moves error \emph{upward} toward \(v^\star\).
In that regime, deeper hierarchies are counterproductive unless they come with reduced \(\sigma_c^2(m)\) or reduced \(\rho\).
This observation will reappear in the binary setting as a saturation effect: once the system has reached its fixed point, additional scale should be spent on changing the communication or correlation regime rather than adding depth.

\begin{figure}[t]
  \centering
  \includegraphics[width=0.7\linewidth]{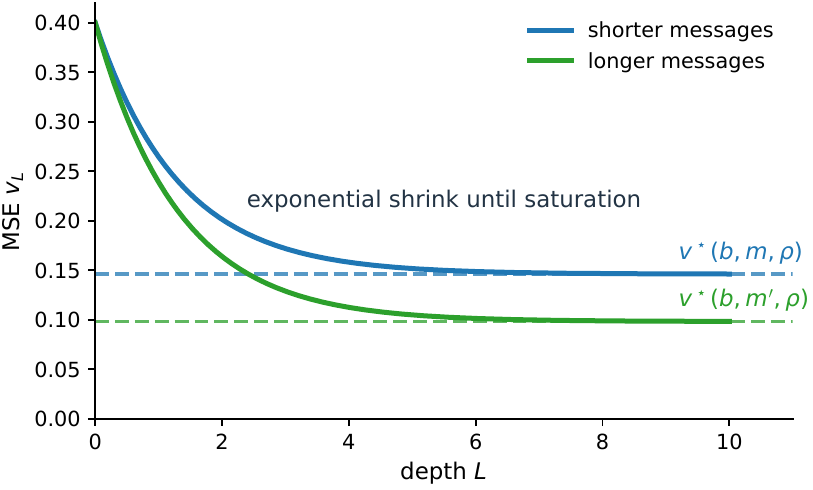}
  \caption{
  Continuous averaging recursion through a $b$-ary hierarchy with equal-correlation $\rho$.
  The MSE $v_L$ contracts approximately exponentially with depth $L$ until it reaches the
  communication-limited floor
  $v^\star(b,m,\rho)=\sigma_c^2(m)/\bigl((b-1)(1-\rho)\bigr)$.
  Longer messages reduce $\sigma_c^2(m)$ and therefore lower the attainable floor.
  }
  \label{fig:cont_saturation}
\end{figure}

\subsection{Scale-out versus scale-up under a fixed budget}
\label{sec:cont_budget_tradeoff}

The closed forms above become design tools once we connect them to budget.
We use the budget model from Section~\ref{sec:budget-context}, where each leaf receives compute \(x\) and each edge uses message length \(m\).
For a full \(b\)-ary tree of depth \(L\) with \(N=b^L\) leaves, the total cost is approximately
\[
B \;\approx\; N\big(x + c_0(b,m)\big),
\qquad
c_0(b,m)=2m\frac{b}{b-1},
\]
so the number of leaves is \(N\approx B/(x+c_0)\), and depth is \(L\approx \log_b N\).

\paragraph{A growth-regime approximation.}
When the tree has not yet hit the communication floor, the dominant term in \eqref{eq:tree_closed} is the shrinking term \(a^L v_0(x)\).
Using \(N=b^L\) and \(a^L = N^{-t}\) with
\begin{equation}
t(b,\rho) := -\frac{\log a}{\log b}
= -\frac{\log\big(\rho+(1-\rho)/b\big)}{\log b},
\label{eq:cont_org_exponent}
\end{equation}
we obtain the approximation
\begin{equation}
v_L \;\approx\; v_0(x)\,N^{-t(b,\rho)}.
\label{eq:cont_growth}
\end{equation}
The exponent \(t\in[0,1]\) quantifies how efficiently scaling out reduces error in the continuous setting.
If \(\rho=0\), then \(a=1/b\) and \(t=1\), recovering the familiar \(1/N\) averaging benefit.
As \(\rho\) increases, \(a\) approaches 1 and \(t\) approaches 0, reflecting that shared residuals diminish the value of adding more agents.

\paragraph{Compute allocation and a simple threshold.}
Substituting \(v_0(x)\approx c x^{-\beta}\) and \(N\approx B/(x+c_0)\) into \eqref{eq:cont_growth} yields
\begin{equation}
v_L \;\approx\; c\,x^{-\beta}\left(\frac{B}{x+c_0}\right)^{-t}
= c\,B^{-t}\,x^{-\beta}(x+c_0)^t.
\label{eq:vL-2}
\end{equation}
For a fixed budget \(B\), minimizing this expression over \(x\) reduces to minimizing \(x^{-\beta}(x+c_0)^t\).
A short calculus argument shows a qualitative threshold:
\begin{itemize}
  \item If \(t>\beta\), there is a unique interior optimum
  \begin{equation}
  x^\star \;=\; \frac{\beta}{t-\beta}\,c_0(b,m).
  \label{eq:cont_xstar}
  \end{equation}
  In this regime, scale-out and scale-up should be balanced: make agents strong enough that their improvement exponent \(\beta\) is not wasted, but keep them weak enough that adding more of them leverages the organization exponent \(t\).
  \item If \(t\le \beta\), the objective decreases as \(x\) increases, suggesting that under the growth approximation it is better to spend budget on fewer stronger agents rather than more weaker ones.
  Correlation and context constraints tend to push systems into this regime by reducing \(t\) (via higher \(\rho\)) or restricting \(b\), which increases communication overhead \(c_0(b,m)\) and raises the floor \(v^\star(b,m,\rho)\), shortening the growth regime.
\end{itemize}
This is the continuous analogue of our main binary condition \(s>\beta\) developed later: synergy from scaling out appears only when the organization exponent exceeds the single-agent scaling exponent.

\paragraph{How the floor changes the budget story.}
The growth approximation \eqref{eq:cont_growth} cannot hold indefinitely because of the communication floor \(v^\star\).
Once the predicted \(v_L\) approaches \(v^\star(b,m,\rho)\), additional budget spent on more leaves or more depth yields diminishing returns.
At that point, the effective levers are no longer \(N\) and \(L\), but \(m\) and diversity (which affect \(\sigma_c^2(m)\) and \(\rho\)).
This conclusion will reappear in the binary setting through fixed-point saturation and mixing depth.

\subsection{Design implications and a bridge to the binary analysis}
\label{sec:cont_implications}

The continuous warm-up already exposes the structural trade-offs we will reuse in the binary theory.
Absent a binding context constraint, star aggregation is information-dominant because it avoids multi-hop loss; trees are useful primarily as a way to bypass the fan-in limit \(Nm\le W\) by enforcing \(bm\le W\) locally, at the cost of repeated communication.
Both correlation and communication loss induce explicit performance floors, so scale-out can stall early when \(\rho\) is large or messages are too short.
In the growth regime under a fixed budget, the key comparison is exponent versus exponent: whether the organization exponent beats the single-agent scaling exponent.

The binary case is qualitatively different.
Because majority vote is nonlinear, it can fundamentally change how errors propagate under repeated aggregation.
Section~\ref{sec:binary_core} analyzes this effect and shows that hierarchical organization exhibits a sharp amplification--collapse transition on success/failure tasks.

\section{Binary Core: Majority Aggregation and Phase Transitions}
\label{sec:binary_core}
We begin by characterizing how one majority layer transforms weak bias, then incorporate correlated errors and lossy communication to obtain a phase transition and budget-relevant exponents for deep trees.

\subsection{Majority maps and correlated aggregation}
\label{sec:maj_maps}

The central object in the binary analysis is the one-step aggregation map that describes how an internal node transforms the biases of its child votes into an output bias.

\paragraph{Majority aggregation map.}
Let \(b\ge 3\) be an odd integer and let \(V_1,\dots,V_b\in\{-1,+1\}\) be i.i.d. child votes with
\[
\mathbb{E}[V_i Y] = u \in [-1,1].
\]
Equivalently, \(V_i\) is correct with probability \(p=(1+u)/2\).
Define the majority vote
\[
\mathrm{Maj}(V_1,\dots,V_b) := \mathrm{sign}\!\Big(\sum_{i=1}^b V_i\Big)\in\{-1,+1\},
\]
where ties are impossible because \(b\) is odd; even $b$ can be handled by specifying a tie-breaking rule (e.g., random) with only minor notational changes. For the same reason, we will assume an odd $b$ in our future sections.
The induced \emph{majority map} is the bias at the output:
\begin{equation}
f_b(u)
\;:=\;
\mathbb{E}\big[\mathrm{Maj}(V_1,\dots,V_b)\,Y\big]
=
2\,\Pr\!\Big(\mathrm{Bin}(b,p)\ge \tfrac{b+1}{2}\Big)-1,
\qquad p=\frac{1+u}{2}.
\label{eq:fb_def}
\end{equation}
The function \(f_b\) summarizes the effect of one noiseless majority aggregation step on the bias.

\begin{lemma}[Basic properties of the majority map]
\label{lem:fb_props}
For odd \(b\ge 3\), the map \(f_b:[-1,1]\to[-1,1]\) satisfies:
\begin{enumerate}
  \item \(f_b\) is odd and increasing, with \(f_b(0)=0\) and \(f_b(1)=1\).
  \item \(f_b\) is concave on \([0,1]\).
  \item The derivative at the origin is
  \begin{equation}
  f_b'(0)=\frac{b\binom{b-1}{(b-1)/2}}{2^{b-1}}.
  \label{eq:fb_prime0}
  \end{equation}
\end{enumerate}
\end{lemma}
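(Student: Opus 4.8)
The plan is to reduce all three claims to one analytic fact: the binomial tail has an explicit derivative in $p$. Write $f_b(u)=2F(p)-1$ with $p=(1+u)/2$ and $F(p):=\Pr(\mathrm{Bin}(b,p)\ge k)$, $k:=(b+1)/2$. Since $u\mapsto p$ is an increasing affine bijection sending $[0,1]$ to $[\tfrac12,1]$, everything is governed by $F$. The first (and essentially only substantive) step is to establish the telescoping identity
\[
F'(p)=k\binom{b}{k}p^{k-1}(1-p)^{b-k}=\frac{b!}{(k-1)!(b-k)!}\,\big(p(1-p)\big)^{(b-1)/2},
\]
where the second equality uses $k-1=b-k=(b-1)/2$. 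I would get this by differentiating $F(p)=\sum_{j=k}^{b}\binom{b}{j}p^j(1-p)^{b-j}$ termwise, rewriting $j\binom{b}{j}=b\binom{b-1}{j-1}$ and $(b-j)\binom{b}{j}=b\binom{b-1}{j}$, and reindexing so the two resulting sums cancel except for a single boundary term; equivalently one may just cite the standard relation between the binomial tail and the regularized incomplete Beta function. I expect the bookkeeping in this reindexing to be the only place where care is needed; the rest of the lemma is a direct consequence.

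Given the identity, part 1 is immediate. Since $F'(p)>0$ on $(0,1)$, $F$ and hence $f_b$ are strictly increasing. We have $f_b(0)=0$ because $\Pr(\mathrm{Bin}(b,\tfrac12)\ge\tfrac{b+1}{2})=\tfrac12$: for odd $b$ the symmetric binomial has no atom at $b/2$, so $\Pr(X\ge\tfrac{b+1}{2})=\Pr(X\le\tfrac{b-1}{2})$ and the two partition the sample space. And $f_b(1)=1$ because $F(1)=1$. Oddness, $f_b(-u)=-f_b(u)$, follows either from the coupling $\mathrm{Maj}(-V_1,\dots,-V_b)=-\mathrm{Maj}(V_1,\dots,V_b)$ (valid because $b$ is odd) together with the fact that negating the bias corresponds to $p\mapsto 1-p$, or directly from $\Pr(\mathrm{Bin}(b,1-p)\ge k)=\Pr(\mathrm{Bin}(b,p)\le b-k)=\Pr(\mathrm{Bin}(b,p)\le k-1)=1-\Pr(\mathrm{Bin}(b,p)\ge k)$, again using $b-k=k-1$ for odd $b$.

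For part 2, note $f_b''(u)=\tfrac12 F''(p)$, so concavity of $f_b$ on $[0,1]$ is equivalent to $F''\le 0$ on $[\tfrac12,1]$. From the identity, $F'(p)$ is a positive constant times $\big(p(1-p)\big)^{(b-1)/2}$, and $p(1-p)$ is decreasing on $[\tfrac12,1]$; hence $F'$ is decreasing there, i.e., $F''\le 0$. (Combined with oddness this also yields the familiar S-shape: convex on $[-1,0]$, concave on $[0,1]$.) For part 3, evaluate the identity at $p=\tfrac12$: $F'(\tfrac12)=\frac{b!}{((b-1)/2)!^2}\,2^{-(b-1)}=b\binom{b-1}{(b-1)/2}2^{-(b-1)}$, and the chain rule gives $f_b'(0)=2F'(\tfrac12)\cdot\frac{dp}{du}=2F'(\tfrac12)\cdot\tfrac12=F'(\tfrac12)$, which is exactly \eqref{eq:fb_prime0}. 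As sanity checks: $b=3$ gives $f_3'(0)=3\cdot 2/4=3/2$, matching direct differentiation of $f_3(u)=6p^2-4p^3-1$ at $p=\tfrac12$; and one can alternatively read $f_b'(0)$ as $b$ times the pivotality probability $\binom{b-1}{(b-1)/2}2^{-(b-1)}$ of a single vote at $p=\tfrac12$, the standard ``total influence of majority'' computation.

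The main obstacle is really just the clean derivation of the tail-derivative identity (the termwise differentiation plus cancelling reindex); once that is in hand, monotonicity, the boundary values, concavity, and the origin slope all drop out, and oddness is a one-line symmetry argument.
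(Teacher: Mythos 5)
Your proposal is correct and follows essentially the same route as the paper's Appendix~A: both reduce everything to the termwise-differentiated, telescoped binomial-tail identity $F'(p)\propto\bigl(p(1-p)\bigr)^{(b-1)/2}$, read off monotonicity from $F'>0$, concavity from the fact that $p(1-p)$ (equivalently $1-u^2$) decreases on the relevant half-interval, and the origin slope by evaluating at $p=\tfrac12$. The only differences are cosmetic (you index the tail threshold as $k=(b+1)/2$ while the paper writes $b=2k+1$, and you note the Beta-function and pivotality-probability readings as side remarks), so nothing to change.
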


Lemma~\ref{lem:fb_props} is standard for majority maps; we provide a self-contained proof in Appendix~A.
The quantity \(f_b'(0)\) is especially important.
It is the linear gain of the majority when the inputs are only slightly better than chance.
As \(b\) increases, \(f_b'(0)\) grows sublinearly (in fact on the order of \(\sqrt{b}\)), so increasing fan-in yields diminishing marginal amplification. Figure~\ref{fig:majority_map} visualizes $f_b(u)$ for several fan-ins, highlighting its concavity and the increasing (but sublinear) small-signal gain $f_b'(0)$.

\begin{figure}[!ht]
  \centering
  \includegraphics[width=0.70\linewidth]{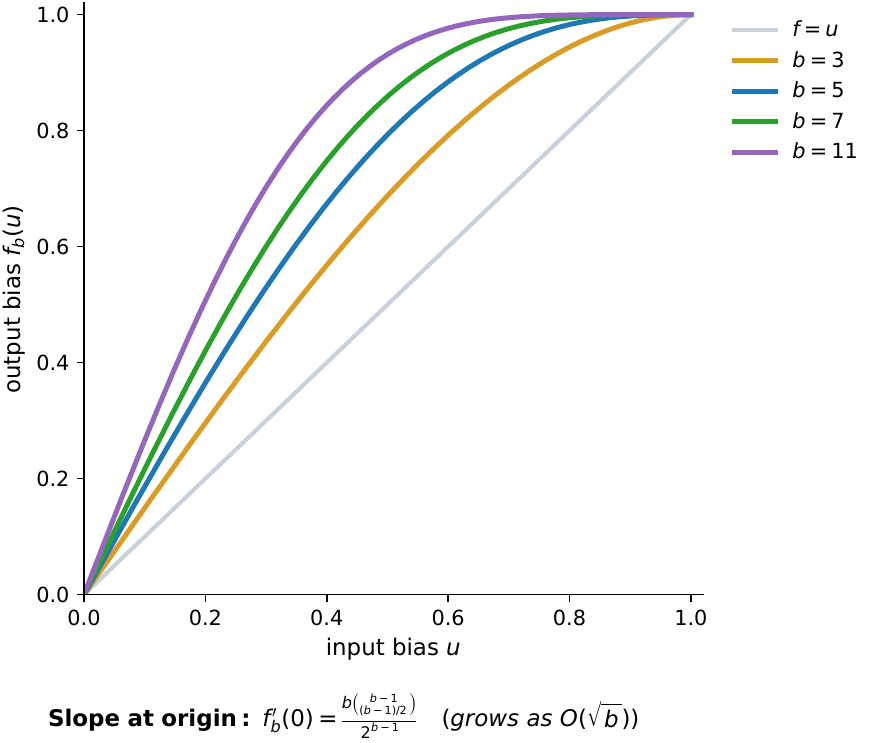}
  \caption{
  Majority amplification map $f_b(u)$ for odd fan-in $b$ (shown for $b\in\{3,5,7,11\}$),
  where $u$ is the input bias and $f_b(u)=2\Pr(\mathrm{Bin}(b,(1+u)/2)\ge (b+1)/2)-1$.
  The diagonal $f(u)=u$ is shown for reference. The slope at the origin,
  $f_b'(0)=\frac{b\binom{b-1}{(b-1)/2}}{2^{\,b-1}}$, grows on the order of $\sqrt{b}$,
  capturing how majority increasingly amplifies weak signals as fan-in grows.
  }
  \label{fig:majority_map}
\end{figure}

\paragraph{A tractable shared-failure model.}
To capture groupthink, we need a model in which child votes are positively correlated.
We use the following local generative model, which matches a measurable correlation coefficient and yields a closed-form aggregation rule.

\begin{definition}[\(\rho\)-shared correlation model]
\label{def:rho_shared}
Fix a target bias \(u\in[-1,1]\) and correlation strength \(\rho\in[0,1)\).
Conditional on \(Y\), the child votes \(V_1,\dots,V_b\in\{-1,+1\}\) are generated as follows:
with probability \(\rho\), all children share a common vote \(V_1=\cdots=V_b=:V\) where \(\mathbb{E}[VY]=u\);
with probability \(1-\rho\), the votes are i.i.d. with \(\mathbb{E}[V_iY]=u\).
\end{definition}

This model has two practical advantages.
First, \(\rho\) equals the pairwise correlation of signed correctness.
Let \(S_i:=V_iY\in\{-1,+1\}\).
Then \(\mathrm{Corr}(S_i,S_j)=\rho\) for any \(i\neq j\), so \(\rho\) can be estimated from logs as described in Section~\ref{sec:correlation}.
Second, the induced aggregation map is explicit.
Figure~\ref{fig:rho_model} provides an intuition for the mixture: with probability $\rho$ the children act as a single shared voter (groupthink), and with probability $1-\rho$ they behave as independent voters.

\begin{lemma}[Correlated majority map under the \(\rho\)-shared model]
\label{lem:correlated_map}
Under Definition~\ref{def:rho_shared}, the output bias of majority aggregation is
\begin{equation}
f_{b,\rho}(u)
\;:=\;
\mathbb{E}\big[\mathrm{Maj}(V_1,\dots,V_b)\,Y\big]
=
\rho\,u + (1-\rho)\,f_b(u).
\label{eq:fbrho_def}
\end{equation}
In particular, \(f_{b,\rho}\) is increasing and concave on \([0,1]\), and
\begin{equation}
f_{b,\rho}'(0)=\rho+(1-\rho)f_b'(0).
\label{eq:fbrho_prime0}
\end{equation}
\end{lemma}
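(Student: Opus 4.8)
The plan is to prove the identity \eqref{eq:fbrho_def} by conditioning on the latent ``mode'' of the $\rho$-shared model, and then to read off the structural properties (monotonicity, concavity, slope at the origin) from Lemma~\ref{lem:fb_props} together with the observation that $f_{b,\rho}$ is a convex combination of $f_b$ and the identity map.

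First I would introduce a mode indicator $Z\sim\mathrm{Bernoulli}(\rho)$, drawn independently of $Y$, so that conditional on $\{Z=1\}$ all children equal a single shared vote $V$ with $\mathbb{E}[VY]=u$, while conditional on $\{Z=0\}$ the votes are i.i.d. with the same bias $u$. On $\{Z=1\}$ we have $\mathrm{Maj}(V_1,\dots,V_b)=V$ identically, since for odd $b$ the majority of $b$ copies of $V\in\{-1,+1\}$ is $V$; hence $\mathbb{E}[\mathrm{Maj}(V_1,\dots,V_b)\,Y\mid Z=1]=\mathbb{E}[VY]=u$. On $\{Z=0\}$ the votes are i.i.d. with bias $u$, so by the defining formula \eqref{eq:fb_def} for $f_b$ we get $\mathbb{E}[\mathrm{Maj}(V_1,\dots,V_b)\,Y\mid Z=0]=f_b(u)$. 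Since $Z$ is independent of $Y$, the law of total expectation yields $f_{b,\rho}(u)=\rho\,u+(1-\rho)\,f_b(u)$, which is exactly \eqref{eq:fbrho_def}.

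For the remaining claims I would use that $u\mapsto\rho u$ is linear, hence both increasing and concave on $[0,1]$, and that $f_b$ is increasing and concave on $[0,1]$ by parts (1)--(2) of Lemma~\ref{lem:fb_props}; since $\rho\ge 0$, $1-\rho\ge 0$, and they sum to $1$, the convex combination $f_{b,\rho}$ is again increasing and concave on $[0,1]$. Differentiating \eqref{eq:fbrho_def}, using differentiability of $f_b$ at the origin from part (3) of Lemma~\ref{lem:fb_props}, gives $f_{b,\rho}'(u)=\rho+(1-\rho)f_b'(u)$ and in particular \eqref{eq:fbrho_prime0}. As byproducts one records termwise that $f_{b,\rho}$ is odd, $f_{b,\rho}(0)=0$, and $f_{b,\rho}(1)=\rho+(1-\rho)=1$, inherited from $f_b$ and the identity map.

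The computation is routine; the only point that needs care is the probabilistic bookkeeping in the conditioning step. Specifically, one must be explicit that the mode indicator is drawn independently of $Y$, so that the mixture of conditional laws given $Y$ genuinely decomposes the unconditional expectation $\mathbb{E}[\mathrm{Maj}(V_1,\dots,V_b)\,Y]$, and that in the shared mode the common vote $V$ is taken to carry the same marginal bias $u$, so its contribution is exactly $u$ rather than some rescaled quantity. Once this is pinned down there is no analytic obstacle: the nonlinearity of majority enters only through the already-established properties of $f_b$, and everything else is linearity of expectation and of differentiation.
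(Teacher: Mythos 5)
Your proof is correct and follows the same route as the paper: condition on the mixture event (shared versus independent mode), note that majority returns the common vote in the shared mode and $f_b(u)$ in the independent mode, apply total expectation, and then inherit monotonicity, concavity, and the slope at the origin from Lemma~\ref{lem:fb_props} via the convex-combination structure. The extra care you take in making the mode indicator explicit and independent of $Y$ is a sound clarification but does not change the argument.
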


Lemma~\ref{lem:correlated_map} is immediate from the mixture construction: in the shared mode, majority returns the same vote (bias \(u\)); in the independent mode, it returns \(f_b(u)\).
The key design message is already visible in \eqref{eq:fbrho_prime0}:
increasing \(\rho\) moves \(f_{b,\rho}\) closer to the identity map, weakening amplification and making deep organization harder.

\begin{figure}[t]
\centering
  \includegraphics[width=0.75\linewidth]{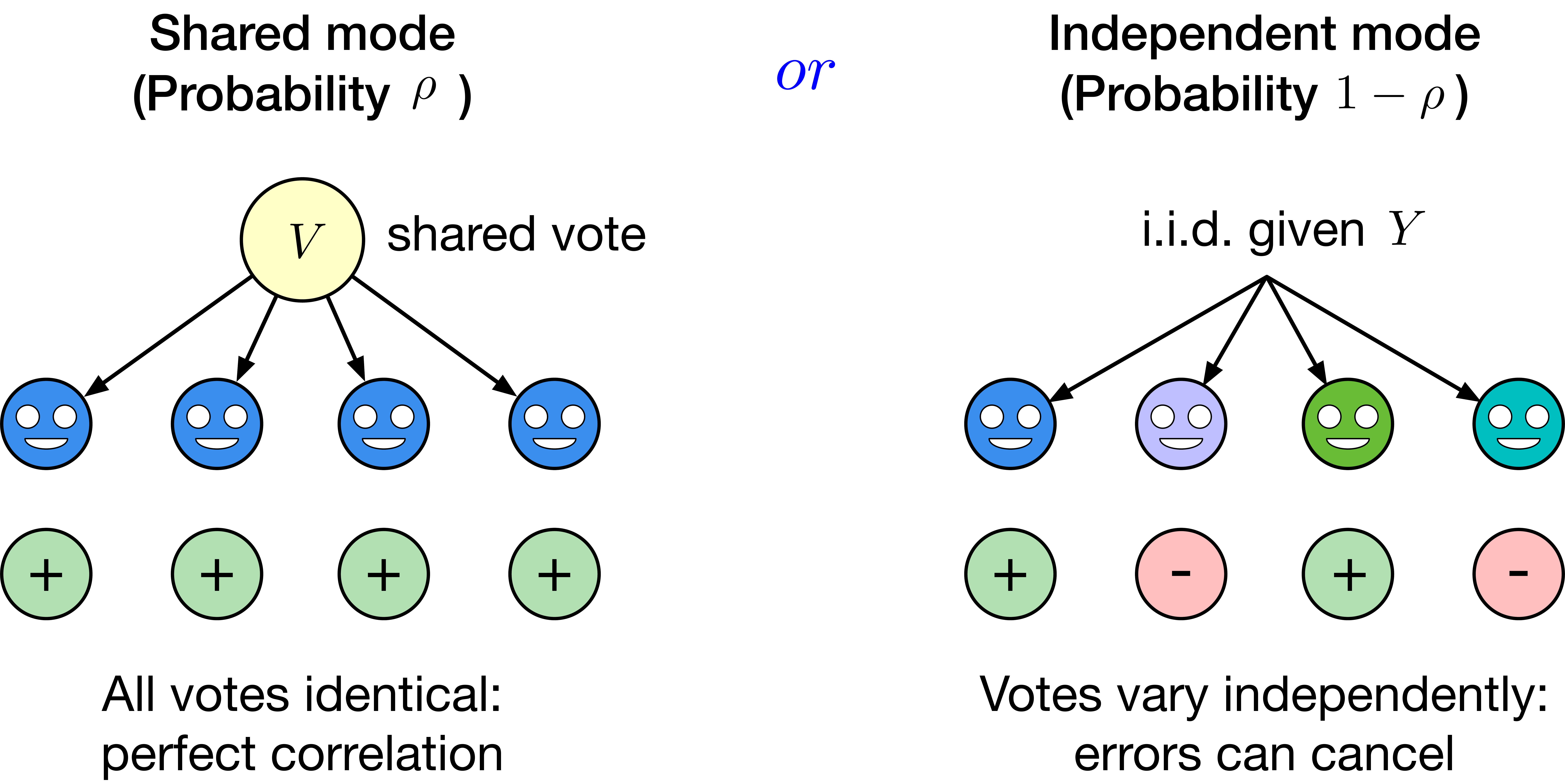}

\caption{The $\rho$-shared correlation model (Definition~\ref{def:rho_shared}).
With probability $\rho$, all $b$ child agents share a common vote $V$ (left), producing perfect within-group correlation. This captures ``groupthink'' where agents fail together.
With probability $1-\rho$, agents vote independently given the truth $Y$ (right), allowing errors to cancel through aggregation.
The parameter $\rho$ equals the pairwise correlation of signed correctness and can be estimated from logs.}
\label{fig:rho_model}
\end{figure}

\subsection{Phase transition in deep trees: \texorpdfstring{$\alpha_\rho \gtrless 1$}{alpha rho > 1}}
\label{sec:phase_transition}

We now incorporate communication and study deep trees.
Consider a homogeneous \(b\)-ary tree in which every internal node aggregates \(b\) child votes by majority, and every edge transmits a message of length \(m\) tokens.
Communication is modeled as an effective bit channel with reliability \(\gamma(m)\in(0,1]\) (Section~\ref{sec:communication}):
if a child vote has bias \(\mu\), then the parent receives a vote with bias \(\gamma(m)\mu\).

\paragraph{One-layer bias recursion.}
Let \(\mu_t\) denote the bias of a node at depth \(t\) from the leaves (so leaves are \(t=0\)).
After one hop, each child vote's marginal bias is attenuated to \(\gamma(m)\mu_t\).
We model the \(b\) received votes entering each majority gate by the \(\rho\)-shared model (Definition~\ref{def:rho_shared}) with this marginal bias and within-group correlation \(\rho\). (Any additional dependence induced by shared message templates or decoding is absorbed into the effective \(\rho\).)
Under this abstraction, the bias evolves as
\begin{equation}
\mu_{t+1} = T(\mu_t),
\qquad
T(u) := f_{b,\rho}\!\big(\gamma(m)\,u\big).
\label{eq:tree_recursion_binary}
\end{equation}
The map \(T\) is increasing, concave on \([0,1]\), and satisfies \(T(0)=0\).

\paragraph{The effective gain \(\alpha_\rho\).}
The local behavior of \(T\) near the origin is governed by the derivative
\begin{equation}
\alpha_\rho
:= T'(0)
=
\gamma(m)\,f_{b,\rho}'(0)
=
\gamma(m)\big(\rho+(1-\rho)f_b'(0)\big).
\label{eq:alpha_rho_def}
\end{equation}
We interpret \(\alpha_\rho\) as the \emph{effective per-layer gain} on weak signal.
If leaf votes are only slightly better than random, then one layer of aggregation and communication multiplies their bias by approximately \(\alpha_\rho\).
This single scalar already suggests a threshold: if \(\alpha_\rho<1\), weak signal shrinks from layer to layer; if \(\alpha_\rho>1\), it grows.

The next theorem shows that this intuition is exact at the level of global dynamics: \(\alpha_\rho\) determines whether deep trees amplify accuracy or collapse to chance.

\begin{theorem}[Phase transition for deep majority trees]
\label{thm:phase_transition}
Fix odd \(b\ge 3\), correlation \(\rho\in[0,1)\), and channel reliability \(\gamma=\gamma(m)\in(0,1]\).
Let \(T(u)=f_{b,\rho}(\gamma u)\) and \(\alpha_\rho=T'(0)\) as in \eqref{eq:tree_recursion_binary}--\eqref{eq:alpha_rho_def}.
Consider the recursion \(\mu_{t+1}=T(\mu_t)\) with \(\mu_0\in[0,1]\).
\begin{enumerate}
  \item (\emph{Subcritical collapse.}) If \(\alpha_\rho \le 1\), then \(\mu_t \to 0\) as \(t\to\infty\).
  \item (\emph{Supercritical amplification and saturation.}) If \(\alpha_\rho > 1\), then there exists a unique fixed point \(\mu^\star\in(0,1]\) such that \(T(\mu^\star)=\mu^\star\), and \(\mu_t\to \mu^\star\) for every \(\mu_0\in(0,1]\).
  Moreover, \(\mu^\star=1\) if \(\gamma=1\), and \(\mu^\star<1\) if \(\gamma<1\).
\end{enumerate}
\end{theorem}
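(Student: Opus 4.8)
The plan is to treat the update $\mu_{t+1}=T(\mu_t)$ as a one-dimensional dynamical system on $[0,1]$ and lean on two structural facts inherited from Lemmas~\ref{lem:fb_props}--\ref{lem:correlated_map}: (i) $T$ is increasing with $T(0)=0$ and maps $[0,1]$ into $[0,1]$ (since $\gamma u\in[0,1]$ and $f_{b,\rho}:[0,1]\to[0,1]$), so the orbit never leaves $[0,1]$; and (ii) $T$ is concave on $[0,1]$ and not affine on any subinterval — indeed $f_b$ is a polynomial of degree $b\ge 3$, so $f_b''$ is a nonzero polynomial that is $\le 0$ on $[0,1]$, hence vanishes only at finitely many points, making $T$ strictly concave on $[0,1]$. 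The workhorse is the elementary fact that for concave $T$ with $T(0)=0$ the secant slope $g(u):=T(u)/u$ is non-increasing on $(0,1]$ — strictly decreasing under strict concavity — with $g(0^+)=T'(0)=\alpha_\rho$; fixed points of $T$ in $(0,1]$ are exactly the solutions of $g(u)=1$, and the tangent bound $T(u)\le T'(0)\,u=\alpha_\rho u$ holds throughout $[0,1]$.

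For the subcritical case $\alpha_\rho\le 1$, the tangent bound gives $0\le T(u)\le\alpha_\rho u\le u$ on $[0,1]$, so $\{\mu_t\}$ is non-increasing and bounded below by $0$; it therefore converges to some $\ell\in[0,1]$ with $T(\ell)=\ell$ by continuity. To force $\ell=0$ I would rule out a positive fixed point: if $T(\mu^\star)=\mu^\star$ with $\mu^\star\in(0,1]$, concavity puts $T$ above the chord through $(0,0)$ and $(\mu^\star,\mu^\star)$, i.e. $T(u)\ge u$ on $[0,\mu^\star]$, while the tangent bound gives $T(u)\le\alpha_\rho u\le u$ there; hence $T(u)\equiv u$ on $[0,\mu^\star]$, contradicting non-affineness. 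So $\ell=0$.

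For the supercritical case $\alpha_\rho>1$, set $h(u):=T(u)-u$. Then $h(0)=0$, $h'(0)=\alpha_\rho-1>0$ so $h>0$ on a right-neighborhood of $0$, and $h(1)=f_{b,\rho}(\gamma)-1\le 0$ because $f_{b,\rho}$ is increasing with $f_{b,\rho}(1)=1$; the intermediate value theorem yields a fixed point $\mu^\star\in(0,1]$, which is unique because $g$ is strictly decreasing. The ``moreover'' claim is read off $h(1)=f_{b,\rho}(\gamma)-1$: strict monotonicity and $f_{b,\rho}(1)=1$ give $h(1)=0\iff\gamma=1$, so $\mu^\star=1$ when $\gamma=1$ and $\mu^\star<1$ when $\gamma<1$. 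For global convergence I split $(0,1]$ at $\mu^\star$: on $(0,\mu^\star)$ strict monotonicity of $g$ gives $T(u)>u$, while monotonicity of $T$ gives $T(u)<T(\mu^\star)=\mu^\star$, so any orbit started there is increasing and bounded above by $\mu^\star$, hence converges to a fixed point in $(0,\mu^\star]$, necessarily $\mu^\star$; symmetrically on $(\mu^\star,1]$ we get $\mu^\star<T(u)<u$, so the orbit decreases to $\mu^\star$; and $\mu_0=\mu^\star$ is stationary.

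The step I expect to need the most care is the boundary case $\alpha_\rho=1$: here the ``contraction'' is only first-order and there is no uniform Lipschitz constant below $1$, so the no-positive-fixed-point argument must genuinely invoke strict concavity / non-affineness of $f_b$ — which is exactly where the polynomial-degree observation $b\ge 3$ is used, and worth stating as a small lemma or folding into the proof of Lemma~\ref{lem:fb_props}. A secondary bookkeeping hazard is keeping the two monotone-sequence arguments in the supercritical case on the correct side of $\mu^\star$ (the sandwich is $u<T(u)<\mu^\star$ below and $\mu^\star<T(u)<u$ above), together with the trivial but necessary check that the orbit stays in $[0,1]$ so that all of Lemma~\ref{lem:correlated_map}'s monotonicity/concavity properties apply at every step.
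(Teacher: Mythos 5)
Your proposal is correct and takes essentially the same approach as the paper's Appendix~C proof: both rely on the monotone secant ratio $T(u)/u$ (the paper's Lemma~\ref{lem:ratio_monotone}), the global tangent bound, strict concavity to rule out a positive fixed point in the boundary case $\alpha_\rho=1$, and two monotone orbits meeting at $\mu^\star$ for global convergence in the supercritical case. The one cosmetic difference is that you justify strict concavity via the polynomial degree of $f_b$ rather than reading it off the explicit derivative formula $f_b'(u)=\frac{(2k+1)\binom{2k}{k}}{2^{2k}}(1-u^2)^k$ established in the proof of Lemma~\ref{lem:fb_props}, which gives strict monotonicity of $f_b'$ on $(0,1)$ more directly.
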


Figure~\ref{fig:binary_phase_transition} illustrates the two regimes: when $\alpha_\rho<1$ the recursion map lies below the diagonal and $\mu_t$ collapses to $0$, while when $\alpha_\rho>1$ the map crosses the diagonal at a stable fixed point $\mu^\star$, yielding amplification followed by saturation.

\paragraph{Proof sketch.}
Because \(T\) is concave with \(T(0)=0\), the ratio \(r(u):=T(u)/u\) is nonincreasing on \(u\in(0,1]\).
Its limit at the origin is \(r(0^+)=T'(0)=\alpha_\rho\).
If \(\alpha_\rho\le 1\), then \(T(u)\le u\) for all \(u\in(0,1]\), so the recursion is monotone decreasing and must converge to a fixed point; the only possibility is 0.
If \(\alpha_\rho>1\), then \(r(0^+)>1\), while \(r(1)=T(1)\le 1\) with strict inequality when \(\gamma<1\).
By monotonicity of \(r\), there is a unique \(u\) where \(r(u)=1\), yielding a unique \(\mu^\star\).
The sign of \(T(u)-u\) implies the recursion moves toward \(\mu^\star\) from any starting point.
Full details are given in Appendix~C. \hfill\(\square\)

\paragraph{Connection to reconstruction thresholds.}
In the classical broadcasting problem on trees with a binary symmetric channel (BSC) of second eigenvalue \(\theta\), the Kesten--Stigum condition \(b\theta^2>1\) characterizes when the root remains reconstructable from the leaves (it is tight for the two-state symmetric model and, more generally, gives the Kesten--Stigum bound for Bayes-optimal reconstruction) \citep{kesten1966additional,evans2000broadcasting,mossel2001reconstruction}.
Our condition \(\alpha_\rho>1\) plays an analogous role for the specific \emph{majority-based} recursion under our correlated-input model and effective reliability \(\gamma(m)\): it is exactly the linear gain \(T'(0)\) of the update map at the origin.
We emphasize that \(\alpha_\rho\) is estimator-dependent (majority rather than Bayes-optimal) and explicitly incorporates shared failures through \(\rho\).

\paragraph{What the threshold means in practice.}
Theorem~\ref{thm:phase_transition} clarifies why adding hierarchical structure is risky.
In a supercritical regime, a tree can turn many weak votes into a strong decision and is robust to depth until it saturates.
In a subcritical regime, depth destroys information: the system becomes \emph{less} reliable as it grows.
Both correlation (\(\rho\)) and short messages (small \(\gamma(m)\)) push \(\alpha_\rho\) downward, making collapse more likely.
This matches the practitioner experience that groupthink and aggressive compression are the two most common reasons multi-agent scaling fails.

\begin{figure}[t]
  \centering
  \begin{subfigure}[t]{0.485\linewidth}
    \centering
    \includegraphics[width=\linewidth]{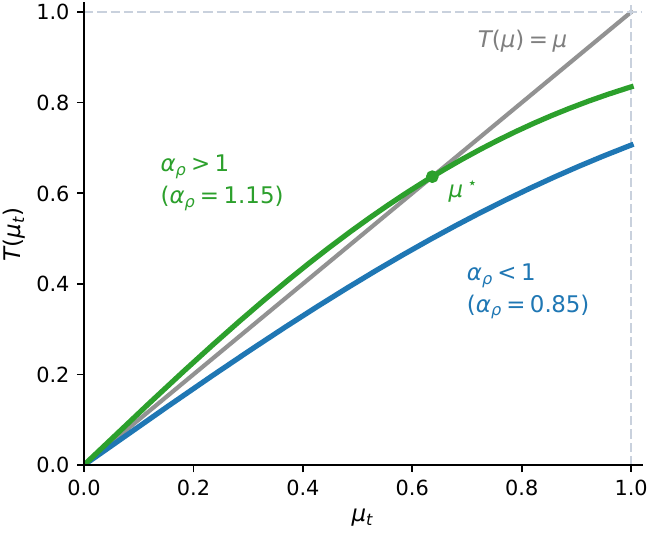}
    \caption{Recursion map $T(\mu)$ vs.\ the diagonal.}
  \end{subfigure}
  \hfill
  \begin{subfigure}[t]{0.485\linewidth}
    \centering
    \includegraphics[width=\linewidth]{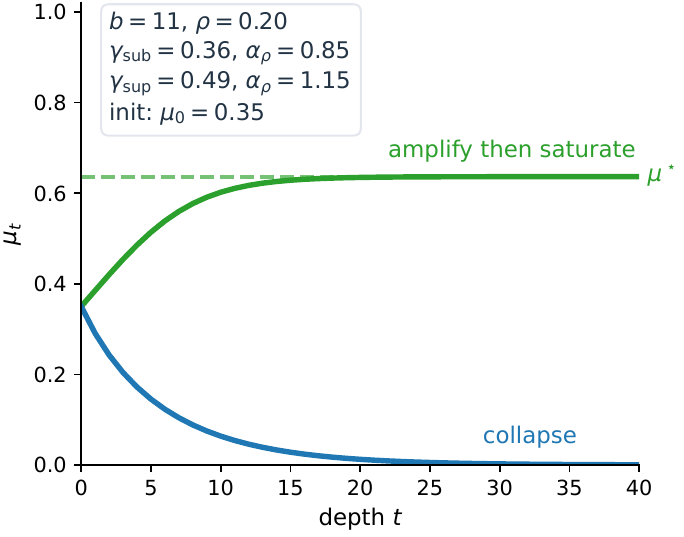}
    \caption{Iterates $\mu_{t+1}=T(\mu_t)$ from the same initialization.}
  \end{subfigure}
  \caption{
  Binary phase transition under the recursion map $T(\mu)=f_{b,\rho}(\gamma\mu)$ where
  $f_{b,\rho}(u)=\rho u+(1-\rho)f_b(u)$.
  In the subcritical regime ($\alpha_\rho<1$) trajectories collapse to $\mu=0$.
  In the supercritical regime ($\alpha_\rho>1$) a stable fixed point $\mu^\star>0$ emerges
  and trajectories amplify then saturate at $\mu^\star$.
  }
  \label{fig:binary_phase_transition}
\end{figure}

\subsection{Small-signal amplification and the organization exponent \texorpdfstring{$s$}{s}}
\label{sec:small_signal}

The phase transition determines whether deep organization is viable at all.
When it is viable (\(\alpha_\rho>1\)), the next question is \emph{how fast} accuracy can grow with the number of leaves.
This is where a simple exponent emerges.

\paragraph{Linear regime and a global upper bound.}
Concavity of \(T\) implies a strong inequality: the tangent line at the origin upper-bounds the entire map on \([0,1]\),
\begin{equation}
T(u) \le T'(0)\,u = \alpha_\rho\,u,\qquad u\in[0,1].
\label{eq:global_tangent_bound}
\end{equation}
Thus, for a depth-\(L\) tree,
\begin{equation}
\mu_L \le \mu_0\,\alpha_\rho^{L}.
\label{eq:mu_upper_alphaL}
\end{equation}
This bound is informative even when \(\alpha_\rho>1\): it tells us that exponential growth in depth is the fastest possible behavior in the small-signal regime.

To complement \eqref{eq:mu_upper_alphaL} we also need a lower bound showing that the upper bound is tight when signals remain small.
Because \(T\) is differentiable at 0, its local behavior is well-approximated by its derivative.

\begin{theorem}[Small-signal amplification band]
\label{thm:small_signal_band}
Fix odd \(b\ge 3\), \(\rho\in[0,1)\), and channel reliability \(\gamma=\gamma(m)\in(0,1]\). Let \(T(u)=f_{b,\rho}(\gamma u)\) and \(\alpha_\rho:=T'(0)\).
For any \(\eta\in(0,1)\), there exists \(\delta=\delta(\eta;b,\rho,\gamma)\in(0,1]\) such that for all \(u\in[0,\delta]\),
\begin{equation}
(1-\eta)\,\alpha_\rho\,u \;\le\; T(u) \;\le\; \alpha_\rho\,u.
\label{eq:linear_band}
\end{equation}
Consequently, if \(\mu_0\,\alpha_\rho^{L}\le \delta\), then the depth-\(L\) recursion satisfies
\begin{equation}
(1-\eta)^{L}\,\mu_0\,\alpha_\rho^{L}
\;\le\;
\mu_L
\;\le\;
\mu_0\,\alpha_\rho^{L}.
\label{eq:mu_sandwich}
\end{equation}
\end{theorem}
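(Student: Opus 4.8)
The plan is to first establish the one-step band \eqref{eq:linear_band} and then chain it by induction to obtain the iterated sandwich \eqref{eq:mu_sandwich}. The upper inequality in \eqref{eq:linear_band} is already available for free: $T(u)=f_{b,\rho}(\gamma u)$ is concave on $[0,1]$ with $T(0)=0$, so it lies below its tangent at the origin, which is exactly \eqref{eq:global_tangent_bound}. Hence the only real content is the matching lower bound $T(u)\ge(1-\eta)\alpha_\rho u$ for small $u$, and this is nothing more than the statement that $T$ is differentiable at $0$ with $T'(0)=\alpha_\rho>0$; positivity holds since $\gamma>0$ and $f_{b,\rho}'(0)=\rho+(1-\rho)f_b'(0)>0$ by Lemma~\ref{lem:correlated_map} and Lemma~\ref{lem:fb_props}.

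To pin down a concrete $\delta$, I would exploit that, by \eqref{eq:fb_def}, $f_b$ is a polynomial (of degree $b$) in $p=(1+u)/2$, hence $T$ is a polynomial in $u$; in particular $T\in C^2([0,1])$ and $M:=\sup_{u\in[0,1]}|T''(u)|<\infty$. A second-order Taylor expansion at the origin with Lagrange remainder gives $T(u)=\alpha_\rho u+\tfrac12 T''(\xi)u^2$ for some $\xi\in(0,u)$: concavity ($T''\le 0$) re-derives the upper bound, while $|T''(\xi)|\le M$ yields $T(u)\ge\alpha_\rho u-\tfrac{M}{2}u^2\ge(1-\eta)\alpha_\rho u$ as soon as $u\le 2\eta\alpha_\rho/M$. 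So $\delta:=\min\{1,\,2\eta\alpha_\rho/M\}$ works. (If one prefers not to estimate $T''$, mere continuity of $T'$ at $0$ supplies a $\delta$ with $T'(s)\ge(1-\eta)\alpha_\rho$ on $[0,\delta]$, and integrating gives the same conclusion with a non-explicit constant.)

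For \eqref{eq:mu_sandwich} I would induct on depth, the point being that the hypothesis $\mu_0\alpha_\rho^{L}\le\delta$ keeps the trajectory inside the band's domain. The right-hand inequality is unconditional: $T$ maps $[0,1]$ into $[0,1]$, so every iterate stays in $[0,1]$, and applying \eqref{eq:global_tangent_bound} at each step gives $\mu_{t+1}\le\alpha_\rho\mu_t$, hence $\mu_t\le\alpha_\rho^{t}\mu_0$ and in particular $\mu_L\le\alpha_\rho^{L}\mu_0$, recovering \eqref{eq:mu_upper_alphaL}. In the amplifying case $\alpha_\rho\ge 1$ this same estimate also gives $\mu_t\le\alpha_\rho^{t}\mu_0\le\alpha_\rho^{L}\mu_0\le\delta$ for every $t\le L$, so at each step $\mu_t\in[0,\delta]$ and the lower half of \eqref{eq:linear_band} applies, producing $\mu_{t+1}\ge(1-\eta)\alpha_\rho\mu_t$; multiplying the $L$ one-step bounds yields $\mu_L\ge(1-\eta)^{L}\mu_0\alpha_\rho^{L}$. (When $\alpha_\rho<1$ the recursion is monotone decreasing by Theorem~\ref{thm:phase_transition}, so it suffices that $\mu_0\le\delta$ to confine the trajectory, and the identical chaining argument goes through.)

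The only genuinely delicate step — and the one I would write out most carefully — is this confinement: because \eqref{eq:linear_band} is valid only on $[0,\delta]$, one must verify that none of $\mu_0,\dots,\mu_{L-1}$ escapes that window before the $L$ one-step estimates are chained; everything else (differentiability at $0$, concavity from Lemma~\ref{lem:correlated_map}, the polynomial bound on $T''$) is routine. I would also note that \eqref{eq:mu_sandwich} is precisely the quantitative companion of Theorem~\ref{thm:phase_transition}: while the bias stays below the threshold $\delta$ at which concavity begins to bend $T$ away from its linearization, growth is exponential in depth at base $\alpha_\rho$ up to a controllable $(1-\eta)$-per-layer slack, so in the supercritical regime $\alpha_\rho>1$ the global upper bound \eqref{eq:mu_upper_alphaL} is essentially tight until saturation sets in.
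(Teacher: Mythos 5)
Your argument follows the paper's proof (Appendix~D) essentially step for step: the tangent upper bound from concavity (Lemma~\ref{lem:tangent_upper}), the lower band from $T(u)/u\to\alpha_\rho$ at the origin, and the iterated sandwich obtained by chaining the one-step band under the confinement $\mu_t\le\mu_0\alpha_\rho^t\le\delta$. Your additions are genuine refinements rather than a different route: the explicit $\delta=\min\{1,\,2\eta\alpha_\rho/M\}$ via the Lagrange remainder (using that $T$ is a polynomial so $M:=\sup_{[0,1]}|T''|<\infty$) makes the band quantitative, and your remark that the confinement chain $\mu_0\alpha_\rho^t\le\mu_0\alpha_\rho^L$ silently presumes $\alpha_\rho\ge 1$ correctly flags an assumption the paper's appendix proof uses without comment (it is harmless, as the theorem is only invoked in the supercritical regime, but it is worth stating).
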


\paragraph{Proof sketch.}
The upper bound is \eqref{eq:global_tangent_bound}.
For the lower bound, differentiability of \(T\) at 0 implies \(T(u)/u \to \alpha_\rho\) as \(u\downarrow 0\).
Thus there exists \(\delta\) such that \(T(u)\ge (1-\eta)\alpha_\rho u\) on \([0,\delta]\).
If \(\mu_0\alpha_\rho^L\le\delta\), then the upper bound ensures \(\mu_t\le\delta\) for all \(t\le L\), so the lower inequality applies at every step and yields \eqref{eq:mu_sandwich}.
Full details appear in Appendix~D. \hfill\(\square\)

\paragraph{The organization exponent.}
For a full \(b\)-ary tree of depth \(L\), the number of leaves is \(N=b^L\).
Theorem~\ref{thm:small_signal_band} implies that in the growth regime (before saturation),
\begin{equation}
\mu_L \approx \mu_0\,\alpha_\rho^{L}
= \mu_0\,N^{s},
\qquad
s := \frac{\log \alpha_\rho}{\log b}.
\label{eq:organization_exponent}
\end{equation}
We call \(s\) the \emph{organization exponent}.
It converts the per-layer gain \(\alpha_\rho\) into a statement about how performance scales with the number of leaves under hierarchical aggregation.

This exponent is the main interface between organization and budget.
Later, in Section~\ref{sec:phase_diagrams}, we compare \(s\) to the single-agent scaling exponent \(\beta\) to decide whether scale-out can beat scale-up under a fixed budget.
At this stage, the key point is conceptual: \(s\) increases with better communication (\(\gamma(m)\)), decreases with shared failures (\(\rho\)), and is limited by fan-in through \(f_b'(0)\).

\subsection{Saturation and finite-depth guarantees: \texorpdfstring{$L_{\mathrm{mix}}$}{Lmix} and clipped objectives}
\label{sec:saturation_binary}

Small-signal amplification does not continue forever.
Bias is bounded by 1, and communication loss and correlation prevent arbitrarily deep trees from achieving perfect accuracy unless the channel is lossless.
Theorem~\ref{thm:phase_transition} already tells us that in the supercritical regime the recursion converges to a fixed point \(\mu^\star\).
In practice, we need two additional pieces of information:
how quickly \(\mu_t\) approaches \(\mu^\star\), and how to build a design objective that remains accurate across both growth and saturation regimes.

\paragraph{Mixing depth.}
Because \(T\) is concave and \(T(\mu^\star)=\mu^\star\), the slope at the fixed point satisfies \(T'(\mu^\star)\le 1\), with strict inequality whenever \(\gamma<1\) or \(\rho<1\).
This creates a locally contracting region around \(\mu^\star\), which implies geometric convergence once the recursion enters that region.

\begin{theorem}[Finite-depth convergence to the fixed point]
\label{thm:mixing_depth}
Assume \(\alpha_\rho>1\) and \(\rho\in[0,1)\).
Let \(\mu^\star\in(0,1]\) be the unique fixed point of \(T\) from Theorem~\ref{thm:phase_transition}.
For any \(\varepsilon\in(0,1)\), there exists a finite depth \(L_{\mathrm{mix}}(\varepsilon)\) such that
\[
\mu_L \ge (1-\varepsilon)\mu^\star
\qquad \text{for all } L \ge L_{\mathrm{mix}}(\varepsilon)
\]
whenever \(\mu_0>0\).
Moreover, \(L_{\mathrm{mix}}(\varepsilon)\) can be chosen to scale as
\[
L_{\mathrm{mix}}(\varepsilon)
=
O\!\left(\log\frac{1}{\mu_0} + \log\frac{1}{\varepsilon}\right),
\]
with constants depending only on \((b,\rho,\gamma)\).
\end{theorem}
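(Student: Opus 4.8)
The plan is to analyze the deterministic scalar recursion $\mu_{t+1}=T(\mu_t)$ by splitting the trajectory into three consecutive phases, each contributing one additive term to $L_{\mathrm{mix}}(\varepsilon)$: a near-origin \emph{escape} phase of length $O(\log(1/\mu_0))$, a \emph{bulk-crossing} phase of length $O(1)$, and a near-fixed-point \emph{contraction} phase of length $O(\log(1/\varepsilon))$. Throughout, $T(u)=f_{b,\rho}(\gamma u)$ is a degree-$b$ polynomial (hence smooth), with $T(0)=0$, $T'(0)=\alpha_\rho>1$ by \eqref{eq:alpha_rho_def}, and $T$ increasing and concave on $[0,1]$ by Lemmas~\ref{lem:fb_props}--\ref{lem:correlated_map}; the unique fixed point $\mu^\star\in(0,1]$ is given by Theorem~\ref{thm:phase_transition}. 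Two qualitative facts drive the argument. First, concavity together with $T'(0)=\alpha_\rho>1$ forces the graph of $T$ strictly above the diagonal on $(0,\mu^\star)$ and strictly below it on $(\mu^\star,1]$; hence for $\mu_0\in(0,\mu^\star)$ the iterates are strictly increasing, never exceed $\mu^\star$ (since $\mu_t<\mu^\star\Rightarrow T(\mu_t)\le T(\mu^\star)=\mu^\star$), and converge up to $\mu^\star$, while for $\mu_0\ge\mu^\star$ they are nonincreasing and stay $\ge\mu^\star$. So the cases $\mu_0\ge\mu^\star$ are immediate with $L_{\mathrm{mix}}=0$, and monotonicity also shows that once $\mu_L\ge(1-\varepsilon)\mu^\star$ holds it persists for all larger $L$; from now on assume $0<\mu_0<\mu^\star$. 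Second, the fixed-point slope satisfies $\lambda:=T'(\mu^\star)<1$, strictly, because $\rho<1$ (the remark preceding the theorem; alternatively, $\lambda=1$ plus strict concavity would give $T(u)<u$ on $(0,\mu^\star)$, contradicting the first fact).

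For the escape phase, apply Theorem~\ref{thm:small_signal_band} with $\eta:=(\alpha_\rho-1)/(2\alpha_\rho)\in(0,1)$, yielding $\alpha_1:=(1-\eta)\alpha_\rho=(\alpha_\rho+1)/2>1$ and a threshold $\delta_1=\delta(\eta;b,\rho,\gamma)>0$ with $T(u)\ge\alpha_1 u$ for all $u\in[0,\delta_1]$ (by \eqref{eq:linear_band}); shrink $\delta_1$ if needed so that $\delta_1<\mu^\star$. As long as $\mu_t\le\delta_1$ the recursion multiplies by at least $\alpha_1$, so $\mu_t\ge\alpha_1^t\mu_0$ for $t\le t_1$, where $t_1:=\min\{t:\mu_t>\delta_1\}$ is the first exit time. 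Then $\alpha_1^{t_1-1}\mu_0\le\mu_{t_1-1}\le\delta_1$ forces $t_1\le 1+\log(1/\mu_0)/\log\alpha_1=O(\log(1/\mu_0))$ (vacuously $t_1=0$ if $\mu_0>\delta_1$), with constant depending only on $(b,\rho,\gamma)$ through $\alpha_\rho$. After Phase 1 we have $\mu_{t_1}\in(\delta_1,\mu^\star)$.

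For the remaining two phases, fix a contraction radius $\delta_2\in(0,\mu^\star-\delta_1)$ small enough that $T'(u)\le\lambda':=(1+\lambda)/2<1$ on $[\mu^\star-\delta_2,\mu^\star]$, using continuity of $T'$ and $\lambda<1$. On the compact interval $[\delta_1,\mu^\star-\delta_2]\subset(0,\mu^\star)$ the continuous map $u\mapsto T(u)-u$ is strictly positive, hence bounded below by some $c=c(b,\rho,\gamma)>0$; so whenever $\mu_t$ lies there, $\mu_{t+1}\ge\mu_t+c$, and since the iterates never overshoot $\mu^\star$ this additive growth drives the trajectory into $[\mu^\star-\delta_2,\mu^\star)$ within $O(1)$ steps, at some time $t_2$ with $t_2-t_1\le\lceil(\mu^\star-\delta_2-\delta_1)/c\rceil=O(1)$. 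Finally, once $\mu_t\in[\mu^\star-\delta_2,\mu^\star)$, the mean value theorem gives $\mu^\star-\mu_{t+1}=T'(\xi)(\mu^\star-\mu_t)$ with $\xi\in(\mu_t,\mu^\star)\subseteq[\mu^\star-\delta_2,\mu^\star]$, so $\mu^\star-\mu_{t+1}\le\lambda'(\mu^\star-\mu_t)$; since $\mu_{t+1}\ge\mu_t$ the neighborhood is never left and the estimate iterates: $\mu^\star-\mu_{t_2+k}\le(\lambda')^k(\mu^\star-\mu_{t_2})\le(\lambda')^k\mu^\star\le\varepsilon\mu^\star$ once $k\ge\log(1/\varepsilon)/\log(1/\lambda')=O(\log(1/\varepsilon))$. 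Taking $L_{\mathrm{mix}}(\varepsilon):=t_2+\lceil\log(1/\varepsilon)/\log(1/\lambda')\rceil$ and summing the three phase lengths gives the claimed $O(\log(1/\mu_0)+\log(1/\varepsilon))$ with constants depending only on $(b,\rho,\gamma)$.

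I expect the main obstacle to be conceptual rather than computational: no single multiplicative rate governs the whole orbit, since near $0$ the map \emph{expands} at rate $\alpha_\rho>1$ (so a global contraction argument fails) and only near $\mu^\star$ does it contract. The proof therefore rests on two quantitative inputs — the two-sided small-signal estimate of Theorem~\ref{thm:small_signal_band} (not merely the tangent upper bound $T(u)\le\alpha_\rho u$ but the matching lower bound, which is what produces the $\log(1/\mu_0)$ term) and the strict inequality $T'(\mu^\star)<1$ (which produces the $\log(1/\varepsilon)$ term) — together with the bookkeeping check that all intermediate constants $\delta_1,\delta_2,c,\lambda'$ and the bulk length are functions of $(b,\rho,\gamma)$ alone. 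One should also verify the boundary case $\gamma=1$, where $\mu^\star=1$: there $T'(\mu^\star)=T'(1)=\rho<1$ still holds, so the contraction phase is unaffected.
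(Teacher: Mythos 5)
Your proof is correct and uses essentially the same two-regime architecture as the paper's Appendix~E argument: an exponential escape phase of length $O(\log(1/\mu_0))$, an $O(1)$ bulk-crossing, and a geometric contraction phase of length $O(\log(1/\varepsilon))$ driven by $T'(\mu^\star)<1$. The only minor differences are technical: the paper obtains the escape rate from the ratio-monotonicity lemma (nonincreasing $T(u)/u$, giving $T(u)\ge r(\mu^\star/2)\,u$ directly on $(0,\mu^\star/2]$) rather than by invoking Theorem~\ref{thm:small_signal_band}, and the paper disposes of the bulk phase by appeal to finiteness rather than via your explicit additive-progress bound $T(u)-u\ge c>0$ on a compact interval; your version of that step is arguably tighter bookkeeping.
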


\paragraph{Proof sketch.}
Write the recursion as \(\mu_{t+1}=T(\mu_t)\) with \(T(u)=f_{b,\rho}(\gamma u)\) and a stable fixed point \(\mu^\star\) when \(\alpha_\rho=T'(0)>1\).
There are two phases.
First, while \(\mu_t\) remains in the small-signal band, Theorem~\ref{thm:small_signal_band} gives \((1-\eta)\alpha_\rho^t\mu_0 \le \mu_t \le \alpha_\rho^t\mu_0\), so reaching a constant-sized neighborhood of \(\mu^\star\) takes \(O(\log(1/\mu_0))\) steps.
Second, stability implies \(T'(\mu^\star)<1\); by continuity there exists a neighborhood below \(\mu^\star\) on which \(T'(\mu)\le \kappa<1\), so the recursion is a contraction and the error shrinks geometrically, reaching relative error \(\varepsilon\) in \(O(\log(1/\varepsilon))\) additional steps.
Appendix~E provides an explicit construction and constants. \hfill\(\square\)

\paragraph{A clipped objective for design.}
The results above suggest a simple approximation that is both interpretable and safe:
use the small-signal prediction \(\mu_0\alpha_\rho^L\) when it is below the saturation point, and clip it at \(\mu^\star\) otherwise:
\begin{equation}
\widehat{\mu}_L
:=
\min\big\{\mu^\star,\ \mu_0\,\alpha_\rho^{L}\big\}.
\label{eq:clipped_mu}
\end{equation}
The clipped objective captures the two dominant regimes with a single expression.
In the common small-signal regime \(\mu_0\le\mu^\star\), concavity yields the global upper bound \(\mu_L\le \widehat{\mu}_L\), while Theorem~\ref{thm:small_signal_band} guarantees that \(\mu_L\) tracks the growth term when \(\widehat{\mu}_L=\mu_0\alpha_\rho^L\).
Theorem~\ref{thm:mixing_depth} guarantees that \(\mu_L\) tracks the saturation term when \(\widehat{\mu}_L=\mu^\star\) and depth exceeds \(L_{\mathrm{mix}}\).

\paragraph{Design takeaway.}
Equation~\eqref{eq:clipped_mu} makes a practical point precise.
In the supercritical regime, adding depth and leaves helps only until the system approaches \(\mu^\star\).
Beyond that point, additional budget should be spent on changing \(\mu^\star\) itself, which requires improving communication fidelity (increasing \(\gamma(m)\) by using longer messages) or reducing shared-error correlation \(\rho\) (increasing diversity or adding verification).
In the subcritical regime, adding depth is actively harmful; the only viable route to scale-out is to move the system across the threshold \(\alpha_\rho>1\) by improving \(m\), reducing \(\rho\), or increasing fan-in \(b\) within the context constraint.

Section~\ref{sec:phase_diagrams} uses the organization exponent \(s\), the threshold \(\alpha_\rho>1\), and the clipped objective \eqref{eq:clipped_mu} to derive topology and budget phase diagrams under context limits.
Section~\ref{sec:design} turns these results into a theory-guided design algorithm that outputs monotone communication design curves and compute allocations under a fixed budget.

\subsection{Topology and Budget Phase Diagrams}
\label{sec:phase_diagrams}

Building on the binary recursion and organization exponent from Section~\ref{sec:binary_core}, we translate the layer-wise results into system-level guidance under a fixed budget \(B\) and context window \(W\).
We ask when a context-limited star is feasible, when hierarchy is necessary, and when scaling out can beat scaling up before saturation.
The resulting regimes can be summarized as phase diagrams in the measurable environment parameters \(\beta\), \(\rho\), and \(\gamma(m)\) (Section~\ref{sec:framework}), together with the context constraint \(W\).

\subsubsection{Star as an upper bound without constraints}
\label{sec:star_upper_bound}

A recurring empirical pattern is that hierarchies sometimes outperform naive centralization.
It is tempting to interpret this as ``hierarchy is intrinsically better.''
Without constraints, that interpretation is false.
A star is not merely a particular topology; it is the topology that gives the decision rule access to \emph{all} leaf information in a single place.
Any hierarchy that compresses information on the way up can only discard information, not create it \citep{cover2006elements}.

We formalize this point in an intentionally general way, independent of the details of the aggregation rule.

\begin{proposition}[Centralization dominates under unlimited context]
\label{prop:centralization}
\leavevmode\\
Let \(X=(X_1,\dots,X_N)\) denote the collection of all leaf messages (or leaf outputs) generated in response to a task with truth label \(Y\).
Consider any multi-agent protocol on any directed acyclic topology such that the root ultimately observes a variable \(Z\) that is a (possibly randomized) function of \(X\) with randomness independent of \(Y\).
Then for any loss \(\ell(\widehat{Y},Y)\),
\[
\inf_{\widehat{Y}=\phi(Z)} \mathbb{E}[\ell(\widehat{Y},Y)]
\;\ge\;
\inf_{\widehat{Y}=\psi(X)} \mathbb{E}[\ell(\widehat{Y},Y)].
\]
In particular, a centralized decision rule $\psi(X)$ that has access to \(X\) cannot perform worse than any rule $\phi(Z)$ that only sees \(Z\).
\end{proposition}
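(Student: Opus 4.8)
The plan is to prove this as the statistical--decision--theoretic form of the data processing inequality, via a \emph{simulation} argument rather than any entropy inequality. The structural fact I would extract first is that, by hypothesis, every bit of randomness used anywhere in the protocol is independent of $Y$; processing the DAG in topological order, each node's output is a randomized function of its parents' outputs with fresh $Y$-independent noise, so by induction the root's observation $Z$ is a randomized function of the leaf collection $X$ with randomness independent of $Y$. Equivalently, $Y - X - Z$ is a Markov chain: the conditional law of $Z$ given $(X,Y)$ equals its conditional law given $X$, so there is a Markov kernel $K(\cdot\mid x)$ with $\Pr(Z\in\cdot\mid X=x,Y=y)=K(\cdot\mid x)$.

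Next I would represent this kernel by external noise. Working on standard Borel (e.g.\ Polish, or simply finite in the discrete agent setting we care about, where it is elementary) message/output spaces, there exist a measurable map $h$ and a random variable $U$ independent of $(X,Y)$ with $Z=h(X,U)$ in joint law. Given \emph{any} (possibly randomized) root decision rule $\widehat{Y}=\phi(Z)$, I then define a rule on the full leaf collection by drawing an independent copy $U'$ of $U$ and setting $\psi(X):=\phi\big(h(X,U')\big)$ (if $\phi$ itself is randomized, fold its internal noise into $U'$ as well). Since $(X,Y)$ carries its true joint law and $U'$ is an independent copy of $U$, the triple $(X,Y,U')$ has the same law as $(X,Y,U)$; pushing forward through $(x,y,u)\mapsto(\phi(h(x,u)),y)$ gives that $(\psi(X),Y)$ and $(\phi(Z),Y)$ are equal in distribution, hence $\mathbb{E}[\ell(\psi(X),Y)]=\mathbb{E}[\ell(\phi(Z),Y)]$ for the given loss $\ell$.

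The conclusion is then bookkeeping: $\psi$ is an admissible (randomized) function of $X$, so $\inf_{\psi}\mathbb{E}[\ell(\psi(X),Y)]\le\mathbb{E}[\ell(\phi(Z),Y)]$ for every $\phi$, and taking the infimum over $\phi$ yields the stated inequality. If one wants a deterministic rule on the right-hand side, derandomize: by Fubini there is a fixed value $u^\star$ with $\mathbb{E}[\ell(\phi(h(X,u^\star)),Y)]\le\mathbb{E}_{U'}\mathbb{E}[\ell(\phi(h(X,U')),Y)]$, so $x\mapsto\phi(h(x,u^\star))$ is a deterministic rule of no larger risk.

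I expect the main obstacle to be the measure-theoretic care, not the idea: (i) the noise representation $Z=h(X,U)$ of the kernel $K(\cdot\mid x)$ needs a mild regularity hypothesis on the output spaces, which should be stated (and is trivial in the finite/discrete case); and (ii) the inductive claim that ``node-wise randomness independent of $Y$'' composes along the DAG into ``root randomness independent of $Y$'' deserves an explicit line, since this is the single place the feed-forward acyclic structure (no path feeding $Y$ back into intermediate nodes) is genuinely used. Everything after that is distributional matching.
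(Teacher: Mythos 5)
Your proof is correct and takes essentially the same route as the paper's: both compose any rule $\phi$ with a noise representation $Z=h(X,U)$ (the paper writes $g$) to exhibit a rule on $X$ with identical risk, then take infima over $\phi$. You make explicit what the paper leaves implicit — the topological-order induction showing $Y\!-\!X\!-\!Z$ is a Markov chain, the measurable-representation lemma for the kernel, the distributional-matching verification, and the Fubini derandomization — but the core simulation argument is the same one.
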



\paragraph{Proof.}
Let $U$ collect the protocol's internal randomness, independent of $Y$, so that $Z=g(X,U)$.
Any rule $\phi(Z)$ induces a (possibly randomized) rule $\psi(X,U):=\phi(g(X,U))$; hence predictors based on $Z$ are a subset of predictors based on $(X,U)$ (and thus no better than those based on $X$), proving the risk inequality. \hfill$\square$

Proposition~\ref{prop:centralization} is not an endorsement of star in constrained settings.
It is a sanity check: \emph{hierarchies are only needed because some constraint makes full centralization infeasible or too expensive.}
In this paper, the dominant such constraint is the context window \(W\), which caps the fan-in of any node (Section~\ref{sec:budget-context}).
Once \(W\) binds, a star can only incorporate a bounded number of leaves regardless of budget, and that is precisely where hierarchy becomes a meaningful design choice.

\subsubsection{Why chains degrade without new evidence}
\label{sec:chains_degrade}

Chains are appealing in practice because they resemble deliberation: one agent's output becomes the next agent's input.
However, in our framework a chain does not introduce new independent evidence; it only retransmits and transforms what already exists.
Under lossy communication, repeated retransmission compounds loss.

This is explicit in both settings.
In the continuous warm-up, \eqref{eq:chain_mse} shows MSE increases linearly with chain length.
In the binary setting, the effect is even simpler.

\begin{proposition}[Bias decays along a chain under lossy communication]
\label{prop:chain_decay}
Consider a chain of length \(L\) in which a single leaf vote with bias \(\mu_0\) is transmitted through \(L\) independent communication steps, each with reliability \(\gamma(m)\), and no new independent evidence enters the chain.
Then the final bias is
\[
\mu_{\text{chain}}(L)=\gamma(m)^{L}\,\mu_0.
\]
Consequently, if \(\gamma(m)<1\) and \(L\ge 1\), then \(\mu_{\text{chain}}(L)<\mu_0\).
\end{proposition}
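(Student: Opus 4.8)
The plan is to prove the displayed identity by a one-step attenuation law together with a trivial induction on the number of hops, and then to read off the strict inequality as an immediate corollary.

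First I would fix the per-hop bookkeeping. Let \(\widehat{Y}^{(0)}\) be the leaf vote and let \(S_t:=\widehat{Y}^{(t)}Y\in\{-1,+1\}\) denote signed correctness after \(t\) hops, with \(\mathbb{E}[S_0]=\mu_0\) and, more generally, \(\mu_t:=\mathbb{E}[S_t]\) the bias after \(t\) hops. By the effective bit-channel model \eqref{eq:binary_channel}, hop \(t\) passes the current bit through an independent BSC with flip probability \(\epsilon(m)=(1-\gamma(m))/2\); equivalently \(\widehat{Y}^{(t+1)}=B_t\,\widehat{Y}^{(t)}\) for a sign \(B_t\in\{-1,+1\}\) that is independent of everything generated so far and of \(Y\), with \(\mathbb{E}[B_t]=1-2\epsilon(m)=\gamma(m)\). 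Multiplying through by \(Y\) gives \(S_{t+1}=B_tS_t\), and since \(B_t\) is independent of \(S_t\), taking expectations yields the one-step recursion \(\mu_{t+1}=\gamma(m)\,\mu_t\). This is exactly \eqref{eq:binary_channel} specialized to the chain: the hypothesis that ``no new independent evidence enters'' is precisely the Markov property that node \(t+1\) depends on \(Y\) only through the bit it receives.

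Next I would unroll the recursion. Iterating \(\mu_{t+1}=\gamma(m)\mu_t\) from \(t=0\) to \(t=L-1\) gives \(\mu_{\text{chain}}(L)=\mu_L=\gamma(m)^L\mu_0\), the claimed identity (equivalently: composing \(L\) independent BSCs yields a BSC with reliability \(\prod_{t=0}^{L-1}\gamma(m)=\gamma(m)^L\)). The corollary follows at once: if \(0<\gamma(m)<1\) and \(L\ge1\) then \(0<\gamma(m)^L<1\), so \(\mu_{\text{chain}}(L)=\gamma(m)^L\mu_0\le\mu_0\), with strict inequality whenever \(\mu_0>0\); in the degenerate case \(\mu_0=0\) the chain bias is already \(0\) and remains there, which is the only case where the strict bound is vacuous.

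I do not expect a genuine obstacle: the proposition is essentially the channel model under composition. The only points deserving a line of care are (i) the independence of the flip signs \(B_0,\dots,B_{L-1}\) across hops — this is what makes the expectation factor and is exactly the modeling assumption that the \(L\) communication steps are independent — and (ii) making explicit that ``no new independent evidence'' is the Markov property guaranteeing the composed channel is again a BSC, so that no information about \(Y\) re-enters downstream of the leaf. With those two remarks in place the argument is a two-line induction, and I would relegate any further detail (e.g.\ spelling out the BSC composition) to an appendix.
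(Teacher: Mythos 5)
Your proof is correct and follows the same route as the paper's (one-line) argument: each independent hop multiplies the bias by $\gamma(m)$ by the channel model, and unrolling gives $\gamma(m)^L\mu_0$. You supply more detail (the explicit flip-sign/BSC composition and the remark that $\mu_0=0$ makes the strict inequality vacuous), but the underlying idea is identical.
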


\paragraph{Proof.}
Each hop multiplies bias by \(\gamma(m)\) by the channel model in Section~\ref{sec:communication}. \hfill\(\square\)

\paragraph{Design takeaway.}
A relay-only chain strictly attenuates signal (Proposition~\ref{prop:chain_decay}); it helps only when intermediate steps add \emph{new} evidence (tools, retrieval, verification) rather than passing along the same information.
We therefore focus on star versus tree as the two scalable evidence-aggregation strategies under context limits.

\subsubsection{Context-limited star versus hierarchical trees}
\label{sec:context_limited_star_vs_tree}

The context window \(W\) turns Proposition~\ref{prop:centralization} from a theoretical upper bound into a practical obstacle.
A star aggregator that receives \(N\) messages of length \(m\) must read roughly \(Nm\) tokens, so feasibility requires
\begin{equation}
Nm \le W
\qquad\Longrightarrow\qquad
N \le N_{\max}(m):=\Big\lfloor \frac{W}{m}\Big\rfloor.
\label{eq:star_context_cap}
\end{equation}
For fixed \(W\), \(N_{\max}(m)\) is a \emph{constant} independent of budget.
Thus, once a star uses as many leaves as it can fit in context, additional budget cannot increase the number of incorporated leaves.
The only remaining levers are to strengthen each leaf (increase \(x\)) or to increase message length (increase \(m\)) at the cost of reducing \(N_{\max}(m)\).

A hierarchical tree avoids the global cap \eqref{eq:star_context_cap} by enforcing the context constraint \emph{locally}.
If each internal node has fan-in \(b\) and reads \(b\) messages of length \(m\), feasibility requires
\begin{equation}
bm \le W,
\label{eq:tree_local_context}
\end{equation}
but the number of leaves can grow as \(N=b^L\) with depth \(L\).
In other words, a tree converts a global bottleneck (\(Nm\le W\) at one node) into many local bottlenecks (\(bm\le W\) at many nodes).
Figure~\ref{fig:star_vs_tree} illustrates this trade-off: the star saturates early due to context limits, while the tree can continue scaling until it reaches its fixed-point ceiling \(\mu^\star\).

\begin{figure}[t]
  \centering
  \includegraphics[width=0.8\linewidth]{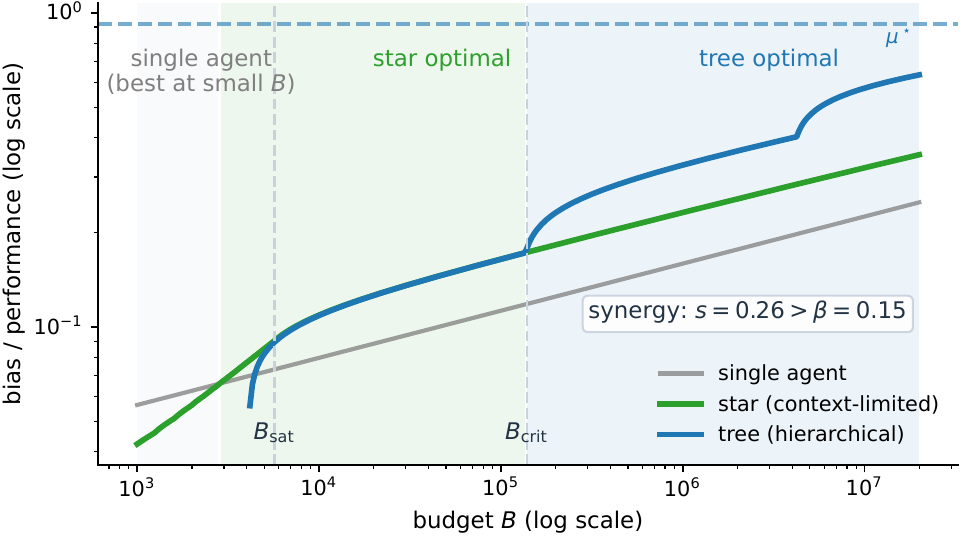}
  \caption{
  Star vs.\ tree vs.\ single-agent performance under a fixed context window $W$.
  A star can increase fan-in $N$ until it hits the context ceiling
  $N_{\max}=\lfloor W/m\rfloor$ (defining $B_{\mathrm{sat}}$), after which further budget
  can only improve per-leaf compute (scaling like $B^\beta$).
  A hierarchical tree can continue increasing the effective number of leaves by adding depth,
  exhibiting a growth exponent $s=\log(\alpha_\rho)/\log b$ in the growth regime; when $s>\beta$,
  trees eventually dominate beyond $B_{\mathrm{crit}}$.
  }
  \label{fig:star_vs_tree}
\end{figure}

This benefit is real only if information survives the hierarchy.
The binary phase transition from Section~\ref{sec:phase_transition} provides the feasibility condition:
a deep tree can only maintain nontrivial accuracy when
\begin{equation}
\alpha_\rho=\gamma(m)\big(\rho+(1-\rho)f_b'(0)\big) > 1.
\label{eq:supercritical_recall}
\end{equation}
Even when \(\alpha_\rho>1\), deeper trees saturate at a fixed point \(\mu^\star\), so trees should be viewed as a tool for \emph{scaling up to} a regime of strong performance, not for unlimited improvement.

\begin{figure}[!ht]
  \centering
  \includegraphics[width=0.7\linewidth]{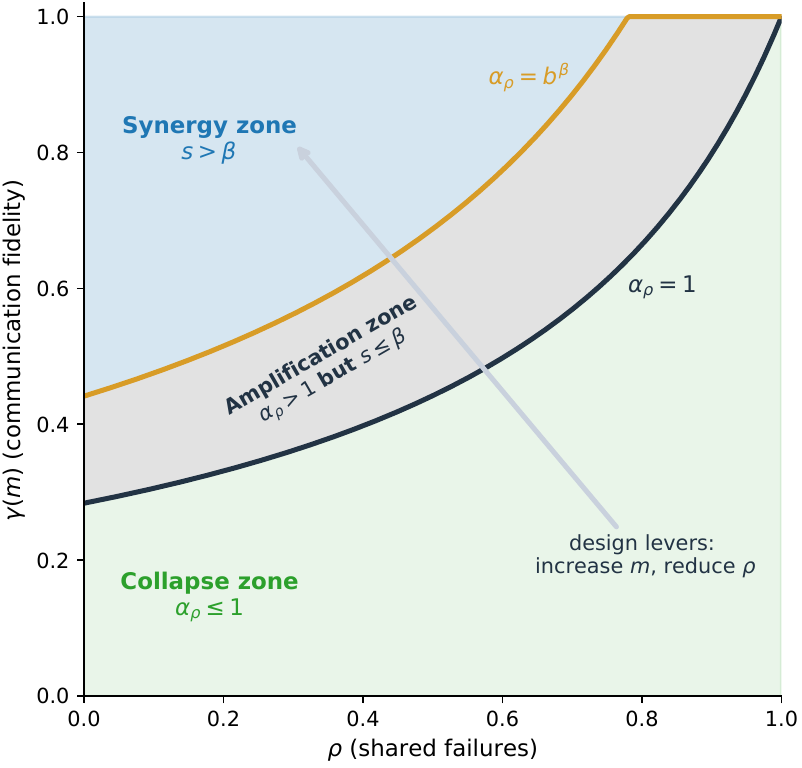}
  \caption{
  Phase regions in $(\rho,\gamma)$ for fixed fan-in $b$ and single-agent scaling exponent $\beta$.
  The boundary $\alpha_\rho=1$ separates collapse ($\alpha_\rho\le 1$) from amplification.
  The boundary $\alpha_\rho=b^\beta$ separates mere amplification (no synergy, $s\le\beta$)
  from synergy ($s>\beta$), where hierarchical scale-out can outpace single-agent scaling.
  Design levers such as increasing message length $m$ (raising $\gamma$) or reducing shared-failure
  correlation $\rho$ move systems toward the synergy region.
  }
  \label{fig:binary_phase_regions}
\end{figure}

Figure~\ref{fig:binary_phase_regions} summarizes the logic in a single picture.
A tree can only be useful if it lies above the amplification threshold \(\alpha_\rho>1\).
Among such trees, budgeted synergy further requires \(s>\beta\), which is equivalent to
\begin{equation}
s>\beta
\qquad \Longleftrightarrow\qquad
\alpha_\rho>b^\beta.
\label{eq:s_vs_beta_equiv}
\end{equation}
The next subsection turns \eqref{eq:s_vs_beta_equiv} into an explicit budget threshold.

\subsubsection{Budget thresholds for synergy: conditions on \texorpdfstring{$s$}{s} and \texorpdfstring{$\beta$}{beta}}
\label{sec:budget_thresholds}

The layer-wise analysis of Section~\ref{sec:binary_core} yields an organization exponent \(s\) that governs how signal grows with depth before saturation.
Here we translate that picture into a \emph{fixed-budget} comparison: under the same total budget \(B\), when can \textbf{scaling out} with a tree outperform \textbf{scaling up} a single agent?

\paragraph{Setup (growth regime).}
Assume the per-leaf scaling curve admits a local exponent \(\beta\) in the operating range (Section~\ref{sec:scaling}), so that \(\mu_0(x)=g(x)\approx kx^\beta\) when leaf bias is small.
In the amplification band of Theorem~\ref{thm:small_signal_band} (i.e., before the recursion approaches \(\mu^\star\)), a depth-\(L\) full \(b\)-ary tree satisfies
\begin{equation}
\mu_L \;\approx\; \mu_0(x)\,\alpha_\rho(b,m)^L
\;=\; \mu_0(x)\,N^{s(b,m,\rho)},
\qquad N=b^L,
\label{eq:growth_mu_tree}
\end{equation}
where \(\alpha_\rho(b,m)\) is defined in \eqref{eq:alpha_rho_def} and \(s(b,m,\rho)=\log\alpha_\rho(b,m)/\log b\) as in \eqref{eq:organization_exponent}.
With the tree budget model \(B\approx N\big(x+c_0(b,m)\big)\) from Section~\ref{sec:budget-context}, we have \(N\approx B/(x+c_0)\), giving the growth surrogate
\begin{equation}
\mu_{\mathrm{grow}}(B; b,m,x)
\;\approx\;
k\,x^\beta\left(\frac{B}{x+c_0(b,m)}\right)^{s(b,m,\rho)}.
\label{eq:grow_objective}
\end{equation}
We use \eqref{eq:grow_objective} only as a pre-saturation diagnostic; in Section~\ref{sec:saturation_binary} we clip predictions at the fixed point \(\mu^\star(b,m)\).

\begin{theorem}[Closed-form compute allocation in the growth regime]
\label{thm:opt_x_growth}
Fix \( (b,m,\rho) \) and suppose \( s(b,m,\rho)>\beta\).
Then the objective \eqref{eq:grow_objective} is maximized at
\begin{equation}
x^\star(b,m)
=
\frac{\beta}{s(b,m,\rho)-\beta}\,c_0(b,m),
\label{eq:xstar_growth}
\end{equation}
and the resulting optimized growth prediction scales as
\begin{equation}
\mu_{\mathrm{grow}}^\star(B;b,m)
\;\approx\;
k\,\kappa\big(s(b,m,\rho),\beta\big)\,c_0(b,m)^{\beta-s(b,m,\rho)}\,B^{s(b,m,\rho)},
\label{eq:mu_grow_star}
\end{equation}
where \(\kappa(s,\beta)=\beta^\beta(s-\beta)^{s-\beta}s^{-s}\).
If \(s(b,m,\rho)\le \beta\), then \eqref{eq:grow_objective} is (weakly) increasing in \(x\), so the growth surrogate favors \textbf{scale up} over \textbf{scale out}.
\end{theorem}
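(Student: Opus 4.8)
The plan is a one-variable calculus argument on the growth surrogate \eqref{eq:grow_objective}, holding $B$, $b$, $m$, $\rho$ fixed (hence also $s=s(b,m,\rho)$ and $c_0=c_0(b,m)$) and optimizing over the single knob $x>0$. First I would strip off everything that does not depend on $x$: writing $\mu_{\mathrm{grow}}(B;b,m,x)=k\,B^{s}\,h(x)$ with $h(x):=x^{\beta}(x+c_0)^{-s}$, so the problem reduces to maximizing $h$ on $(0,\infty)$. Taking logarithms, $\log h(x)=\beta\log x-s\log(x+c_0)$, so $h'(x)=h(x)\big(\beta/x-s/(x+c_0)\big)$, and since $h>0$ on the interior the stationarity condition is simply $\beta/x=s/(x+c_0)$. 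This equation is \emph{linear} in $x$: it rearranges to $(s-\beta)x=\beta c_0$, which has the unique positive root $x^\star=\beta c_0/(s-\beta)$ precisely when $s>\beta$. This is the content of \eqref{eq:xstar_growth}.

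Next I would certify that $x^\star$ is the global maximizer by a boundary argument rather than a second-derivative computation. Because $\beta<s$, we have $h(x)\to 0$ both as $x\downarrow 0$ (the factor $x^{\beta}\to 0$, while $(x+c_0)^{-s}\to c_0^{-s}$ stays bounded) and as $x\to\infty$ (then $h(x)\sim x^{\beta-s}\to 0$), while $h>0$ on $(0,\infty)$; a continuous function that vanishes at both ends and is positive in between attains its maximum at an interior stationary point, and there is exactly one such point, namely $x^\star$. Substituting back — using $x^\star+c_0=s\,c_0/(s-\beta)$ — gives $h(x^\star)=\beta^{\beta}(s-\beta)^{s-\beta}s^{-s}\,c_0^{\beta-s}$, which is exactly $\kappa(s,\beta)\,c_0^{\beta-s}$ with $\kappa(s,\beta)=\beta^{\beta}(s-\beta)^{s-\beta}s^{-s}$; multiplying by $kB^{s}$ yields \eqref{eq:mu_grow_star}.

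For the complementary case $s\le\beta$, I would read the conclusion straight off the log-derivative: $s/(x+c_0)\le\beta/(x+c_0)<\beta/x$ for all $x>0$, so $h'(x)>0$ everywhere, hence $h$ and therefore $\mu_{\mathrm{grow}}$ is strictly increasing in $x$, there is no interior optimum, and the surrogate is maximized by pouring all budget into per-leaf compute, i.e., it favors scale-up. I do not expect a real obstacle in this proof — it is elementary single-variable optimization. The only care needed is bookkeeping: keeping the exponents straight when plugging $x^\star$ back in, being explicit that \eqref{eq:grow_objective} is only the pre-saturation growth approximation (so the statement concerns the growth regime, with predictions clipped at the fixed point $\mu^\star(b,m)$ as in Section~\ref{sec:saturation_binary}), and noting that the feasibility constraints $N=B/(x+c_0)\ge 1$ and $bm\le W$ are assumed to hold throughout the operating range where the comparison is made.
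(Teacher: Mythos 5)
Your proof is correct and takes essentially the same approach as the paper: take logs, set the first-order condition $\beta/x = s/(x+c_0)$ to get the unique critical point $x^\star = \beta c_0/(s-\beta)$, and substitute back to obtain $\kappa(s,\beta)c_0^{\beta-s}$. The only (cosmetic) difference is that you certify the global maximum by the boundary behavior $h(x)\to 0$ as $x\downarrow 0$ and $x\to\infty$, whereas the paper checks $\phi''(x^\star)<0$ directly; both are valid, and your handling of the $s\le\beta$ case matches the paper's.
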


\paragraph{Proof sketch.}
The optimization is one-dimensional: maximize \(\log \mu_{\mathrm{grow}}=\beta\log x - s\log(x+c_0) + s\log B\).
Setting the derivative to zero yields \eqref{eq:xstar_growth}, and substituting gives \eqref{eq:mu_grow_star}.
This is the same calculus as in the continuous warm-up (Section~\ref{sec:cont_budget_tradeoff}), with the continuous exponent \(t\) replaced by the binary exponent \(s\); Appendix~\ref{app:xstar_growth_proof} gives the full derivation. \hfill\(\square\)

\paragraph{Synergy against a single agent.}
Spending the whole budget on a single agent yields
\begin{equation}
\mu_{\mathrm{single}}(B)
\;=\;
\mu_0(B)
\;\approx\;
k\,B^\beta.
\label{eq:single_scaling}
\end{equation}
Comparing \eqref{eq:mu_grow_star} with \eqref{eq:single_scaling} shows that exponent-level scale-out synergy in the growth regime requires \(s>\beta\) (equivalently \(\alpha_\rho>b^\beta\)).

\begin{corollary}[A budget threshold for scale-out synergy (growth regime)]
\label{cor:Bcrit}
Fix \((b,m,\rho)\) with \(s(b,m,\rho)>\beta\).
If
\begin{equation}
B \;\ge\; B_{\mathrm{crit}}(b,m)
\;\approx\; c_0(b,m)\,\kappa\big(s(b,m,\rho),\beta\big)^{-1/(s(b,m,\rho)-\beta)},
\label{eq:Bcrit}
\end{equation}
then the optimized growth prediction \eqref{eq:mu_grow_star} exceeds the single-agent scaling \eqref{eq:single_scaling}.
\end{corollary}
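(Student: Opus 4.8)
The plan is to reduce Corollary~\ref{cor:Bcrit} to a one-line comparison of the two closed-form growth surrogates already in hand, since both are explicit power laws in $B$ sharing the same leading constant $k$. The two inputs are \eqref{eq:mu_grow_star}, $\mu_{\mathrm{grow}}^\star(B;b,m)\approx k\,\kappa(s,\beta)\,c_0(b,m)^{\beta-s}\,B^{s}$ from Theorem~\ref{thm:opt_x_growth}, and \eqref{eq:single_scaling}, $\mu_{\mathrm{single}}(B)\approx k\,B^\beta$. First I would form their ratio, so that $k$ cancels and only the structural quantities remain:
\[
\frac{\mu_{\mathrm{grow}}^\star(B;b,m)}{\mu_{\mathrm{single}}(B)}\;\approx\;\kappa(s,\beta)\,c_0(b,m)^{\beta-s}\,B^{\,s-\beta}\;=\;\kappa(s,\beta)\Big(\tfrac{B}{c_0(b,m)}\Big)^{s-\beta}.
\]

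Next I would invoke the standing hypothesis $s-\beta>0$: the map $B\mapsto (B/c_0)^{s-\beta}$ is strictly increasing and continuous on $(0,\infty)$ with range $(0,\infty)$, so the displayed ratio is a strictly increasing bijection of $(0,\infty)$ onto $(0,\infty)$ and crosses the value $1$ exactly once. Solving $\kappa(s,\beta)\,(B/c_0)^{s-\beta}=1$ for $B$ — legitimate since $t\mapsto t^{1/(s-\beta)}$ is increasing — gives $B=c_0(b,m)\,\kappa(s,\beta)^{-1/(s-\beta)}=B_{\mathrm{crit}}(b,m)$, and monotonicity then yields ratio $\ge 1$, hence $\mu_{\mathrm{grow}}^\star\ge\mu_{\mathrm{single}}$, for all $B\ge B_{\mathrm{crit}}$, with strict inequality for $B>B_{\mathrm{crit}}$. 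That is exactly the assertion.

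I would also record the sanity check that $B_{\mathrm{crit}}$ is a genuine positive, finite threshold (rather than, say, trivially below any achievable budget). Writing $p=\beta/s\in(0,1)$ and $q=1-p$, one has $\kappa(s,\beta)=(p^{p}q^{q})^{s}$ and $\log(p^{p}q^{q})=p\log p+q\log q\le 0$ (the negative binary entropy), so $\kappa(s,\beta)\in(0,1]$ and therefore $\kappa(s,\beta)^{-1/(s-\beta)}\ge 1$, i.e.\ $B_{\mathrm{crit}}(b,m)\ge c_0(b,m)>0$. Intuitively: scale-out cannot win until the per-leaf communication overhead $c_0$ has been amortized, after which the larger exponent $s>\beta$ takes over and the tree eventually leads by the polynomial factor $(B/c_0)^{s-\beta}$.

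There is essentially no analytic obstacle here; the content is a rearrangement of Theorem~\ref{thm:opt_x_growth}. The one point I would be careful to state explicitly is the status of the ``$\approx$'' signs: the comparison is between the \emph{pre-saturation growth surrogates} \eqref{eq:grow_objective} and \eqref{eq:single_scaling}, so the faithful reading of the corollary is that $B_{\mathrm{crit}}$ is the exponent-level budget at which an optimally-allocated tree's growth prediction overtakes the single-agent curve. For this to describe realized (clipped) performance rather than only the surrogate, one additionally needs $B_{\mathrm{crit}}$ to lie inside the growth range, i.e.\ $\mu_{\mathrm{grow}}^\star(B_{\mathrm{crit}};b,m)\le \mu^\star(b,m)$ in the notation of Section~\ref{sec:saturation_binary}; I would flag this as the regime of validity rather than attempt to establish it in full generality.
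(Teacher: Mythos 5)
Your proof is correct and follows essentially the same route as the paper's Appendix D.3: cancel the common constant $k$, rearrange the inequality $\mu_{\mathrm{grow}}^\star \ge \mu_{\mathrm{single}}$ into a power of $B$, and use $s-\beta>0$ to solve for the threshold. The sanity check that $\kappa(s,\beta)=(p^p q^q)^s\le 1$ (so $B_{\mathrm{crit}}\ge c_0$) and the remark on the regime of validity of the growth surrogate are nice additions, but the core argument is identical.
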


\paragraph{Interpretation.}
Equation~\eqref{eq:Bcrit} makes the trade-off explicit: synergy is harder when coordination is expensive (large \(c_0\)) or when \(s\) only barely exceeds \(\beta\).
Saturation can truncate the growth regime, but the threshold still indicates whether a budget window exists in which scale-out should help before the fixed point \(\mu^\star\) dominates.

\subsubsection{A universal upper bound on amplification exponents}
\label{sec:upper_bound_s}

The organization exponent \(s\) governs how quickly a majority-vote hierarchy can amplify weak signal as the number of leaves grows.
It is therefore important to understand its fundamental limits.
Perhaps surprisingly, the majority family admits a clean and universal upper bound: no choice of fan-in can make \(s\) exceed \(1/2\) in the binary model with one-bit messages.

\begin{theorem}[A universal bound: \(s \le \tfrac{1}{2}\)]
\label{thm:s_upper_half}
For every odd \(b\ge 3\), the majority map satisfies
\begin{equation}
f_b'(0)\le \sqrt{b}.
\label{eq:fbprime_sqrtb}
\end{equation}
Consequently, for any \(\rho\in[0,1)\) and any \(\gamma(m)\le 1\),
\begin{equation}
\alpha_\rho=\gamma(m)\big(\rho+(1-\rho)f_b'(0)\big)\le \sqrt{b},
\qquad\text{and hence}\qquad
s=\frac{\log\alpha_\rho}{\log b}\le \frac{1}{2}.
\label{eq:s_le_half}
\end{equation}
\end{theorem}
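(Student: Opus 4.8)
The plan is to funnel everything through the single scalar inequality $f_b'(0)\le\sqrt b$; once that is in hand, the bounds on $\alpha_\rho$ and on $s$ follow by elementary monotonicity. Using the closed form $f_b'(0)=b\binom{b-1}{(b-1)/2}2^{-(b-1)}$ supplied by Lemma~\ref{lem:fb_props}, and writing $b=2k+1$ (so $b-1=2k$ with $k\ge 1$ because $b\ge 3$ is odd), the claim \eqref{eq:fbprime_sqrtb} becomes the central-binomial estimate
\[
f_b'(0)=\frac{(2k+1)\binom{2k}{k}}{4^{k}}\le\sqrt{2k+1}=\sqrt b
\qquad\Longleftrightarrow\qquad
\binom{2k}{k}\sqrt{2k+1}\le 4^{k}.
\]

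The slick route I would take is probabilistic. Let $S_1,\dots,S_b$ be i.i.d.\ Rademacher ($\pm1$, mean $0$) and put $Z_b=\sum_{i=1}^b S_i$. Since the signed-correctness variables have mean $u$ and $\mathrm{Maj}(V)Y=\mathrm{sign}(\sum_i V_iY)$, we have $f_b(u)=\mathbb{E}_u[\mathrm{sign}(Z_b)]$, so $f_b'(0)$ is the sum of the degree-one Fourier coefficients of $\mathrm{sign}(Z_b)$, namely $\mathbb{E}[Z_b\,\mathrm{sign}(Z_b)]=\mathbb{E}|Z_b|$ (equivalently, this is De Moivre's mean-absolute-deviation identity for the symmetric binomial, which reproduces exactly the closed form above). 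Then Cauchy--Schwarz gives $f_b'(0)=\mathbb{E}|Z_b|\le\big(\mathbb{E}Z_b^2\big)^{1/2}=\sqrt b$ by independence and unit variances. If one prefers a fully elementary argument, the same bound follows by induction on $k$: with $a_k:=\binom{2k}{k}\sqrt{2k+1}/4^{k}$ one checks $a_1=\sqrt3/2<1$ and computes $a_{k+1}/a_k=\tfrac{2k+1}{2(k+1)}\sqrt{(2k+3)/(2k+1)}$, which is $\le 1$ because squaring reduces it to $(2k+1)(2k+3)\le 4(k+1)^2$, i.e.\ $4k^2+8k+3\le 4k^2+8k+4$; hence $a_k\le a_1<1$ for all $k\ge 1$.

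Given $f_b'(0)\le\sqrt b$, the remaining steps are one line each. Because $\rho\in[0,1)$ and $b\ge 3$ force $1\le\sqrt b$, the convex combination obeys $\rho+(1-\rho)f_b'(0)\le\rho\cdot 1+(1-\rho)\sqrt b\le\sqrt b$, using $\rho(\sqrt b-1)\ge 0$; multiplying by $\gamma(m)\le 1$ yields $\alpha_\rho\le\sqrt b$, which is the first assertion of \eqref{eq:s_le_half}. For the exponent, if $\alpha_\rho>1$ then $0<\log\alpha_\rho\le\log\sqrt b$, so $s=\log\alpha_\rho/\log b\le\tfrac12$; if $\alpha_\rho\le 1$ then $s\le 0<\tfrac12$. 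Either way $s\le\tfrac12$, completing the proof.

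I do not expect a genuine obstacle here: the sole nontrivial ingredient is the central-binomial bound $\binom{2k}{k}\le 4^{k}/\sqrt{2k+1}$, and the only real care is to use an argument that lands \emph{exactly} at $\sqrt b$ — the $\mathbb{E}|Z_b|$ identity together with Cauchy--Schwarz, or the one-line induction above — rather than a Stirling-type asymptotic, which would control $f_b'(0)$ only up to a multiplicative constant and would not suffice for the sharp threshold $s\le\tfrac12$.
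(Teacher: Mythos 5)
Your proof is correct, and both of your routes differ genuinely from the paper's. The paper proves the central-binomial estimate via a Wallis-type integral representation, $\binom{2k}{k}=\frac{2\cdot 4^k}{\pi}\int_0^{\pi/2}\cos^{2k}x\,dx$, then uses the pointwise bound $\cos x\le e^{-x^2/2}$ to obtain $\binom{2k}{k}\le 4^k/\sqrt{\pi k}$, and finally invokes $\pi>3$ together with $k\ge1$ to conclude $f_b'(0)\le(2k+1)/\sqrt{\pi k}\le\sqrt{b}$. Your first route instead identifies $f_b'(0)$ with $\mathbb{E}\lvert Z_b\rvert$ for $Z_b$ a sum of $b$ i.i.d.\ Rademachers (the sum of majority's degree-one Fourier coefficients, equivalently De Moivre's mean-absolute-deviation identity) and then applies Cauchy--Schwarz, $\mathbb{E}\lvert Z_b\rvert\le(\mathbb{E}Z_b^2)^{1/2}=\sqrt{b}$; this lands exactly at $\sqrt{b}$ with no arithmetic patching and makes it transparent why $\sqrt{b}$ is the right scale. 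Your second, fully elementary route checks $a_1=\sqrt{3}/2<1$ and $a_{k+1}/a_k\le1$ for $a_k:=\binom{2k}{k}\sqrt{2k+1}/4^k$ via the one-line inequality $(2k+1)(2k+3)\le 4(k+1)^2$, which again gives $\sqrt{b}$ directly (in fact with strict inequality). The paper's approach buys the slightly stronger and asymptotically tight constant $\binom{2k}{k}\le4^k/\sqrt{\pi k}$, at the cost of the integral machinery and a small finishing step; your two routes are shorter and hit the theorem's threshold exactly. Your downstream handling of $\alpha_\rho$ (convex combination with $\sqrt{b}\ge1$) and of $s$ (including the case $\alpha_\rho\le1$, where $s\le0$) is also correct and matches the paper's, except that you make the $\alpha_\rho\le1$ branch explicit, which is a useful clarification.
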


\paragraph{Proof sketch.}
Equation~\eqref{eq:fbprime_sqrtb} follows from the closed form \(f_b'(0)=\frac{b\binom{b-1}{(b-1)/2}}{2^{b-1}}\) (Lemma~\ref{lem:fb_props}) and a standard upper bound on the central binomial coefficient.
Plugging \eqref{eq:fbprime_sqrtb} into \eqref{eq:alpha_rho_def} yields \eqref{eq:s_le_half}.
A fully elementary proof is given in Appendix~H. \hfill\(\square\)

\paragraph{Phase-diagram summary and implications.}
The preceding results collapse topology choice into a small set of quantitative checks.
Under context constraints, deep trees are only viable in the supercritical regime \(\alpha_\rho(b,m)>1\) (otherwise bias collapses to chance as depth grows).
In the growth regime, budgeted synergy against scale-up further requires \(s(b,m,\rho)>\beta\), equivalently \(\alpha_\rho>b^\beta\) (Section~\ref{sec:budget_thresholds}).
Within one-bit majority-vote hierarchies we also have the universal cap \(s\le 1/2\) (Theorem~\ref{thm:s_upper_half}), which explains why scale-out is fragile on some tasks: if \(\beta\ge 1/2\), exponent-level synergy is impossible without richer communication, heterogeneity that lowers the effective \(\rho\), or protocols that change the effective aggregation map.
Even when \(s>\beta\), gains persist only until saturation at \(\mu^\star(b,m)\); beyond that point, additional budget is typically better spent improving fidelity (larger \(m\), hence larger \(\gamma(m)\)) or reducing shared failures (smaller \(\rho\)) than on additional depth.
Figure~\ref{fig:binary_phase_regions} summarizes these regimes, and Section~\ref{sec:design} turns the checks into concrete design diagnostics within the model.

\section{Design Diagnostics Within the Model}
\label{sec:design}

The results so far characterize when different organizations can help under a fixed budget: a context-limited star saturates (Section~\ref{sec:context_limited_star_vs_tree}), chains compound loss (Section~\ref{sec:chains_degrade}), and trees are viable only in the supercritical regime \(\alpha_\rho>1\) with small-signal growth governed by an organization exponent \(s\) (Section~\ref{sec:binary_core}).
Here we translate these results into \emph{design diagnostics within the model}: given a budget \(B\), a context window \(W\), and a small set of effective environment quantities, we ask whether hierarchy is feasible, whether scale-out can outpace scale-up before saturation, and which constraint is binding when it cannot.
For LLM-based agent systems, the knobs correspond to token allocations: per-leaf reasoning/compute \(x\) and inter-agent message length \(m\). The abstraction is deliberately coarse, so the goal is regime prediction and diagnostic guidance, not a faithful end-to-end optimizer for any particular agent stack.

Under a budget \(B\) and context window \(W\), we parameterize a tree design by
\[
(b,\ m,\ x,\ L)\quad \text{(fan-in, message length, per-leaf compute, depth)}
\]
and ask how to choose these quantities to maximize the model-predicted performance subject to feasibility.
The inner problem chooses \(x\) for fixed \((b,m)\); the outer problem trades off communication quality and coordination cost by selecting \((b,m)\) as a function of \(B\).
A key structural takeaway is monotonicity: in the growth regime, the budget-optimal message length can be chosen nondecreasing in \(B\), yielding efficient design curves rather than expensive search.
We focus on binary success/failure tasks, where hierarchies can either amplify or collapse; the continuous setting admits analogous (and smoother) diagnostics via the closed-form recursions of Section~\ref{sec:continuous_warmup}.

\subsection{Compute allocation in the growth regime}
\label{sec:design_xstar}

Fix a feasible fan-in \(b\) and message length \(m\) with \(bm\le W\).
Let \(\alpha_\rho(b,m)\) be defined as in \eqref{eq:alpha_rho_def}, and write the associated organization exponent as \(s(b,m):=\log\alpha_\rho(b,m)/\log b\).
In the supercritical regime \(\alpha_\rho(b,m)>1\) and before saturation, the binary small-signal prediction takes the growth form \eqref{eq:grow_objective}.
Theorem~\ref{thm:opt_x_growth} yields a closed-form compute allocation \(x^\star(b,m)\) (eq.~\eqref{eq:xstar_growth}) and an optimized growth curve \(\mu_{\mathrm{grow}}^\star(B;b,m)\) (eq.~\eqref{eq:mu_grow_star}) whenever \(s(b,m)>\beta\).
If \(s(b,m)\le \beta\), the growth surrogate favors spending budget to \textbf{scale up} rather than \textbf{scale out}.

Given a budget \(B\), we then allocate
\begin{equation}
N^\star(B;b,m)\approx \left\lfloor \frac{B}{x^\star(b,m)+c_0(b,m)} \right\rfloor,
\qquad
L^\star(B;b,m)\approx \left\lfloor \log_b N^\star(B;b,m) \right\rfloor,
\label{eq:design_NL}
\end{equation}
and cap \(L^\star\) using the mixing-depth guarantee (Theorem~\ref{thm:mixing_depth}) when operating near saturation.

\subsection{Monotone communication design: why \texorpdfstring{$m^\star(B)$}{m*(B)} increases with budget}
\label{sec:design_monotone_m}

Compute allocation solves only the inner problem.
We still need to choose message length \(m\), which trades off two effects:
increasing \(m\) improves communication fidelity \(\gamma(m)\) and thus increases the gain \(\alpha_\rho\) (raising the exponent \(s\)),
but increasing \(m\) also increases the coordination cost \(c_0(b,m)\), reducing the number of leaves we can afford.

A natural design question is whether there is structure in the budget-dependent solution.
The answer is yes:
\emph{in the growth regime, the optimal message length is monotone in the budget.}
This formalizes a useful heuristic in the model: as you spend more, the optimal design communicates less aggressively compressed information.

\paragraph{A growth-regime objective indexed by \(m\).}
Fix fan-in \(b\) and consider the feasible message set
\[
\mathcal{M}_b := \{m\in\mathbb{Z}_+:\ bm\le W\}.
\]
For each \(m\in\mathcal{M}_b\), define the growth-optimal prediction from Theorem~\ref{thm:opt_x_growth},
\begin{equation}
\underline{\mu}_b(m;B)
:=
\mu_{\mathrm{grow}}^\star(B;b,m)
=
k\,\kappa(s_{b,m},\beta)\,c_0(b,m)^{\beta-s_{b,m}}\,B^{s_{b,m}},
\qquad
s_{b,m}:=s(b,m).
\label{eq:mu_bm}
\end{equation}
We only consider candidates with \(s_{b,m}>\beta\) and \(\alpha_\rho(b,m)>1\) since only these can outpace scale-up in exponent terms in the growth regime.

\begin{theorem}[Monotone message-length design in the growth regime]
\label{thm:monotone_m}
Fix fan-in \(b\ge 3\) and correlation \(\rho\in[0,1)\).
Assume \(\gamma(m)\) is strictly increasing in \(m\) and \(c_0(b,m)\) is nondecreasing in \(m\).
Let
\[
m_b^\star(B)\in \arg\max_{m\in\mathcal{M}_b}\ \underline{\mu}_b(m;B),
\]
where the maximization ranges over candidates satisfying \(s_{b,m}>\beta\) and \(\alpha_\rho(b,m)>1\).
Then for any \(B_2>B_1\), there exists an optimal selection such that
\[
m_b^\star(B_2)\ \ge\ m_b^\star(B_1).
\]
Equivalently, \(m_b^\star(B)\) can be chosen as a nondecreasing function of \(B\).
\end{theorem}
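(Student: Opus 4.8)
The plan is to establish monotonicity via a single-crossing (supermodularity) argument on $\log\underline{\mu}_b(m;B)$ in the pair $(m,B)$. Taking logs in \eqref{eq:mu_bm},
\[
\log\underline{\mu}_b(m;B)
=
\log k + \log\kappa(s_{b,m},\beta) + (\beta - s_{b,m})\log c_0(b,m) + s_{b,m}\log B .
\]
Only the last term depends on $B$, and it does so with coefficient $s_{b,m}$. Since $\gamma(m)$ is strictly increasing in $m$, $\alpha_\rho(b,m)=\gamma(m)(\rho+(1-\rho)f_b'(0))$ is strictly increasing in $m$, hence $s_{b,m}=\log\alpha_\rho(b,m)/\log b$ is strictly increasing in $m$ on the candidate set. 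Therefore, for $m' > m$ (both candidates), the difference $\log\underline{\mu}_b(m';B) - \log\underline{\mu}_b(m;B)$ equals a constant (in $B$) plus $(s_{b,m'} - s_{b,m})\log B$ with $s_{b,m'} - s_{b,m} > 0$. This is strictly increasing in $\log B$ (for $B>1$), so the objective has strictly increasing differences in $(m,B)$.

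First I would record the feasibility/candidate structure: the feasible set $\mathcal{M}_b=\{m:bm\le W\}$ is a fixed finite interval of integers independent of $B$, and the ``admissible'' subset $\{m\in\mathcal{M}_b: \alpha_\rho(b,m)>1,\ s_{b,m}>\beta\}$ is, by monotonicity of $s_{b,m}$ in $m$, an upper set of the form $\{m\ge m_{\min}\}\cap\mathcal{M}_b$; so the feasible choice set does not shrink as $B$ grows. Then I would invoke the standard monotone-comparative-statics conclusion (Topkis): if $h(m;B)$ has increasing differences in $(m,B)$ on a product of chains and the feasible set is constant in $B$, then $\arg\max_m h(m;B)$ is nondecreasing in $B$ in the strong set order, and in particular one may select $m_b^\star(B)$ nondecreasing. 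Because the admissible $m$-set is finite, an elementary direct argument also suffices: suppose $m_1\in\arg\max$ at $B_1$ and $m_2\in\arg\max$ at $B_2>B_1$ with $m_2<m_1$; optimality at $B_1$ gives $\log\underline{\mu}_b(m_1;B_1)\ge\log\underline{\mu}_b(m_2;B_1)$, and adding the strictly-increasing-differences inequality (with sign $(s_{b,m_1}-s_{b,m_2})(\log B_2-\log B_1)>0$) yields $\log\underline{\mu}_b(m_1;B_2)>\log\underline{\mu}_b(m_2;B_2)$, contradicting optimality of $m_2$ at $B_2$ unless $m_1$ is also a maximizer at $B_2$; replacing $m_b^\star(B_2)$ by $m_1$ preserves optimality and the monotone selection.

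The main obstacle is not the comparative-statics core but the bookkeeping around $\kappa(s,\beta)$ and $c_0(b,m)^{\beta - s}$: one must make sure that the $B$-independent part of $\log\underline{\mu}_b(m;B)$ is finite on the candidate set (it is, since $s_{b,m}>\beta$ keeps $\kappa(s_{b,m},\beta)=\beta^\beta(s_{b,m}-\beta)^{s_{b,m}-\beta}s_{b,m}^{-s_{b,m}}>0$ and $c_0(b,m)>0$), and that we only ever compare two admissible $m$'s, so the increasing-differences step never involves an $m$ with $s_{b,m}\le\beta$. I would also note explicitly why $B>1$ (or, more carefully, $B$ large enough that the relevant budget range has $\log B$ ranging over an interval) is harmless: the theorem compares $B_2>B_1$ and only uses $\log B_2>\log B_1$, which holds for any positive $B_2>B_1$ once we work with $\log B$ differences rather than signs. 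A secondary subtlety is the ``nondecreasing $c_0(b,m)$'' hypothesis: it is used only to keep $(\beta - s_{b,m})\log c_0(b,m)$ from manufacturing its own crossing, but since that term is $B$-independent it does not affect increasing differences at all; I would remark that the hypothesis is therefore not even needed for monotonicity of $m_b^\star$ in $B$ (it matters for other statements), and keep the proof to the two-line crossing argument above.
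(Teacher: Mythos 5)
Your proof is correct and is essentially the paper's own argument (Appendix~I): reduce to lines \(\ell_m(t)=s_{b,m}\,t+q_b(m)\) in \(t=\log B\), observe that the slopes \(s_{b,m}\) are strictly increasing in \(m\) on the admissible candidate set because \(\gamma(m)\) is strictly increasing, deduce strictly increasing differences in \((m,t)\), and conclude via the standard single-crossing contradiction (equivalently, Topkis) that the maximizer can be chosen nondecreasing in \(B\). Your additional remark that the hypothesis ``\(c_0(b,m)\) nondecreasing in \(m\)'' is not actually needed is correct: it only shapes the \(B\)-independent intercept \(q_b(m)\), so it cannot affect increasing differences, and indeed the paper's own proof never invokes it.
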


\paragraph{Why this holds.}
Taking logs of \eqref{eq:mu_bm} yields
\[
\log \underline{\mu}_b(m;B)= s_{b,m}\,\log B + q_b(m),
\]
where \(q_b(m)=\log k+\log \kappa(s_{b,m},\beta)+(\beta-s_{b,m})\log c_0(b,m)\) does not depend on \(B\).
Thus, for fixed \(b\), each message length \(m\) defines a line in \(\log B\) with slope \(s_{b,m}\).
Because \(\gamma(m)\) increases with \(m\), the slopes \(s_{b,m}\) are increasing in \(m\).
When choosing among lines with increasing slopes, the maximizing line index can only move in one direction as \(\log B\) increases.
A full proof, stated in terms of increasing differences and upper envelopes, is given in Appendix~I.

\begin{figure}[!ht]
  \centering
  \begin{subfigure}[t]{0.485\linewidth}
    \centering
    \includegraphics[width=\linewidth]{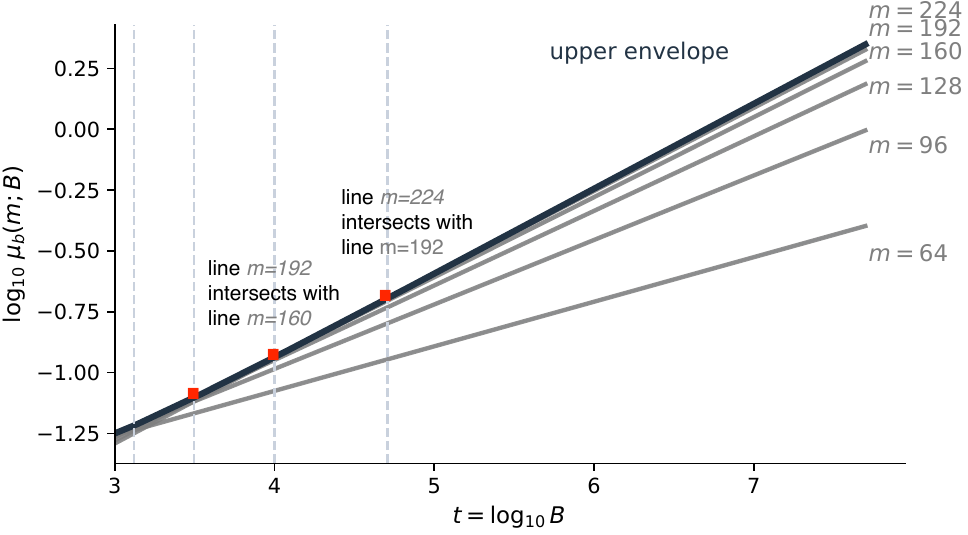}
    \caption{Upper envelope over candidate message lengths.}
  \end{subfigure}
  \hfill
  \begin{subfigure}[t]{0.485\linewidth}
    \centering
    \includegraphics[width=\linewidth]{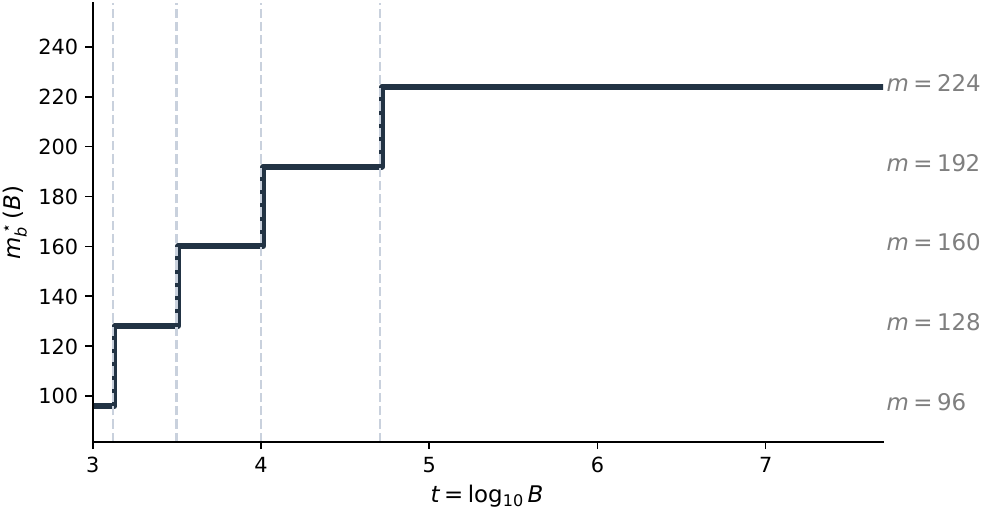}
    \caption{Resulting $m_b^\star(B)$ (monotone step function).}
  \end{subfigure}
\caption{Upper-envelope argument behind the monotone message-length design curve (fixed fan-in $b$).
For each feasible message length $m$ (numbers shown are tokens in this example), we first optimize the per-leaf compute allocation $x$ in the growth-regime surrogate, yielding a closed-form prediction of the form
$\log \underline{\mu}_b(m;B)= s_{b,m}\,\log B + q_b(m)$,
i.e., a line in $t=\log B$ whose slope $s_{b,m}$ increases with $m$ because communication fidelity $\gamma(m)$ improves with longer messages.
\textbf{(a)} We plot these candidate lines for a representative subset of feasible $m$ values and highlight their pointwise maximum (thick curve), i.e., the upper envelope.
Red markers indicate budgets where two candidates tie (line intersections), and the vertical dashed lines project these switch points onto the $t$-axis.
\textbf{(b)} The maximizing message length $m_b^\star(B)$ is therefore a nondecreasing step function of $B$, jumping exactly at the envelope breakpoints.
Algorithm~\ref{alg:envelope} constructs the full envelope efficiently by scanning candidates in increasing $m$ (increasing slope) and maintaining only the active lines and their intersection thresholds.}
\label{fig:envelope_m}
\end{figure}

\paragraph{What the theorem does and does not claim.}
Theorem~\ref{thm:monotone_m} is a statement about the \emph{growth-regime} objective \(\underline{\mu}_b(m;B)\) for a fixed fan-in \(b\).
It does not assert global monotonicity when the optimal fan-in \(b\) itself changes with budget, nor does it rely on the saturation fixed point \(\mu^\star\).
In practice, once a design approaches saturation, increasing \(m\) often remains beneficial because it increases \(\alpha_\rho\) and improves \(\mu^\star\), but the theorem is intentionally limited to a regime where the approximation is provably accurate.

\subsection{A linear-time envelope algorithm for design curves}
\label{sec:envelope_algorithm}

The envelope view in Figure~\ref{fig:envelope_m} yields an efficient algorithm for generating design curves.
For a fixed fan-in \(b\), we want to compute \(m_b^\star(B)\) for a range of budgets, without brute-force search over all \(m\) at every \(B\).

\paragraph{Lines and intersections.}
Let \(t=\log B\).
For each feasible \(m\in\mathcal{M}_b\) with \(s_{b,m}>\beta\) and \(\alpha_\rho(b,m)>1\), define
\begin{equation}
\ell_{b,m}(t) := s_{b,m}\,t + q_b(m),
\label{eq:line_def}
\end{equation}
where \(q_b(m)\) is the intercept from \eqref{eq:mu_bm}.
The maximizer of \(\ell_{b,m}(t)\) over \(m\) is the envelope.
Because \(s_{b,m}\) is increasing in \(m\), the envelope can be computed by a standard monotone upper-hull procedure:
maintain a stack of candidate lines and the \(t\)-values where one line overtakes the previous one.

Given two lines \(\ell_1(t)=s_1 t+q_1\) and \(\ell_2(t)=s_2 t+q_2\) with \(s_1<s_2\), their intersection occurs at
\begin{equation}
t^\star = \frac{q_1-q_2}{s_2-s_1}.
\label{eq:intersection}
\end{equation}
If a newly added line overtakes the previous line earlier than the previous line overtook its predecessor, then the previous line is never optimal and can be removed.
This is the same geometric logic used in convex-hull tricks for dynamic programming.

\begin{algorithm}[t]
\caption{Envelope construction for \(m_b^\star(B)\) (fixed \(b\)).}
\label{alg:envelope}
\small
\begin{algorithmic}[1]
\State \textbf{Enumerate candidates (in increasing \(m\)).}
\State \(M \gets \lfloor W/b \rfloor\). For \(m=1,2,\dots,M\): compute \(\alpha_\rho(b,m)\) and \(s_{b,m}\).
\State Keep \(m\) only if \(\alpha_\rho(b,m)>1\) and \(s_{b,m}>\beta\). For each kept \(m\), form \(\ell_{b,m}(t)=s_{b,m}t+q_b(m)\).

\State \textbf{(Optional robustness) Remove equal-slope duplicates.}
\State If two retained candidates have the same slope \(s\), keep only the one with larger intercept \(q_b(m)\).

\State \textbf{Build the upper envelope (stack).}
\State Maintain a stack of \((m,\ell,\tau)\), where \(\tau\) is the activation time.
\For{retained \(m\) in increasing order}
  \If{stack empty}
    \State push \((m,\ell_{b,m},-\infty)\)
  \Else
    \State let \((m_{\text{top}},\ell_{\text{top}},\tau_{\text{top}})\) be stack top
    \State compute intersection time \(t^\star\) using \(t^\star = \frac{q_{\text{top}}-q_b(m)}{s_{b,m}-s_{\text{top}}}\)
    \While{stack nonempty and \(t^\star \le \tau_{\text{top}}\)}
      \State pop stack; update \((\ell_{\text{top}},\tau_{\text{top}})\); recompute \(t^\star\)
    \EndWhile
    \State push \((m,\ell_{b,m},t^\star)\)
  \EndIf
\EndFor

\State \textbf{Query (one pass over budgets).}
\State For budgets \(B_1<\dots<B_T\), set \(t_i=\log B_i\).
\State Walk a pointer through breakpoints and output the active \(m\) for each \(t_i\).

\end{algorithmic}
\end{algorithm}

\paragraph{Complexity and usage.}
For a fixed fan-in \(b\), the feasible message set has size \(|\mathcal{M}_b|=\lfloor W/b\rfloor\).
Algorithm~\ref{alg:envelope} scans candidates in increasing \(m\) (equivalently increasing slope \(s_{b,m}\)) and builds the upper envelope with a stack in which each candidate line is pushed once and popped at most once.
Thus, envelope construction takes \(O(|\mathcal{M}_b|)\) time and \(O(|\mathcal{M}_b|)\) memory.
Given budgets \(B_1<\cdots<B_T\), the optimal choices \(m_b^\star(B_t)\) can then be returned in \(O(T)\) additional time by a single pass over the envelope breakpoints.
To obtain a global design curve, we run Algorithm~\ref{alg:envelope} for each odd \(b\ge 3\) with \(\mathcal{M}_b\neq\emptyset\) (equivalently \(b\le W\) since \(m\ge 1\)), and then select the best \((b,m)\) per budget using the clipped objective described next.

\subsection{Putting the diagnostics together}
\label{sec:design_checklists}

We now combine compute allocation, monotone communication design, and saturation into a compact workflow.
The inputs are a budget \(B\), a context window \(W\), and a small set of estimated effective quantities, in particular \(\mu_0(x)\), \(\rho\), and \(\gamma(m)\) (Section~\ref{sec:measurable-params}).
The output is a model-based prediction and a set of diagnostics (which constraint is binding), rather than a claim that the abstraction optimizes real agent stacks end-to-end. 
Figure~\ref{fig:design_pipeline} summarizes the workflow of design-diagnostics.

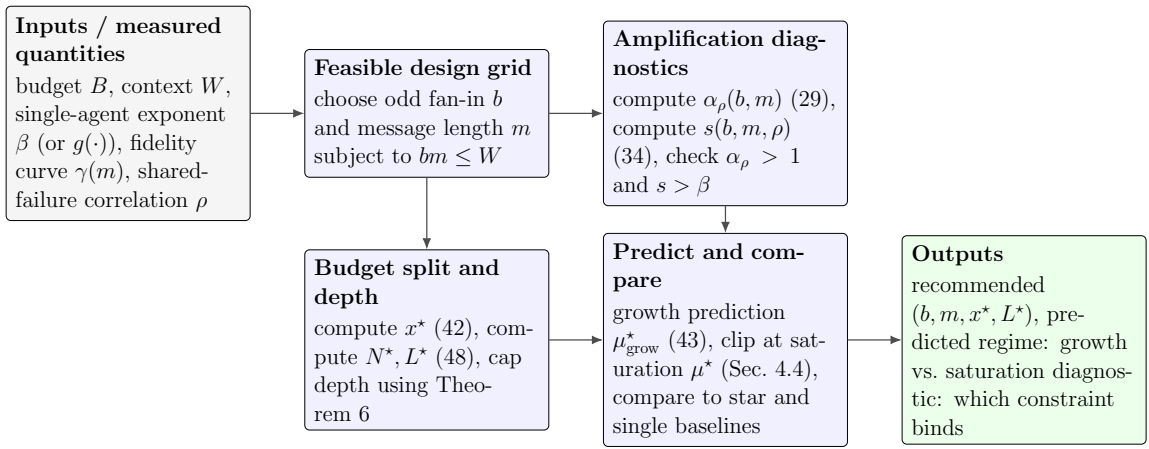
\begin{figure}[!ht]
\centering
\resizebox{\textwidth}{!}{%
\begin{tikzpicture}[font=\Large, >=Latex, node distance=1.6cm and 1.2cm]

\tikzset{
  base/.style={draw, rounded corners=3pt, align=left, inner sep=6pt,
               text width=5.0cm, minimum height=2.8cm},
  inputbox/.style={base, fill=gray!8},
  procbox/.style={base, fill=blue!6},
  outbox/.style={base, fill=green!8},
  arr/.style={-{Latex[length=2.5mm, width=2mm]}, thick, color=black!70}
}

\node[inputbox] (inputs)
  {\textbf{Inputs / measured quantities}\\[2pt]
   budget \(B\), context \(W\),
   single-agent exponent \(\beta\) (or \(g(\cdot)\)), fidelity curve \(\gamma(m)\),
   shared-failure correlation \(\rho\)};

\node[procbox, right=of inputs] (grid)
  {\textbf{Feasible design grid}\\[2pt]
   choose odd fan-in \(b\) and
   message length \(m\)  subject to \(bm\le W\)};

\node[procbox, right=of grid] (exponents)
  {\textbf{Amplification diagnostics}\\[2pt]
   compute \(\alpha_\rho(b,m)\) \eqref{eq:alpha_rho_def},
   compute \(s(b,m,\rho)\) \eqref{eq:organization_exponent},
   check \(\alpha_\rho>1\) and \(s>\beta\)};

\node[procbox, below=of grid] (alloc)
  {\textbf{Budget split and depth}\\[2pt]
   compute \(x^\star\) \eqref{eq:xstar_growth},
   compute \(N^\star,L^\star\) \eqref{eq:design_NL},
   cap depth using Theorem~\ref{thm:mixing_depth}};

\node[procbox, right=of alloc] (predict)
  {\textbf{Predict and compare}\\[2pt]
   growth prediction \(\mu^\star_{\mathrm{grow}}\) \eqref{eq:mu_grow_star},
   clip at saturation \(\mu^\star\) (Sec.~\ref{sec:saturation_binary}),
   compare to star and single baselines};

\node[outbox, right=of predict] (output)
  {\textbf{Outputs}\\[2pt]
   recommended \((b,m,x^\star,L^\star)\),
   predicted regime: growth vs.\ satu\-ration diagnostic: which constraint binds};

\draw[arr] (inputs.east) -- (grid.west);
\draw[arr] (grid.east) -- (exponents.west);
\draw[arr] (grid.south) -- (alloc.north);
\draw[arr] (exponents.south) -- (predict.north);
\draw[arr] (alloc.east) -- (predict.west);
\draw[arr] (predict.east) -- (output.west);

\end{tikzpicture}
}
\caption{A design-diagnostics pipeline within the model.  Measured constraints (\(B,W\)) and calibrated parameters (\(\beta\), \(\gamma(m)\), \(\rho\)) determine which \((b,m)\) are feasible, whether depth amplifies (\(\alpha_\rho>1\)), and whether scale-out can beat scale-up in the growth regime (\(s>\beta\)).  The closed-form allocation \(x^\star\) and implied \((N^\star,L^\star)\) yield a predicted performance that is clipped at the saturation fixed point \(\mu^\star\) and compared against star and single-agent baselines.}
\label{fig:design_pipeline}
\end{figure}

\paragraph{Compare against centralization.}
For message length \(m\), a star can include at most \(N_{\max}(m)=\lfloor W/m\rfloor\) leaves.
When the budget regime of interest does not push beyond this cap, centralization is the natural baseline: it aggregates in one hop and avoids multi-hop loss (Proposition~\ref{prop:centralization}).
When \(N_{\max}(m)\) binds, additional budget cannot increase the number of incorporated leaves in a star, and hierarchy becomes the only way to expose more parallel evidence under the same per-node context limit.

\paragraph{Screen tree candidates and allocate compute.}
For each odd fan-in \(b\ge 3\) and each feasible \(m\) with \(bm\le W\), compute
\(\alpha_\rho(b,m)\) and \(s(b,m)\).
Discard candidates with \(\alpha_\rho(b,m)\le 1\), which collapse in depth (Theorem~\ref{thm:phase_transition}).
For exponent-level budgeted synergy in the growth regime, also discard \(s(b,m)\le\beta\).
For remaining candidates, set the per-leaf compute to the closed form \(x^\star(b,m)\) in \eqref{eq:xstar_growth} (Theorem~\ref{thm:opt_x_growth}) and choose the implied leaf count and depth via \eqref{eq:design_NL}, with depth interpreted through the mixing-depth guarantee of Theorem~\ref{thm:mixing_depth}.
For fixed \(b\), Algorithm~\ref{alg:envelope} yields the monotone budget-indexed communication curve \(m_b^\star(B)\), so the outer search can be performed efficiently.

\paragraph{Clip by saturation.}
In the supercritical regime, the binary recursion converges to a fixed point \(\mu^\star(b,m)\) (computed by iterating \(u\mapsto f_{b,\rho}(\gamma(m)u)\)).
To avoid extrapolating the growth law beyond its range, we evaluate a design using the clipped prediction from Section~\ref{sec:saturation_binary}:
\begin{equation}
\widehat{\mu}(B;b,m)
:=
\min\big\{\mu^\star(b,m),\ \mu_0(x)\,\alpha_\rho(b,m)^L\big\}.
\label{eq:design_clipped}
\end{equation}
At each budget \(B\), we compare this clipped tree prediction to the best feasible star and to the single-agent baseline under the same budget.

\paragraph{Interpretation as diagnostics.}
These checks separate four common reasons why scaling out fails: context saturation of a star (\(N_{\max}(m)\) caps centralization), subcriticality (\(\alpha_\rho\le 1\)), exponent mismatch (\(s\le\beta\)), and saturation (\(\widehat{\mu}\) close to \(\mu^\star\)).
When performance plateaus, the theory points to two levers: improve effective communication fidelity (larger \(\gamma(m)\) at the same cost) or reduce shared failures (smaller \(\rho\) via diversity or verification), rather than increasing depth.

\paragraph{Continuous tasks.}
The same workflow applies in the continuous warm-up with \(\gamma(m)\) replaced by \(\sigma_c^2(m)\) and \(\mu^\star\) replaced by the explicit error floor \(v^\star\) in \eqref{eq:tree_floor}.


\section{Empirical Touchpoints}
\label{sec:experiments}

This paper is primarily theoretical: our goal is to explain \emph{why} budgeted multi-agent scaling amplifies in some regimes and collapses in others, and to make the relevant regime boundaries explicit.
Accordingly, we keep the empirical component modest.
We (i) connect our bottlenecks and regime predictions to recent large-scale matched-budget evaluations reported elsewhere, and (ii) provide a minimal synthetic sanity check that recovers the predicted amplification boundary under the assumed model.

\subsection{Theoretical Mechanisms in Recent Large-Scale Studies}
\label{sec:external_evidence}

Large-scale matched-budget evaluations are computationally prohibitive for academic groups, but recent industrial studies provide high-fidelity data that validates our theoretical predictions. 
Most notably, \citet{kim2025scalingagents} conducted an extensive evaluation of scaling laws for agent systems across multiple benchmarks and architectures. 
Their empirical findings align closely with the qualitative regime boundaries derived in our framework:
\begin{itemize}
  \item \textbf{Context saturation and coordination overhead.} \citet{kim2025scalingagents} observe that simply concatenating agent traces can degrade performance on tool-heavy tasks when coordination overhead consumes the context window. In our framework, this corresponds to the $Nm\le W$ bottleneck: once a centralized aggregator becomes context-saturated, adding agents can dilute usable signal.
  \item \textbf{Saturation at strong baselines.} They report diminishing (and sometimes negative) returns once the single-agent baseline is already strong. This is consistent with our fixed-point analysis: once a hierarchy approaches its saturation point $\mu^\star$, marginal gains from adding depth or leaves are small; improvement requires changing effective parameters (better $\gamma(m)$ or smaller $\rho$), not more aggregation.
  \item \textbf{Shared failures and error cascades.} They emphasize that agents can share failure modes and that naive aggregation can amplify errors, while more structured coordination can mitigate this. In our framework, shared failures are summarized by $\rho$: higher $\rho$ pushes $\alpha_\rho$ downward and can drive hierarchies into the subcritical regime where depth washes out information.
\end{itemize}

By providing the mathematical derivation for these empirically observed regimes, our theory explains \emph{why} scaling fails when it does, suggesting that future improvements must come from altering the effective parameters, specifically increasing $\gamma(m)$ or decreasing $\rho$, rather than simply adding more agents.

\subsection{Synthetic sanity check: recovering the amplification boundary}
\label{sec:synthetic_experiments}

Synthetic experiments provide a controlled setting where the ground-truth parameters are known and the phase boundary can be probed directly.
Our goal here is not to optimize absolute performance, but to verify that the qualitative prediction, i.e., amplification versus collapse governed by \(\alpha_\rho\), is recovered under the generative model analyzed in Section~\ref{sec:binary_core}.

We include one lightweight Monte Carlo study under the \(\rho\)-shared binary model.
We fix \(b=5\), \(\gamma=0.6\), \(\mu_0=0.05\), sweep \(\rho\in[0,0.8]\), and compare the recursion prediction \(\mu_L\) at depths \(L\in\{10,30\}\) to Monte Carlo estimates (\(n=50{,}000\)).
Figure~\ref{fig:synthetic_phase_transition} shows close agreement and a sharpening transition around the predicted boundary \(\alpha_\rho=1\).

\begin{figure}[t]
  \centering
  \includegraphics[width=0.92\linewidth]{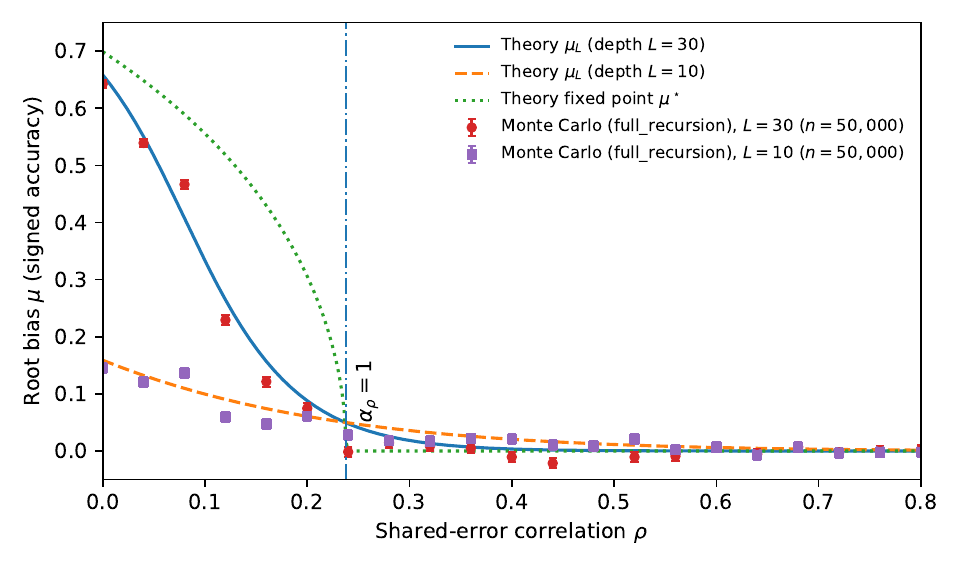}
  \caption{Synthetic Monte Carlo sanity check for the binary majority tree under the \(\rho\)-shared model and a binary symmetric channel with reliability \(\gamma\).
We plot the root bias \(\mu_L\) as a function of \(\rho\) for \(b=5\), \(\gamma=0.6\), and \(\mu_0=0.05\).
Curves show recursion predictions at finite depth; markers show Monte Carlo estimates (\(n=50{,}000\)).
The vertical line indicates the threshold \(\alpha_\rho=1\), separating amplification (\(\alpha_\rho>1\)) from collapse (\(\alpha_\rho\le 1\)) in the deep limit.}
  \label{fig:synthetic_phase_transition}
\end{figure}

For continuous tasks, the same role can be played by a simple correlated-Gaussian generator, where the closed-form expressions of Section~\ref{sec:cont_closed_form} make correlation and communication floors explicit.

\section{Discussion and Limitations}
\label{sec:discussion}

We asked when scaling out a budgeted multi-agent system yields synergy and when it fails.
The framework is intentionally minimal: it treats an agent as a black-box solver with an allocated compute budget, models inter-agent communication as a controllable lossy channel, and summarizes shared failures through an effective correlation parameter.
This minimalism makes the regime boundaries transparent, but it also defines what the theory does and does not claim.

While our results provide a rigorous asymptotic foundation for agent scaling, the practical utility of these bounds hinges on the empirical estimation of the shared-failure correlation $\rho$ and the organization exponent $s$. In practice, $\rho$ serves as a `homogeneity metric' for a given model ensemble; a high $\rho$ suggests that the agents are inheriting the same inductive biases or training data flaws, thereby collapsing the effective fan-in $b$. Future work should focus on developing diagnostic probes—short, high-entropy test batteries—to calibrate these parameters for specific LLM families. By doing so, the Phase Transition $\alpha_{\rho}$ can be transformed from a theoretical threshold into a real-time `stopping rule' for determining whether further tree depth will yield diminishing returns or catastrophic signal decay.

\paragraph{Relationship to prior results}
Our binary phase transition echoes reconstruction thresholds for broadcasting on trees \citep{kesten1966additional,evans2000broadcasting,mossel2001reconstruction}, where information can or cannot propagate from leaves to the root depending on a single scalar.
Two differences are central for agent-system design.
First, our ``channel'' is partly under design control: message length trades off fidelity \(\gamma(m)\) against budget and context constraints, so the relevant threshold depends on an explicit communication choice.
Second, we couple propagation to a fixed-budget allocation problem: an amplifiable hierarchy is synergistic only when its organization exponent outpaces single-agent scaling.

Recent work offers complementary perspectives on information flow in agent systems.
\citet{he2025infoagents} cast agentic architectures as compressor--predictor pairs and propose mutual information between context and compression as a task-independent proxy for information survival; 
their finding that scaling the compressor can dominate scaling the predictor aligns with our emphasis on communication fidelity \(\gamma(m)\) as a first-order bottleneck, though our framework adds explicit budget constraints and multi-hop aggregation structure.
\citet{li2026skillselection} study skill selection in single-agent systems and observe a phase transition: selection accuracy remains stable up to a critical library size then drops sharply, with semantic confusability among skills playing a central role.
This echoes our amplification--collapse dichotomy, but at a different interface: where we model information loss across agents, they model selection confusion within a single agent's skill routing.
Together, these results suggest that sharp thresholds, whether in inter-agent communication, intra-agent skill selection, or hierarchical aggregation, may be a recurring motif in bounded-resource AI systems.

\paragraph{What the framework explains (and what it does not)}
The framework is designed to explain qualitative regime changes that are otherwise hard to diagnose: why stars saturate under context limits, why chains compound loss, why trees can either amplify or collapse depending on \(\alpha_\rho\), and why correlated failures impose floors that make large ensembles behave like ``one agent with shared blind spots.''
It also clarifies what additional sophistication must accomplish: any useful complexity must effectively increase communication fidelity, reduce shared failures, or relax the fan-in constraint.

At the same time, the model is not a faithful simulator of real agent stacks.
Parameters such as \(\rho\) and \(\gamma(m)\) are effective summaries that can vary with depth, prompts, tools, and task difficulty; real communication is far richer than a one-bit abstraction; and practical budget accounting can differ across implementations.
For these reasons, we view the theory as a coordination diagnostic, i.e., predicting which constraint is binding and which direction should help, rather than as a source of precise point predictions.
Crucially, the alignment between our derived bottlenecks and independent large-scale observations \citep{kim2025scalingagents} suggests that the effective parameters defined here capture the structural essence of the scaling problem, rendering the theory robust to changes in the underlying base models.

\paragraph{How stronger protocols fit the framework}
Protocols such as debate, critique, verification, or self-consistency can be interpreted as mechanisms that change the effective parameters of the model \citep{irving2018debate,christiano2018amplifying,du2024multiagentdebate,wang2023selfconsistency}.
For example, structured verification can reduce effective shared failures (\(\rho\)), while richer message schemas can improve effective fidelity (\(\gamma(m)\)); both, however, typically increase coordination cost.
The theory suggests a concrete way to evaluate such protocols: measure how much they move the effective bottlenecks per unit budget, and check whether they move the system across the supercritical boundary \(\alpha_\rho>1\) or the synergy boundary \(s>\beta\).

\paragraph{Extensions beyond our scope}
Several extensions are natural.
Allowing heterogeneity across leaves (different models, prompts, or tools) makes \(\rho\) and scaling task- and agent-dependent; a tractable next step is to treat \(\rho\) as depth-dependent or to model mixtures of agent types.
More general graphs and multi-round interaction can be analyzed by unrolling protocols into computation graphs, but doing so requires accounting for how added edges change effective dependence as well as cost.
Finally, our strongest binary bounds rely on one-bit-style messages and majority aggregation; allowing richer messages or confidence reports changes the amplification map and may relax universal exponent limits.
In all cases, the guiding principle remains the same: additional structure helps only insofar as it relaxes the binding constraints (context, communication loss, or shared failures) under a fixed budget.

\section{Conclusion}
\label{sec:conclusion}

Multi-agent systems can exhibit synergy, but under a fixed budget scaling out is not monotone: it can help, saturate, or collapse.
We argued that these outcomes become predictable once we make the binding constraints explicit: finite context windows, lossy communication, and shared failures, and summarize an environment by a small set of measurable effective parameters.
In a continuous warm-up we derived closed-form risks for star/chain/tree organizations and exposed correlation and communication floors.
For binary success/failure tasks, we proved a sharp amplification--collapse phase transition for deep majority trees and characterized the resulting fixed-point saturation and mixing depth.
In the amplifying regime we introduced an organization exponent \(s\) governing small-signal growth and showed that budgeted synergy in the growth range occurs exactly when \(s>\beta\), yielding closed-form compute allocation and explicit budget thresholds.
We also translated these results into simple design diagnostics within the model.
The broader message is that topology is not magic.
Hierarchies are useful because context limits can make full centralization infeasible, and they succeed only when communication is sufficiently faithful and shared failures are sufficiently weak.
We hope this work helps move agent-system design from ad hoc heuristics toward principled, testable coordination theories.

\clearpage
\acks{The authors gratefully acknowledge funding from Canada CIFAR AI
Chair Program and the Canada NSERC Discovery Grant program.}
\appendix

\section*{Proof Roadmap and Dependency Graph}
This appendix provides self-contained proofs for the main theoretical statements in the paper.
For convenience, we list where each result is proved:
\begin{itemize}
  \item Lemma~\ref{lem:fb_props} (properties of the majority map): Appendix~A.
  \item Lemma~\ref{lem:correlated_map} (closed form under the \(\rho\)-shared model): Appendix~B.
  \item Theorem~\ref{thm:phase_transition} (phase transition for deep majority trees): Appendix~C.
  \item Theorem~\ref{thm:small_signal_band} (small-signal amplification band), Theorem~\ref{thm:opt_x_growth} (growth-regime compute allocation),
        and Corollary~\ref{cor:Bcrit} (budget threshold): Appendix~D.
  \item Theorem~\ref{thm:mixing_depth} (finite-depth convergence / mixing depth): Appendix~E.
  \item Proposition~\ref{prop:centralization} and Proposition~\ref{prop:chain_decay} (topology baselines): Appendix~F.
  \item Theorem~\ref{thm:s_upper_half} (universal upper bound \(s\le \tfrac12\)): Appendix~H.
  \item Theorem~\ref{thm:monotone_m} and Algorithm~\ref{alg:envelope} (envelope construction): Appendix~I.
\end{itemize}
Appendix~G collects derivations for the continuous warm-up, and Appendix~\ref{app:calibration} provides experimental templates for estimating the effective parameters.

\section{Properties of Majority Maps}
\label{app:majority}

Throughout, let \(b\ge 3\) be an odd integer and write \(b=2k+1\) with \(k\in\mathbb{Z}_+\).
For \(u\in[-1,1]\), define \(p=(1+u)/2\in[0,1]\) and recall the majority map
\[
f_b(u)=2\,\Pr\!\Big(\mathrm{Bin}(b,p)\ge k+1\Big)-1.
\]

\subsection{A derivative identity for binomial tails}
\label{app:binom_tail_derivative}

We start with a calculus identity that makes monotonicity and concavity transparent.

\begin{lemma}[Derivative of the symmetric binomial tail]
\label{lem:binom_tail_derivative}
Let \(b=2k+1\) and define
\[
F(p):=\Pr\!\big(\mathrm{Bin}(b,p)\ge k+1\big)
=\sum_{j=k+1}^{2k+1}\binom{2k+1}{j}p^{j}(1-p)^{2k+1-j}.
\]
Then \(F\) is differentiable on \((0,1)\) and
\begin{equation}
F'(p)=(2k+1)\binom{2k}{k}\,p^{k}(1-p)^{k}.
\label{eq:Fprime_closed}
\end{equation}
\end{lemma}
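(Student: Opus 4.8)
The plan is to prove \eqref{eq:Fprime_closed} by differentiating $F$ term by term and observing that the resulting sum telescopes. Writing $n=2k+1$, each summand of
$$F(p)=\sum_{j=k+1}^{n}\binom{n}{j}p^{j}(1-p)^{n-j}$$
is a polynomial in $p$, so $F$ is a polynomial and in particular differentiable on all of $\mathbb{R}$ (hence certainly on $(0,1)$), and we may differentiate inside the finite sum. By the product rule, the derivative of the $j$-th term is $\binom{n}{j}\bigl[j\,p^{j-1}(1-p)^{n-j}-(n-j)\,p^{j}(1-p)^{n-j-1}\bigr]$.

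Next I would rewrite this using the two elementary identities $j\binom{n}{j}=n\binom{n-1}{j-1}$ and $(n-j)\binom{n}{j}=n\binom{n-1}{j}$, which turns the $j$-th derivative term into $n\bigl[\binom{n-1}{j-1}p^{j-1}(1-p)^{n-j}-\binom{n-1}{j}p^{j}(1-p)^{n-j-1}\bigr]$. Setting $a_j:=\binom{n-1}{j-1}p^{j-1}(1-p)^{n-j}$, one checks directly that the bracket is precisely $a_j-a_{j+1}$, so the full sum collapses:
$$F'(p)=n\sum_{j=k+1}^{n}\bigl(a_j-a_{j+1}\bigr)=n\bigl(a_{k+1}-a_{n+1}\bigr).$$
Finally I would evaluate the two surviving endpoints: $a_{n+1}=\binom{n-1}{n}p^{n}(1-p)^{-1}=0$ because $\binom{n-1}{n}=0$, while $a_{k+1}=\binom{n-1}{k}p^{k}(1-p)^{n-k-1}$, and substituting $n=2k+1$ gives $\binom{2k}{k}p^{k}(1-p)^{k}$. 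Therefore $F'(p)=(2k+1)\binom{2k}{k}p^{k}(1-p)^{k}$, which is \eqref{eq:Fprime_closed}.

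I do not expect any genuine obstacle here; the computation is routine. The only thing to watch is index bookkeeping: one must index the telescope so that the leftover lower term is $a_{k+1}$ (which survives and produces the central binomial coefficient) and the leftover upper term is $a_{n+1}$ (which vanishes because of the zero binomial coefficient), and one must keep the exponents of $(1-p)$ aligned when applying the $\binom{n-1}{j}$ identities. An alternative, slightly slicker route is the probabilistic observation that $\tfrac{d}{dp}\Pr(\mathrm{Bin}(n,p)\ge r)$ equals $n$ times the pmf of $\mathrm{Bin}(n-1,p)$ at $r-1$; but unwinding that identity amounts to the same telescoping, so I would present the direct computation above as the self-contained argument.
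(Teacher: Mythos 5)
Your proof is correct and takes essentially the same approach as the paper's: differentiate term-by-term, apply the absorption identities $j\binom{n}{j}=n\binom{n-1}{j-1}$ and $(n-j)\binom{n}{j}=n\binom{n-1}{j}$, and observe that the sum telescopes, leaving only the boundary term at $j=k+1$. The only cosmetic difference is that you name the telescoping quantity $a_j$ explicitly and note that the upper boundary term vanishes because $\binom{n-1}{n}=0$, whereas the paper states the cancellation in prose; the substance is identical.
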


\begin{proof}
Differentiate term-by-term for \(p\in(0,1)\).
Using the identity \(j\binom{2k+1}{j}=(2k+1)\binom{2k}{j-1}\) and \((2k+1-j)\binom{2k+1}{j}=(2k+1)\binom{2k}{j}\), we obtain
\begin{align*}
\frac{d}{dp}\Big[\binom{2k+1}{j}p^{j}(1-p)^{2k+1-j}\Big]
&=\binom{2k+1}{j}\Big(jp^{j-1}(1-p)^{2k+1-j}-(2k+1-j)p^{j}(1-p)^{2k-j}\Big)\\
&=(2k+1)\Big[\binom{2k}{j-1}p^{j-1}(1-p)^{2k+1-j}-\binom{2k}{j}p^{j}(1-p)^{2k-j}\Big].
\end{align*}
Summing over \(j=k+1,\dots,2k+1\) telescopes: the negative term at index \(j\) cancels the positive term at index \(j+1\).
Only the first positive term (at \(j=k+1\)) remains, yielding
\[
F'(p)=(2k+1)\binom{2k}{k}p^{k}(1-p)^{k}.
\]
\end{proof}

\subsection{Proof of Lemma~\ref{lem:fb_props}}
\label{app:fb_props_proof}

\begin{proof}[Proof of Lemma~\ref{lem:fb_props}]
We prove each item.

\paragraph{(1) Oddness, monotonicity, and endpoints.}
Let \(p=(1+u)/2\).
Flipping the sign of the bias corresponds to sending \(p\mapsto 1-p\), i.e., swapping ``correct'' and ``incorrect'' votes.
By symmetry of the binomial distribution,
\[
\Pr\!\big(\mathrm{Bin}(b,1-p)\ge k+1\big)=\Pr\!\big(\mathrm{Bin}(b,p)\le k\big)=1-\Pr\!\big(\mathrm{Bin}(b,p)\ge k+1\big),
\]
so \(f_b(-u)=2(1-F(p))-1=-f_b(u)\), hence \(f_b\) is odd.
Also \(f_b(0)=0\) and \(f_b(1)=1\) are immediate from the definition.

To see that \(f_b\) is increasing on \([0,1]\), note that \(u\mapsto p=(1+u)/2\) is increasing and, for fixed \(b\) and threshold \(k+1\), the binomial tail \(p\mapsto F(p)\) is increasing.
Formally, Lemma~\ref{lem:binom_tail_derivative} gives \(F'(p)\ge 0\) on \((0,1)\), hence \(F\) is nondecreasing and so is \(f_b\).

\paragraph{(2) Concavity on \([0,1]\).}
Differentiate \(f_b(u)=2F((1+u)/2)-1\).
Since \(dp/du=1/2\), we have
\begin{equation}
f_b'(u)=F'\!\Big(\frac{1+u}{2}\Big).
\label{eq:fbprime_Fprime}
\end{equation}
Plugging \eqref{eq:Fprime_closed} into \eqref{eq:fbprime_Fprime} yields, for \(u\in(-1,1)\),
\begin{equation}
f_b'(u)
=(2k+1)\binom{2k}{k}\Big(\frac{1+u}{2}\Big)^{k}\Big(\frac{1-u}{2}\Big)^{k}
=
\frac{(2k+1)\binom{2k}{k}}{2^{2k}}\,(1-u^2)^{k}.
\label{eq:fbprime_closed_u}
\end{equation}
The right-hand side is nonnegative and strictly decreasing for \(u\in(0,1)\) (because \((1-u^2)^k\) strictly decreases on \((0,1)\)).
Therefore \(f_b'\) is nonincreasing on \([0,1]\), which implies that \(f_b\) is concave on \([0,1]\) (and in fact strictly concave on \((0,1)\) for \(b\ge 3\)).

\paragraph{(3) Derivative at the origin.}
Setting \(u=0\) in \eqref{eq:fbprime_closed_u} gives
\[
f_b'(0)=\frac{(2k+1)\binom{2k}{k}}{2^{2k}}
=
\frac{b\binom{b-1}{(b-1)/2}}{2^{b-1}},
\]
which is exactly \eqref{eq:fb_prime0}.
\end{proof}

\paragraph{A useful corollary: majority improves independent votes.}
Concavity on \([0,1]\) with \(f_b(0)=0\) and \(f_b(1)=1\) implies \(f_b(u)\ge u\) for all \(u\in[0,1]\), with strict inequality for \(u\in(0,1)\) because \(f_b\) is strictly concave (for \(b\ge 3\)).
We use this fact in Appendix~C when treating the \(\gamma=1\) case.

\section{Correlated Aggregation via the \texorpdfstring{$\rho$}{rho}-Shared Model}
\label{app:correlated}

Recall Definition~\ref{def:rho_shared}.
Let \(S_i:=V_iY\in\{-1,+1\}\) denote signed correctness.

\subsection{Relationship to pairwise correlation}
\label{app:rho_pairwise}

\begin{lemma}[Pairwise correlation equals \(\rho\)]
\label{lem:rho_pairwise}
Under Definition~\ref{def:rho_shared}, for any \(i\neq j\),
\[
\mathrm{Corr}(S_i,S_j)=\rho.
\]
\end{lemma}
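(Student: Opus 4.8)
The plan is to reduce everything to the signed-correctness variables $S_i = V_i Y \in \{-1,+1\}$ and exploit the mixture structure of Definition~\ref{def:rho_shared}. First I would introduce the latent mode indicator $M \sim \mathrm{Bernoulli}(\rho)$ (proof-local notation), so that conditional on $M=1$ all the $S_i$ coincide with a single common variable $S$ satisfying $\mathbb{E}[S]=u$, while conditional on $M=0$ the $S_i$ are i.i.d.\ with mean $u$. The key elementary observation is that $S_i^2 = 1$ deterministically, so once the marginal mean is known to be $u$, the marginal variance is immediately $\mathrm{Var}(S_i) = 1-u^2$.

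Next I would compute the marginal mean and the cross moment by conditioning on $M$. By the law of total expectation, $\mathbb{E}[S_i] = \rho\,\mathbb{E}[S\mid M=1] + (1-\rho)\,\mathbb{E}[S_i\mid M=0] = u$ in either mode, so $\mathbb{E}[S_i]=u$. For the product, in the shared mode $S_i = S_j$, hence $S_i S_j = S^2 = 1$ and $\mathbb{E}[S_i S_j\mid M=1]=1$; in the independent mode $S_i$ and $S_j$ are independent with mean $u$, hence $\mathbb{E}[S_i S_j\mid M=0] = u^2$. Combining, $\mathbb{E}[S_i S_j] = \rho + (1-\rho)u^2$, so $\mathrm{Cov}(S_i,S_j) = \rho + (1-\rho)u^2 - u^2 = \rho(1-u^2)$. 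Dividing by $\mathrm{Var}(S_i)=1-u^2$ yields $\mathrm{Corr}(S_i,S_j)=\rho$ whenever $|u|<1$; when $|u|=1$ the variables are almost surely constant and the correlation is vacuous, so we restrict to $|u|<1$.

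The only point I expect to require a word of care is the claim that in the independent mode $S_i$ and $S_j$ are \emph{unconditionally} independent with mean $u$. In Definition~\ref{def:rho_shared} the votes $V_i$ are declared conditionally i.i.d.\ given $Y$ with $\mathbb{E}[V_iY]=u$; writing $S_i = V_iY$ and using $Y^2=1$ gives $\mathbb{E}[S_iS_j\mid Y] = \mathbb{E}[V_i\mid Y]\,\mathbb{E}[V_j\mid Y]\cdot Y^2 = (\mathbb{E}[V_i\mid Y])^2$, and under the standard weak-agent symmetry that the law of signed correctness does not depend on the value of $Y$ (i.e.\ $\mathbb{E}[S_i\mid Y]=u$), this equals $u^2$ almost surely, so $\mathbb{E}[S_iS_j]=u^2$ as used above. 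I would note this translation explicitly, since it is the only nonmechanical step; it can equivalently be sidestepped by reading Definition~\ref{def:rho_shared} directly in terms of the $S_i$, which is the form in which the model is used elsewhere in the paper.
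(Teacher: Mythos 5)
Your proposal is correct and follows essentially the same route as the paper's Appendix~B proof: condition on the shared versus independent mode, compute $\mathbb{E}[S_iS_j]=\rho+(1-\rho)u^2$, and divide the resulting covariance $\rho(1-u^2)$ by the variance $1-u^2$. The extra care you take about conditional-given-$Y$ versus unconditional independence of the $S_i$ in the independent mode is a valid refinement that the paper's one-line proof glosses over; it is resolved exactly as you suggest, either via the symmetry $\mathbb{E}[S_i\mid Y]=u$ a.s.\ or by reading Definition~\ref{def:rho_shared} directly at the level of the signed-correctness variables.
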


\begin{proof}
By construction, \(\mathbb{E}[S_i]=u\) for every \(i\), so \(\mathrm{Var}(S_i)=1-u^2\).
Moreover,
\[
\mathbb{E}[S_iS_j]
=
\rho\cdot 1 + (1-\rho)\cdot \mathbb{E}[S_i]\mathbb{E}[S_j]
=
\rho + (1-\rho)u^2.
\]
Thus \(\mathrm{Cov}(S_i,S_j)=\mathbb{E}[S_iS_j]-u^2=\rho(1-u^2)\), and dividing by \(\sqrt{\mathrm{Var}(S_i)\mathrm{Var}(S_j)}=1-u^2\) yields \(\mathrm{Corr}(S_i,S_j)=\rho\).
\end{proof}

\subsection{Proof of Lemma~\ref{lem:correlated_map}}
\label{app:correlated_map_proof}

\begin{proof}[Proof of Lemma~\ref{lem:correlated_map}]
Condition on the mixture event in Definition~\ref{def:rho_shared}.
With probability \(\rho\), all child votes equal a common \(V\) with \(\mathbb{E}[VY]=u\), and majority returns \(V\), hence output bias \(u\).
With probability \(1-\rho\), the votes are i.i.d.\ with bias \(u\), so the output bias is \(f_b(u)\) by definition.
Taking expectation over the mixture yields
\[
f_{b,\rho}(u)=\rho\,u+(1-\rho)\,f_b(u),
\]
which is \eqref{eq:fbrho_def}.
Since \(f_b\) is increasing and concave on \([0,1]\) (Lemma~\ref{lem:fb_props}), the mixture \(f_{b,\rho}\) is also increasing and concave on \([0,1]\).
Differentiating at the origin gives \eqref{eq:fbrho_prime0}.
\end{proof}

\section{Binary Phase Transition Proofs}
\label{app:phase_transition}

We prove Theorem~\ref{thm:phase_transition}.
Throughout, fix odd \(b\ge 3\), \(\rho\in[0,1)\), and \(\gamma\in(0,1]\), and define
\[
T(u)=f_{b,\rho}(\gamma u)=\rho\gamma u+(1-\rho)f_b(\gamma u),\qquad u\in[0,1].
\]
By Lemma~\ref{lem:fb_props} and Lemma~\ref{lem:correlated_map}, \(T\) is continuous, increasing, and concave on \([0,1]\), with \(T(0)=0\).
Because \(1-\rho>0\) and \(f_b\) is strictly concave on \((0,1)\), the map \(T\) is strictly concave on \((0,1)\).

\subsection{A ratio monotonicity lemma}
\label{app:ratio_monotone}

\begin{lemma}[Concavity implies decreasing ratio]
\label{lem:ratio_monotone}
Let \(T:[0,1]\to\mathbb{R}_+\) be concave with \(T(0)=0\).
Then the ratio \(r(u):=T(u)/u\) is nonincreasing on \((0,1]\).
\end{lemma}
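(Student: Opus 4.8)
The plan is to prove the inequality directly from the definition of concavity, exploiting that $0$ is an endpoint of the domain and that $T(0)=0$. Fix any two points $0<u_1<u_2\le 1$; the goal is to show $r(u_1)\ge r(u_2)$, i.e.\ $T(u_1)/u_1\ge T(u_2)/u_2$. The key observation is that $u_1$ is a convex combination of the endpoints $0$ and $u_2$ of the segment $[0,u_2]\subseteq[0,1]$: setting $\lambda:=u_1/u_2\in(0,1)$, we have $u_1=\lambda\,u_2+(1-\lambda)\cdot 0$.

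Applying concavity of $T$ on $[0,1]$ at this convex combination gives
\[
T(u_1)\;\ge\;\lambda\,T(u_2)+(1-\lambda)\,T(0).
\]
Using $T(0)=0$, the right-hand side is just $\lambda\,T(u_2)=(u_1/u_2)\,T(u_2)$. Dividing both sides by $u_1>0$ yields $T(u_1)/u_1\ge T(u_2)/u_2$, that is, $r(u_1)\ge r(u_2)$. Since $u_1<u_2$ were arbitrary elements of $(0,1]$, this establishes that $r$ is nonincreasing on $(0,1]$.

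There is essentially no obstacle here: the whole content is the one-line convex-combination identity for $u_1$, after which concavity plus the normalization $T(0)=0$ finishes it immediately. The only points requiring a moment's care are that concavity is assumed on the \emph{closed} interval $[0,1]$, so that the endpoint $0$ is a legitimate participant in the convex combination, and that $u_2\le 1$ keeps the combination inside the domain; both are part of the hypotheses. If a strict version were wanted under strict concavity of $T$ on $(0,1)$, the same argument delivers strict inequality $r(u_1)>r(u_2)$ whenever $u_1\in(0,u_2)$, but the lemma as stated needs only the non-strict form, which is what the subsequent phase-transition argument invokes through the limit $r(0^+)=T'(0)$.
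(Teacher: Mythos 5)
Your proof is correct and is essentially identical to the paper's: the same one-line convex-combination identity $u_1 = \lambda u_2 + (1-\lambda)\cdot 0$ with $\lambda = u_1/u_2$, followed by concavity and the normalization $T(0)=0$.
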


\begin{proof}
Fix \(0<u<v\le 1\).
Concavity gives
\[
T(u)=T\Big(\frac{u}{v}v+\Big(1-\frac{u}{v}\Big)0\Big)
\ge \frac{u}{v}T(v)+\Big(1-\frac{u}{v}\Big)T(0)
=\frac{u}{v}T(v).
\]
Dividing by \(u>0\) yields \(T(u)/u\ge T(v)/v\).
\end{proof}

\subsection{Proof of Theorem~\ref{thm:phase_transition}}
\label{app:phase_transition_proof}

\begin{proof}[Proof of Theorem~\ref{thm:phase_transition}]
Let \(\alpha_\rho=T'(0)=\gamma f_{b,\rho}'(0)\) as in \eqref{eq:alpha_rho_def}.
Define \(h(u):=T(u)-u\).
Since \(T\) is concave, \(h\) is concave (and in fact strictly concave on \((0,1)\)) with \(h(0)=0\) and \(h'(0)=\alpha_\rho-1\).

\paragraph{(1) Subcritical collapse: \(\alpha_\rho\le 1\).}
If \(\alpha_\rho\le 1\), then \(h'(0)\le 0\).
Because \(h\) is concave and \(h(0)=0\), for any \(u\in[0,1]\) we have the supporting-line bound
\[
h(u)\le h(0)+h'(0)u \le 0.
\]
Thus \(T(u)\le u\) on \([0,1]\).
The recursion \(\mu_{t+1}=T(\mu_t)\) is therefore monotone nonincreasing and bounded below by 0, hence it converges to some \(\bar{\mu}\in[0,1]\).
By continuity of \(T\), the limit is a fixed point: \(\bar{\mu}=T(\bar{\mu})\), i.e., \(h(\bar{\mu})=0\).

We now show \(\bar{\mu}=0\).
If \(\bar{\mu}>0\), then \(h(\bar{\mu})=0\) and \(h(u)\le 0\) for all \(u\in[0,1]\).
Concavity then forces \(h(u)=0\) for all \(u\in[0,\bar{\mu}]\): indeed, for any \(0<u<\bar{\mu}\),
\[
h(u)\ge \frac{u}{\bar{\mu}}h(\bar{\mu})+\Big(1-\frac{u}{\bar{\mu}}\Big)h(0)=0,
\]
while also \(h(u)\le 0\), hence \(h(u)=0\).
This contradicts strict concavity of \(h\) on \((0,1)\) unless \(\bar{\mu}=0\).
Therefore \(\mu_t\to 0\).

\paragraph{(2) Supercritical amplification: \(\alpha_\rho>1\).}
Assume \(\alpha_\rho>1\), so \(h'(0)>0\) and hence \(h(u)>0\) for all sufficiently small \(u>0\).
Next, we show that \(h(1)\le 0\), with strict inequality when \(\gamma<1\).
Indeed,
\[
h(1)=T(1)-1=f_{b,\rho}(\gamma)-1.
\]
If \(\gamma=1\), then \(f_{b,\rho}(1)=1\) so \(h(1)=0\).
If \(\gamma<1\), then \(0\le \gamma<1\) and \(f_b(\gamma)<1\) (majority cannot be perfectly correct when each vote has bias \(<1\)), hence
\(f_{b,\rho}(\gamma)=\rho\gamma+(1-\rho)f_b(\gamma)<\rho+(1-\rho)\cdot 1=1\), so \(h(1)<0\).

Since \(h\) is continuous, \(h(0)=0\), \(h(u)>0\) for small \(u>0\), and \(h(1)\le 0\), there exists at least one root \(\mu^\star\in(0,1]\) with \(h(\mu^\star)=0\), i.e., \(T(\mu^\star)=\mu^\star\).
Uniqueness follows from strict concavity: if there were two distinct roots in \((0,1]\), then \(h\) would have to be linear on the interval between them, contradicting strict concavity.
Thus the fixed point \(\mu^\star\in(0,1]\) is unique.

We now show global convergence to \(\mu^\star\) for any \(\mu_0\in(0,1]\).
By Lemma~\ref{lem:ratio_monotone}, the ratio \(r(u)=T(u)/u\) is nonincreasing on \((0,1]\).
Because \(r(0^+)=\alpha_\rho>1\) and \(r(\mu^\star)=1\), we have \(r(u)>1\) for \(u\in(0,\mu^\star)\) and \(r(u)<1\) for \(u\in(\mu^\star,1]\).
Equivalently,
\[
u\in(0,\mu^\star)\Rightarrow T(u)>u,\qquad
u\in(\mu^\star,1]\Rightarrow T(u)<u.
\]
If \(\mu_0\in(0,\mu^\star)\), then \(\{\mu_t\}\) is strictly increasing and bounded above by \(\mu^\star\), hence converges to some \(\bar{\mu}\le\mu^\star\); by continuity, \(\bar{\mu}\) is a fixed point, so \(\bar{\mu}=\mu^\star\).
If \(\mu_0\in(\mu^\star,1]\), then \(\{\mu_t\}\) is strictly decreasing and bounded below by \(\mu^\star\), hence converges to a fixed point \(\bar{\mu}\ge \mu^\star\), again forcing \(\bar{\mu}=\mu^\star\).
This proves \(\mu_t\to\mu^\star\) for all \(\mu_0\in(0,1]\).

Finally, the statements about \(\mu^\star\) follow from the above boundary cases:
if \(\gamma=1\), then \(h(1)=0\) and the unique positive fixed point is \(\mu^\star=1\);
if \(\gamma<1\), then \(h(1)<0\) so \(\mu^\star<1\).
\end{proof}

\paragraph{A stability fact used later.}
At the attracting fixed point \(\mu^\star\), we necessarily have \(T'(\mu^\star)<1\).
Indeed, since \(h(\mu^\star)=0\) and \(h\) is concave with \(h(u)>0\) for \(u<\mu^\star\), the tangent line at \(\mu^\star\) must have strictly negative slope: \(h'(\mu^\star)<0\), i.e., \(T'(\mu^\star)<1\).
We use this to obtain geometric convergence in Appendix~E.

\section{Small-Signal Bounds and Budget Thresholds}
\label{app:small_signal_budget}

This section proves Theorem~\ref{thm:small_signal_band}, Theorem~\ref{thm:opt_x_growth}, and Corollary~\ref{cor:Bcrit}.

\subsection{Linear-band bounds and proof of Theorem~\ref{thm:small_signal_band}}
\label{app:small_signal_band_proof}

Recall \(T(u)=f_{b,\rho}(\gamma u)\), which is concave on \([0,1]\) with \(T(0)=0\) and derivative \(\alpha_\rho=T'(0)\).

\begin{lemma}[Tangent upper bound at the origin]
\label{lem:tangent_upper}
If \(T\) is concave on \([0,1]\) and differentiable at \(0\), then
\[
T(u)\le T(0)+T'(0)\,u=\alpha_\rho\,u,\qquad u\in[0,1].
\]
\end{lemma}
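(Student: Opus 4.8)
The statement is the classical fact that a concave function lies below its supporting line at a point of differentiability; the only mild care needed is that $0$ is a boundary point of $[0,1]$, so ``differentiable at $0$'' must be read as existence of the right derivative, and the limit taken in the argument is one-sided. The plan is to reuse precisely the convex-combination trick already employed in the proof of Lemma~\ref{lem:ratio_monotone}, rather than to invoke a general two-sided tangent-line bound.

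First I would dispose of $u=0$, where the claimed inequality holds with equality since $T(0)+T'(0)\cdot 0 = T(0)$. Then I fix $u\in(0,1]$ and pick an arbitrary $\varepsilon\in(0,u)$. Writing $\varepsilon=\tfrac{\varepsilon}{u}\cdot u+\bigl(1-\tfrac{\varepsilon}{u}\bigr)\cdot 0$ and applying concavity of $T$ on $[0,1]$ gives $T(\varepsilon)\ge \tfrac{\varepsilon}{u}\,T(u)+\bigl(1-\tfrac{\varepsilon}{u}\bigr)T(0)$. Rearranging this into difference-quotient form yields $\dfrac{T(\varepsilon)-T(0)}{\varepsilon}\ \ge\ \dfrac{T(u)-T(0)}{u}$, i.e., the secant slope measured from the endpoint $0$ is monotone in the step size.

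Finally I let $\varepsilon\downarrow 0$. Differentiability of $T$ at $0$ forces the left-hand side to converge to $T'(0)=\alpha_\rho$, so $\alpha_\rho\ge \dfrac{T(u)-T(0)}{u}$, which after multiplying through by $u>0$ is exactly $T(u)\le T(0)+\alpha_\rho u$. For the concrete map $T(u)=f_{b,\rho}(\gamma u)$ used in the paper the hypotheses are already established — concavity on $[0,1]$ from Lemmas~\ref{lem:fb_props} and~\ref{lem:correlated_map}, together with $T(0)=0$ and $T'(0)=\alpha_\rho$ from~\eqref{eq:alpha_rho_def} — so the bound specializes to $T(u)\le\alpha_\rho u$ as invoked in~\eqref{eq:global_tangent_bound}. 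There is no substantive obstacle; the single point that requires attention is that, because $0$ is an endpoint of the domain, one must work with the one-sided combination $\varepsilon\in(0,u)$ and the one-sided limit $\varepsilon\downarrow 0$, and should not appeal to any statement that presumes a two-sided neighborhood of $0$ inside the domain.
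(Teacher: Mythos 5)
Your proof is correct. The difference from the paper is one of level of detail rather than of strategy: the paper dispatches the lemma with a one-line appeal to the general fact that a concave function lies below its supporting (tangent) line, whereas you prove that fact from scratch by the secant-slope monotonicity argument. Concretely, you write $\varepsilon$ as a convex combination of $0$ and $u$, apply concavity to get
\[
\frac{T(\varepsilon)-T(0)}{\varepsilon}\ \ge\ \frac{T(u)-T(0)}{u}\qquad(0<\varepsilon<u),
\]
and then pass to the limit $\varepsilon\downarrow 0$. This is the same convex-combination device the paper itself uses in Lemma~\ref{lem:ratio_monotone}, so your argument fits naturally into the surrounding development. Your version has a small advantage worth keeping: it is explicit that $0$ is an endpoint of $[0,1]$, so only the right derivative exists and only a one-sided limit is being taken, whereas the paper's appeal to the general supporting-line theorem (usually stated for interior points) quietly elides that. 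Both proofs are sound; yours is the more self-contained and the more careful about the boundary.
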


\begin{proof}
For concave functions, every tangent line is a global upper bound.
In particular, differentiability at \(0\) implies that the affine function \(u\mapsto T(0)+T'(0)u\) supports \(T\) at \(0\), hence upper-bounds \(T\) on \([0,1]\).
\end{proof}

\begin{proof}[Proof of Theorem~\ref{thm:small_signal_band}]
Fix \(\eta\in(0,1)\).
The upper inequality in \eqref{eq:linear_band} is Lemma~\ref{lem:tangent_upper}.
For the lower inequality, note that \(T\) is differentiable at \(0\), so
\[
\lim_{u\downarrow 0}\frac{T(u)}{u}=T'(0)=\alpha_\rho.
\]
Therefore, there exists \(\delta\in(0,1]\) such that for all \(u\in(0,\delta]\),
\[
\frac{T(u)}{u}\ge (1-\eta)\alpha_\rho,
\]
which is exactly the lower bound in \eqref{eq:linear_band}.

To obtain the iterative sandwich \eqref{eq:mu_sandwich}, assume \(\mu_0\alpha_\rho^{L}\le \delta\).
By Lemma~\ref{lem:tangent_upper} and induction, \(\mu_t\le \mu_0\alpha_\rho^{t}\le \mu_0\alpha_\rho^{L}\le \delta\) for all \(t\le L\).
Thus the lower bound in \eqref{eq:linear_band} applies at each step:
\(\mu_{t+1}=T(\mu_t)\ge (1-\eta)\alpha_\rho\,\mu_t\), so again by induction
\(\mu_t\ge (1-\eta)^t\mu_0\alpha_\rho^t\).
Setting \(t=L\) gives \eqref{eq:mu_sandwich}.
\end{proof}

\subsection{Closed-form compute allocation and proof of Theorem~\ref{thm:opt_x_growth}}
\label{app:xstar_growth_proof}

We treat the growth-regime surrogate objective in \eqref{eq:grow_objective} as an exact function of \(x>0\) for fixed \((B,b,m)\), ignoring multiplicative constants independent of \(x\):
\[
\mu_{\mathrm{grow}}(x)\propto x^\beta(x+c_0)^{-s},
\qquad c_0=c_0(b,m)>0.
\]
Taking logs gives
\[
\phi(x):=\log \mu_{\mathrm{grow}}(x)=\beta\log x - s\log(x+c_0) + \text{const}(B,b,m).
\]

\begin{proof}[Proof of Theorem~\ref{thm:opt_x_growth}]
Differentiate \(\phi\):
\[
\phi'(x)=\frac{\beta}{x}-\frac{s}{x+c_0}
=\frac{\beta(x+c_0)-sx}{x(x+c_0)}
=\frac{\beta c_0-(s-\beta)x}{x(x+c_0)}.
\]
If \(s>\beta\), the unique critical point is at
\(x^\star=\frac{\beta}{s-\beta}c_0\), which is \eqref{eq:xstar_growth}.
A second derivative check shows it is a maximizer:
\[
\phi''(x)=-\frac{\beta}{x^2}+\frac{s}{(x+c_0)^2}<0\qquad\text{at }x=x^\star,
\]
because at \(x^\star\) we have \(\frac{s}{(x^\star+c_0)^2}=\frac{s(s-\beta)^2}{s^2 c_0^2}<\frac{\beta(s-\beta)^2}{\beta^2 c_0^2}=\frac{\beta}{(x^\star)^2}\).
Equivalently, \(\phi\) is strictly concave around \(x^\star\).

Substituting \(x^\star=\frac{\beta}{s-\beta}c_0\) into \eqref{eq:grow_objective} and simplifying yields \eqref{eq:mu_grow_star}.
Indeed,
\[
\frac{x^\star}{x^\star+c_0}=\frac{\beta}{s},
\qquad
x^\star+c_0=\frac{s}{s-\beta}c_0,
\]
so
\[
x^{\star\beta}\left(\frac{1}{x^\star+c_0}\right)^{s}
=
\left(\frac{\beta}{s-\beta}c_0\right)^{\beta}\left(\frac{s-\beta}{s}\frac{1}{c_0}\right)^{s}
=
\beta^\beta (s-\beta)^{s-\beta}s^{-s}\,c_0^{\beta-s}
=
\kappa(s,\beta)\,c_0^{\beta-s}.
\]

If \(s\le \beta\), then \(\phi'(x)\ge 0\) for all \(x>0\), so the growth surrogate is nondecreasing in \(x\).
Under the budget constraint \(B\approx N(x+c_0)\) with \(N\ge 1\), the maximum feasible \(x\) is on the order of \(B-c_0\) (corresponding to \(N=1\)), i.e., ``scale up rather than scale out.''
\end{proof}

\subsection{Proof of Corollary~\ref{cor:Bcrit}}
\label{app:Bcrit_proof}

\begin{proof}[Proof of Corollary~\ref{cor:Bcrit}]
In the growth surrogate, Theorem~\ref{thm:opt_x_growth} gives
\[
\mu_{\mathrm{grow}}^\star(B;b,m)\approx k\,\kappa(s,\beta)\,c_0^{\beta-s}\,B^{s}.
\]
The single-agent surrogate is \(\mu_{\mathrm{single}}(B)\approx kB^\beta\).
Requiring \(\mu_{\mathrm{grow}}^\star(B;b,m)\ge \mu_{\mathrm{single}}(B)\) and canceling \(k>0\) yields
\[
\kappa(s,\beta)\,c_0^{\beta-s}\,B^{s-\beta}\ge 1
\qquad\Longleftrightarrow\qquad
B^{s-\beta}\ge \kappa(s,\beta)^{-1}c_0^{s-\beta}.
\]
Since \(s>\beta\), raising both sides to the power \(1/(s-\beta)\) gives the sufficient condition
\[
B\ge c_0\,\kappa(s,\beta)^{-\frac{1}{s-\beta}}=B_{\mathrm{crit}}(b,m),
\]
which is \eqref{eq:Bcrit}.
\end{proof}

\subsection{A remark on the \(s\le \beta\) regime}
\label{app:s_le_beta_remark}

When \(s\le \beta\), the growth surrogate cannot deliver an exponent advantage over the single-agent curve.
More precisely, optimizing \eqref{eq:grow_objective} over \(x\) yields a prediction that scales at most as \(B^\beta\) (and typically prefers \(N\approx 1\)).
This is the sense in which the condition \(s>\beta\) is necessary for scale-out \emph{exponent} gains in the growth regime.

\section{Saturation and Mixing Depth}
\label{app:mixing}

We prove Theorem~\ref{thm:mixing_depth}.
Assume the supercritical regime \(\alpha_\rho>1\), so by Theorem~\ref{thm:phase_transition} there is a unique attracting fixed point \(\mu^\star\in(0,1]\) and \(\mu_t\to \mu^\star\) for every \(\mu_0>0\).

\subsection{Two-phase mixing analysis and proof of Theorem~\ref{thm:mixing_depth}}
\label{app:mixing_depth_proof}

\begin{proof}[Proof of Theorem~\ref{thm:mixing_depth}]
If \(\mu_0\ge \mu^\star\), then \(\mu_t\ge \mu^\star\) for all \(t\ge 0\) (the recursion is monotone decreasing toward \(\mu^\star\) from above), and therefore \(\mu_L\ge (1-\varepsilon)\mu^\star\) holds for every \(L\) with \(L_{\mathrm{mix}}(\varepsilon)=0\).
We therefore focus on the case \(\mu_0\in(0,\mu^\star)\), where \(\mu_t\uparrow \mu^\star\).

\paragraph{Phase I: reaching a constant fraction of \(\mu^\star\).}
Define \(r(u)=T(u)/u\) for \(u>0\), which is continuous on \((0,1]\) and nonincreasing by Lemma~\ref{lem:ratio_monotone}.
Because \(r(\mu^\star)=1\) and \(r(u)>1\) for \(u\in(0,\mu^\star)\), pick any reference level \(u_1\in(0,\mu^\star)\) (e.g., \(u_1=\mu^\star/2\)) and set
\[
\underline{\alpha}:=r(u_1)=\frac{T(u_1)}{u_1}>1.
\]
For any \(u\in(0,u_1]\), monotonicity of \(r\) gives \(T(u)/u=r(u)\ge r(u_1)=\underline{\alpha}\), hence \(T(u)\ge \underline{\alpha}u\).
As long as \(\mu_t\le u_1\), we therefore have \(\mu_{t+1}\ge \underline{\alpha}\mu_t\), and by induction
\[
\mu_t\ge \mu_0\,\underline{\alpha}^{\,t}\qquad\text{until the first time }\mu_t>u_1.
\]
Thus the hitting time
\[
t_1:=\min\{t:\ \mu_t\ge u_1\}
\]
satisfies
\[
t_1\le \left\lceil \frac{\log(u_1/\mu_0)}{\log \underline{\alpha}}\right\rceil
=O\!\left(\log\frac{1}{\mu_0}\right),
\]
with constants depending only on \(u_1\) (hence on \(b,\rho,\gamma\)).

\paragraph{Phase II: geometric contraction near \(\mu^\star\).}
As noted at the end of Appendix~C, we have \(T'(\mu^\star)<1\).
By continuity of \(T'\) and compactness, there exists \(u_0\in(u_1,\mu^\star)\) and a constant \(\kappa\in(0,1)\) such that
\begin{equation}
\sup_{u\in[u_0,\mu^\star]}T'(u)\le \kappa<1.
\label{eq:kappa_contraction}
\end{equation}
Once \(\mu_t\ge u_0\), define the gap \(d_t:=\mu^\star-\mu_t\in[0,\mu^\star-u_0]\).
By the mean value theorem, for each \(t\) with \(\mu_t\in[u_0,\mu^\star]\) there exists \(\xi_t\in[\mu_t,\mu^\star]\subseteq[u_0,\mu^\star]\) such that
\[
d_{t+1}
=
\mu^\star-\mu_{t+1}
=
T(\mu^\star)-T(\mu_t)
=
T'(\xi_t)\,(\mu^\star-\mu_t)
\le \kappa\,d_t.
\]
Iterating yields \(d_{t+n}\le \kappa^n d_t\).
In particular, once we have reached \(u_0\), to ensure \(d_{t+n}\le \varepsilon \mu^\star\) it suffices that
\[
n\ge \left\lceil \frac{\log(d_t/(\varepsilon\mu^\star))}{\log(1/\kappa)}\right\rceil
=O\!\left(\log\frac{1}{\varepsilon}\right),
\]
where the constant depends on \(\kappa\) and the initial gap bound \(d_t\le \mu^\star\).

\paragraph{Combining phases.}
Since \(\mu_t\uparrow\mu^\star\), the time to go from \(u_1\) to \(u_0\) is finite and depends only on \((b,\rho,\gamma)\); absorb it into constants.
Combining the bounds gives a depth
\[
L_{\mathrm{mix}}(\varepsilon)
\le
C_1\log\frac{1}{\mu_0}+C_2\log\frac{1}{\varepsilon}+C_3,
\]
with constants depending only on \((b,\rho,\gamma)\), such that \(\mu_L\ge (1-\varepsilon)\mu^\star\) for all \(L\ge L_{\mathrm{mix}}(\varepsilon)\).
This establishes the claimed scaling.
\end{proof}

\subsection{Clipped objectives and approximation guarantees}
\label{app:clipped_guarantees}

The clipped proxy \(\widehat{\mu}_L=\min\{\mu^\star,\mu_0\alpha_\rho^L\}\) in \eqref{eq:clipped_mu} is most naturally interpreted for the regime \(\mu_0\le \mu^\star\), which includes the small-signal settings used in Section~\ref{sec:budget_thresholds}.
In that case, monotonicity and Theorem~\ref{thm:phase_transition} imply \(\mu_L\le \mu^\star\) for all \(L\), while Lemma~\ref{lem:tangent_upper} implies \(\mu_L\le \mu_0\alpha_\rho^L\).
Therefore \(\mu_L\le \widehat{\mu}_L\), justifying the global upper bound claim in Section~\ref{sec:saturation_binary}.
Theorems~\ref{thm:small_signal_band} and~\ref{thm:mixing_depth} provide matching lower bounds in the growth and saturation regimes, respectively.

\section{Topology Comparison Proofs}
\label{app:topology}

\subsection{Star upper bounds under context constraints}
\label{app:star_context}

A star node that reads \(m\) tokens per leaf can aggregate at most
\(N_{\max}(m)=\lfloor W/m\rfloor\) leaves within a context budget \(W\).
This is the sense in which a star saturates under a finite context window: beyond \(N_{\max}(m)\), additional leaves cannot be included without increasing \(m\) (which changes both cost and fidelity).

\subsection{Proof of Proposition~\ref{prop:centralization}}
\label{app:centralization_proof}

\begin{proof}[Proof of Proposition~\ref{prop:centralization}]
Any predictor \(\widehat{Y}=\phi(Z)\) induces a predictor on \(X\) via composition \(\psi(X):=\phi(Z(X,U))\), where \(U\) denotes any protocol randomness (assumed independent of \(Y\)).
Therefore, the set of predictors measurable with respect to \(Z\) is a subset of those measurable with respect to \(X\), implying the optimal risk using \(X\) is no larger than the optimal risk using \(Z\).
\end{proof}

\subsection{Proof of Proposition~\ref{prop:chain_decay}}
\label{app:chain_decay_proof}

\begin{proof}[Proof of Proposition~\ref{prop:chain_decay}]
In the channel model of Section~\ref{sec:communication}, each hop multiplies the incoming bias by \(\gamma(m)\).
After \(L\) independent hops, the bias is \(\mu_0\gamma(m)^L\).
\end{proof}

\subsection{Tree vs.\ star dominance beyond a threshold}
\label{app:tree_vs_star}

The comparison between a context-limited star and a supercritical tree in Section~\ref{sec:phase_diagrams} is driven by two facts:
(i) the star cannot use more than \(N_{\max}(m)\) leaves for fixed \(m\), hence eventually must spend additional budget on scale-up (which grows like \(B^\beta\) in the small-signal regime),
while (ii) a tree with exponent \(s>\beta\) admits a growth regime with \(B^s\) scaling (Appendix~D) before saturating at \(\mu^\star\) (Appendix~E).
This yields a provable intermediate budget range where trees can dominate under context constraints, formalized through the sufficient condition \(B\ge B_{\mathrm{crit}}\) in Corollary~\ref{cor:Bcrit}.

\section{Continuous Extension Proofs}
\label{app:continuous}

This appendix derives the closed-form mean-squared error (MSE) recursions used in the continuous warm-up (Section~\ref{sec:continuous_warmup}).
Throughout, let \(Y\in[0,1]\) be the ground truth and suppose each leaf outputs \(\widehat{Y}_i=Y+E_i\) with \(\mathbb{E}[E_i]=0\), \(\mathbb{E}[E_i^2]=v_0\), and \(\mathrm{Corr}(E_i,E_j)=\rho\) for \(i\neq j\).

\subsection{Star, chain, and tree recursions}
\label{app:continuous_recursions}

\paragraph{Star.}
In the star topology, each leaf transmits \(\widehat{Y}_i\) over the additive channel of Section~\ref{sec:continuous_warmup}, so the aggregator receives \(\widehat{Y}_i+\eta_i\) with \(\mathbb{E}[\eta_i]=0\) and \(\mathbb{E}[\eta_i^2]=\sigma_c^2(m)\), independent across leaves and independent of the leaf errors \(E_i\).
The aggregator outputs the average of the received messages,
\[
\widehat{Y}_{\mathrm{star}}=\frac{1}{N}\sum_{i=1}^N (\widehat{Y}_i+\eta_i)
=Y+\bar{E}+\bar{\eta},
\qquad
\bar{E}:=\frac{1}{N}\sum_{i=1}^N E_i,\;\;
\bar{\eta}:=\frac{1}{N}\sum_{i=1}^N \eta_i.
\]
Because \(\bar{E}\) and \(\bar{\eta}\) are independent and mean-zero, the MSE is
\[
\mathbb{E}[(\bar{E}+\bar{\eta})^2]=\mathbb{E}[\bar{E}^2]+\mathbb{E}[\bar{\eta}^2]
=v_0\Big(\rho+\frac{1-\rho}{N}\Big)+\frac{\sigma_c^2(m)}{N},
\]
which is \eqref{eq:star_mse}.

\paragraph{Chain.}
In the chain model of Section~\ref{sec:continuous_warmup}, each hop adds independent communication noise \(C_\ell\) with \(\mathbb{E}[C_\ell]=0\), \(\mathbb{E}[C_\ell^2]=\sigma_c^2(m)\), so after \(L\) hops the error variance is \(v_0+L\sigma_c^2(m)\), giving \eqref{eq:chain_mse}.

\paragraph{Tree.}
For a \(b\)-ary tree, each internal node averages the \(b\) \emph{received} child messages.
Under the edge channel model of Section~\ref{sec:continuous_warmup}, child \(j\) transmits \(\widehat{Y}_j\) and the parent receives \(\widehat{Y}_j+\eta_j\), where \(\mathbb{E}[\eta_j]=0\) and \(\mathbb{E}[\eta_j^2]=\sigma_c^2(m)\), independent across edges and independent of estimation errors.
If the child errors have variance \(v_t\) and pairwise correlation \(\rho\), then averaging the \(b\) child errors yields variance \(v_t(\rho+(1-\rho)/b)\) (the same computation as the star with \(N=b\)).
Averaging the \(b\) independent channel noises adds variance \(\sigma_c^2(m)/b\).
Thus the recursion is
\[
v_{t+1}=v_t\Big(\rho+\frac{1-\rho}{b}\Big)+\frac{\sigma_c^2(m)}{b},
\]
which is \eqref{eq:tree_rec}.
Solving this affine recursion gives \eqref{eq:tree_closed}, and taking \(t\to\infty\) yields the floor \eqref{eq:tree_floor}.
\subsection{Mixing depth in the continuous recursion}
\label{app:continuous_mixing}

Let \(\lambda:=\rho+(1-\rho)/b\in(0,1)\).
From the closed form \eqref{eq:tree_closed},
\[
v_t-v^\star=\lambda^t(v_0-v^\star),
\]
so reaching relative error \(v_t\le v^\star+\varepsilon(v_0-v^\star)\) is equivalent to \(\lambda^t\le \varepsilon\), i.e.,
\(t\ge \frac{\log(1/\varepsilon)}{\log(1/\lambda)}\), which is \eqref{eq:cont_mix_depth}.
This is the continuous analogue of the binary mixing-depth statement in Theorem~\ref{thm:mixing_depth}.

\section{Universal Upper Bound on Amplification Exponents}
\label{app:universal_bound}

We prove Theorem~\ref{thm:s_upper_half}.

\subsection{A standard bound on the central binomial coefficient}
\label{app:central_binom_bound}

The proof relies on the classical inequality
\begin{equation}
\binom{2k}{k}\le \frac{4^k}{\sqrt{\pi k}},\qquad k\ge 1,
\label{eq:central_binom_stirling}
\end{equation}
which we derive here from an integral bound (avoiding asymptotic approximations).

\begin{lemma}[Central binomial coefficient bound]
\label{lem:central_binom}
For every integer \(k\ge 1\),
\(\binom{2k}{k}\le \frac{4^k}{\sqrt{\pi k}}\).
\end{lemma}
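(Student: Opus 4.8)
(plan).}
The plan is to use the Wallis integrals \(I_n := \int_0^{\pi/2}\sin^n\theta\,d\theta\), which give an elementary integral-based route to the bound without invoking Stirling's formula. First I would record the standard reduction formula obtained by one integration by parts: \(I_n = \frac{n-1}{n}\,I_{n-2}\) for \(n\ge 2\), with base cases \(I_0 = \pi/2\) and \(I_1 = 1\). Iterating the recursion on even indices and simplifying the resulting double factorials (via \((2k-1)!! = (2k)!/(2^k k!)\) and \((2k)!! = 2^k k!\)) yields the closed form
\[
I_{2k} \;=\; \frac{\pi}{2}\cdot\frac{(2k-1)!!}{(2k)!!} \;=\; \frac{\pi}{2}\cdot\frac{1}{4^k}\binom{2k}{k}.
\]
Separately, a one-line induction from the same recursion gives the product identity \(I_n I_{n-1} = \frac{\pi}{2n}\) for all \(n\ge 1\) (base case \(I_1 I_0 = \pi/2\); inductive step \(I_n I_{n-1} = \tfrac{n-1}{n} I_{n-2} I_{n-1} = \tfrac{n-1}{n}\cdot\tfrac{\pi}{2(n-1)}\)).

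Next I would exploit monotonicity of \(I_n\) in \(n\): since \(0\le\sin\theta\le 1\) on \([0,\pi/2]\), we have \(\sin^{2k}\theta\le\sin^{2k-1}\theta\) pointwise, hence \(I_{2k}\le I_{2k-1}\). Multiplying this inequality by \(I_{2k}>0\) and applying the product identity with \(n=2k\) gives
\[
I_{2k}^2 \;\le\; I_{2k}\,I_{2k-1} \;=\; \frac{\pi}{4k}.
\]
Substituting the closed form for \(I_{2k}\) then yields \(\bigl(\tfrac{\pi}{2}\cdot 4^{-k}\binom{2k}{k}\bigr)^2 \le \frac{\pi}{4k}\), and rearranging (using \(\sqrt{16^k}=4^k\)) gives \(\binom{2k}{k}^2 \le \frac{16^k}{\pi k}\), i.e. \(\binom{2k}{k}\le \frac{4^k}{\sqrt{\pi k}}\), which is exactly \eqref{eq:central_binom_stirling}. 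The whole argument uses only integration by parts, a finite induction, and the trivial pointwise bound on powers of \(\sin\), so it is self-contained and avoids asymptotics.

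There is no serious obstacle here — this is a classical estimate. The only places requiring care are the bookkeeping in the derivation of the closed form for \(I_{2k}\) (keeping the double-factorial product and its conversion into \(\binom{2k}{k}4^{-k}\) straight) and choosing the correct direction of the monotonicity comparison: one needs \(I_{2k}\le I_{2k-1}\) (an even-index integral against the odd-index one just below it), so that squaring produces an \emph{upper} bound on \(I_{2k}\); the alternative comparison \(I_{2k}\ge I_{2k+1}\) would yield only a matching lower bound on \(\binom{2k}{k}\). Full details, including the explicit integration-by-parts computation and the induction, will be given in Appendix~H. \hfill\(\square\)
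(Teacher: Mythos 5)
Your proof is correct.
Both you and the paper start from the same Wallis integral setup
\(I_{2k}=\int_0^{\pi/2}\sin^{2k}\theta\,d\theta\) and the same identity
\(I_{2k}=\frac{\pi}{2}\cdot 4^{-k}\binom{2k}{k}\), and both reduce the lemma to
showing \(I_{2k}\le \frac{\sqrt{\pi}}{2\sqrt{k}}\).
Where you diverge is the bounding step.
The paper's proof changes variables to \(\cos\), establishes the pointwise inequality
\(\cos x\le e^{-x^2/2}\) on \([0,\pi/2]\) by analyzing \(g(x)=\log\cos x+x^2/2\),
and then compares to the Gaussian tail \(\int_0^\infty e^{-kx^2}\,dx=\frac{\sqrt{\pi}}{2\sqrt{k}}\).
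Your proof instead stays entirely inside the Wallis family: you derive the product
identity \(I_n I_{n-1}=\frac{\pi}{2n}\) by a one-line induction from the reduction
formula, invoke the obvious monotonicity \(I_{2k}\le I_{2k-1}\), and conclude
\(I_{2k}^2\le I_{2k}I_{2k-1}=\frac{\pi}{4k}\).
Both routes are elementary and of similar length, but yours avoids any pointwise
exponential comparison and Gaussian integral, so it is slightly more self-contained;
the paper's route is perhaps more flexible, since the \(\cos x\le e^{-x^2/2}\) trick
generalizes to sharper or higher-order bounds.
Your caution about the direction of the monotonicity comparison is well placed and
correctly resolved: \(I_{2k}\le I_{2k-1}\) is what yields an upper bound after squaring.
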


\begin{proof}
Let \(I_k:=\int_{0}^{\pi/2}\sin^{2k}\theta\,d\theta\).
A standard integration-by-parts recursion gives \(I_k=\frac{2k-1}{2k}I_{k-1}\) with \(I_0=\pi/2\), hence
\[
I_k=\frac{\pi}{2}\cdot \frac{(2k-1)!!}{(2k)!!}
=\frac{\pi}{2}\cdot \frac{(2k)!}{2^{2k}(k!)^2}
=\frac{\pi}{2}\cdot \frac{1}{4^k}\binom{2k}{k}.
\]
Equivalently,
\begin{equation}
\binom{2k}{k}=\frac{2\cdot 4^k}{\pi}\,I_k.
\label{eq:binom_integral_identity}
\end{equation}

We next upper bound \(I_k\).
Change variables \(x=\frac{\pi}{2}-\theta\) to get
\(I_k=\int_{0}^{\pi/2}\cos^{2k}x\,dx\).
For \(x\in[0,\pi/2]\), define \(g(x)=\log(\cos x)+x^2/2\).
Then \(g(0)=0\) and \(g'(x)=-\tan x + x\le 0\) because \(\tan x\ge x\) for \(x\ge 0\).
Thus \(g(x)\le 0\) and
\(\cos x\le e^{-x^2/2}\) on \([0,\pi/2]\).
Therefore,
\[
I_k=\int_{0}^{\pi/2}\cos^{2k}x\,dx
\le \int_{0}^{\pi/2}e^{-k x^2}\,dx
\le \int_{0}^{\infty}e^{-k x^2}\,dx
=\frac{\sqrt{\pi}}{2\sqrt{k}}.
\]
Plugging this into \eqref{eq:binom_integral_identity} yields
\[
\binom{2k}{k}\le \frac{2\cdot 4^k}{\pi}\cdot \frac{\sqrt{\pi}}{2\sqrt{k}}
=\frac{4^k}{\sqrt{\pi k}},
\]
which is \eqref{eq:central_binom_stirling}.
\end{proof}

\subsection{Proof of Theorem~\ref{thm:s_upper_half}}
\label{app:s_upper_half_proof}

\begin{proof}[Proof of Theorem~\ref{thm:s_upper_half}]
Let \(b=2k+1\) with \(k\ge 1\).
By Lemma~\ref{lem:fb_props},
\[
f_b'(0)=\frac{(2k+1)\binom{2k}{k}}{2^{2k}}=(2k+1)\binom{2k}{k}\,4^{-k}.
\]
Applying Lemma~\ref{lem:central_binom} yields
\[
f_b'(0)\le (2k+1)\cdot \frac{4^k}{\sqrt{\pi k}}\cdot 4^{-k}
=\frac{2k+1}{\sqrt{\pi k}}.
\]
Since \(\pi>3\) and \(k\ge 1\), we have \(\frac{2k+1}{\sqrt{\pi k}}\le \sqrt{2k+1}=\sqrt{b}\), proving \eqref{eq:fbprime_sqrtb}.

For the second claim, note that \(\gamma(m)\le 1\) and \(\rho+(1-\rho)f_b'(0)\le \rho+(1-\rho)\sqrt{b}\le \sqrt{b}\) because \(\sqrt{b}\ge 1\).
Therefore \(\alpha_\rho=\gamma(m)\big(\rho+(1-\rho)f_b'(0)\big)\le \sqrt{b}\), and \(s=\frac{\log\alpha_\rho}{\log b}\le \frac12\).
\end{proof}

\section{Monotone Design Curves and Envelope Algorithm}
\label{app:monotone_design}

This appendix proves Theorem~\ref{thm:monotone_m} and justifies Algorithm~\ref{alg:envelope} in Section~\ref{sec:envelope_algorithm}.

\subsection{Increasing differences and proof of Theorem~\ref{thm:monotone_m}}
\label{app:monotone_m_proof}

Fix fan-in \(b\) and consider feasible \(m\in\mathcal{M}_b=\{m\in\mathbb{Z}_+:\ bm\le W\}\).
For each feasible \(m\) that passes the screening conditions \(\alpha_\rho(b,m)>1\) and \(s_{b,m}>\beta\), define
\[
\ell_m(t):=\log \underline{\mu}_b(m;e^t)=s_{b,m}\,t+q_b(m),
\qquad t=\log B,
\]
as in \eqref{eq:line_def}--\eqref{eq:mu_bm}.
By assumption, \(\gamma(m)\) is strictly increasing in \(m\), hence so is \(\alpha_\rho(b,m)\), and therefore \(s_{b,m}=\log \alpha_\rho(b,m)/\log b\) is strictly increasing in \(m\).

\begin{lemma}[Single-crossing / increasing differences]
\label{lem:increasing_differences}
For any \(m_2>m_1\), the difference \(\ell_{m_2}(t)-\ell_{m_1}(t)\) is strictly increasing in \(t\).
\end{lemma}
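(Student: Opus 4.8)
The plan is to reduce the claim to a one-line statement about slopes, exploiting that each $\ell_m$ is affine in $t$. First I would write the difference explicitly as
\[
\ell_{m_2}(t) - \ell_{m_1}(t) = \bigl(s_{b,m_2} - s_{b,m_1}\bigr)\,t + \bigl(q_b(m_2) - q_b(m_1)\bigr),
\]
an affine function of $t=\log B$ with slope $s_{b,m_2}-s_{b,m_1}$ and $t$-independent intercept $q_b(m_2)-q_b(m_1)$. Strict monotonicity in $t$ is therefore equivalent to showing that this slope is strictly positive, i.e.\ $s_{b,m_2}>s_{b,m_1}$ whenever $m_2>m_1$.

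Second, I would show $s_{b,m}$ is strictly increasing in $m$ for fixed $b$. Recall $s_{b,m}=\log\alpha_\rho(b,m)/\log b$ and $\alpha_\rho(b,m)=\gamma(m)\bigl(\rho+(1-\rho)f_b'(0)\bigr)$ from \eqref{eq:alpha_rho_def}. The crucial point is that $f_b'(0)$ depends only on $b$ (Lemma~\ref{lem:fb_props}), so the factor $\rho+(1-\rho)f_b'(0)$ is a fixed positive constant in $m$; since $\gamma(m)$ is strictly increasing by hypothesis, $\alpha_\rho(b,m)$ is strictly increasing in $m$. Because $b\ge 3$ gives $\log b>0$ and $x\mapsto \log x/\log b$ is strictly increasing on $(0,\infty)$, it follows that $s_{b,m}$ is strictly increasing in $m$, hence $s_{b,m_2}-s_{b,m_1}>0$ for $m_2>m_1$.

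Combining the two steps, $\ell_{m_2}-\ell_{m_1}$ is affine with strictly positive slope, so it is strictly increasing in $t$, which is the claimed single-crossing/increasing-differences property. This is exactly what powers the envelope argument for Theorem~\ref{thm:monotone_m}: once two candidate lines cross, the larger-$m$ (higher-slope) line stays above the smaller-$m$ one for all larger budgets, so the maximizing index can only move upward in $m$ as $B$ grows.

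There is no genuine analytic difficulty here; the only thing to be careful about is not to conflate monotonicity in $m$ with monotonicity in $t$. The intercept $q_b(m)$ need not be monotone in $m$ (it involves $\kappa(s_{b,m},\beta)$ and $(\beta-s_{b,m})\log c_0(b,m)$, which can vary non-monotonically), but this is irrelevant, since it contributes only a $t$-independent constant to the difference. All the substance is carried by the fact that communication fidelity $\gamma(m)$, and hence the exponent $s_{b,m}$, strictly increases with message length while the fan-in-dependent gain $f_b'(0)$ is held fixed.
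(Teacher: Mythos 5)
Your proof is correct and takes essentially the same route as the paper: write the difference of two affine functions explicitly and observe its slope $s_{b,m_2}-s_{b,m_1}$ is strictly positive because $\gamma(m)$, and hence $\alpha_\rho(b,m)$ and $s_{b,m}$, are strictly increasing in $m$. You spell out the monotonicity of $s_{b,m}$ inline, whereas the paper states it in the paragraph preceding the lemma, but the substance is identical.
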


\begin{proof}
Compute
\[
\ell_{m_2}(t)-\ell_{m_1}(t)=(s_{b,m_2}-s_{b,m_1})\,t+\big(q_b(m_2)-q_b(m_1)\big).
\]
Because \(s_{b,m_2}>s_{b,m_1}\), the coefficient of \(t\) is positive, so the difference is strictly increasing in \(t\).
\end{proof}

\begin{proof}[Proof of Theorem~\ref{thm:monotone_m}]
Let \(t=\log B\) and define the argmax correspondence
\[
\mathcal{A}(t):=\arg\max_{m\in\mathcal{M}_b}\ \ell_m(t).
\]
By finiteness of \(\mathcal{M}_b\), \(\mathcal{A}(t)\) is nonempty.
We show that the minimal selection \(m^{-}(t):=\min \mathcal{A}(t)\) is nondecreasing in \(t\), which implies the theorem.

Take \(t_2>t_1\) and let \(m_1=m^{-}(t_1)\in\mathcal{A}(t_1)\) and \(m_2=m^{-}(t_2)\in\mathcal{A}(t_2)\).
Suppose for contradiction that \(m_2<m_1\).
Optimality gives
\[
\ell_{m_1}(t_1)\ge \ell_{m_2}(t_1),
\qquad
\ell_{m_2}(t_2)\ge \ell_{m_1}(t_2).
\]
Adding these inequalities yields
\begin{equation}
\ell_{m_1}(t_1)+\ell_{m_2}(t_2)\ \ge\ \ell_{m_2}(t_1)+\ell_{m_1}(t_2).
\label{eq:swap_ineq}
\end{equation}
On the other hand, Lemma~\ref{lem:increasing_differences} with \(m_1>m_2\) and \(t_2>t_1\) implies
\[
\big(\ell_{m_1}(t_2)-\ell_{m_2}(t_2)\big)
>
\big(\ell_{m_1}(t_1)-\ell_{m_2}(t_1)\big),
\]
which rearranges to
\[
\ell_{m_1}(t_2)+\ell_{m_2}(t_1)\ >\ \ell_{m_1}(t_1)+\ell_{m_2}(t_2),
\]
contradicting \eqref{eq:swap_ineq}.
Therefore \(m_2\ge m_1\), i.e., \(m^{-}(t)\) is nondecreasing.
Translating back from \(t=\log B\) proves that an optimal message length can be chosen nondecreasing in \(B\).
\end{proof}

\subsection{Upper-envelope construction and correctness of Algorithm~\ref{alg:envelope}}
\label{app:envelope_correctness}

We sketch a standard correctness argument for the stack-based envelope construction used in Algorithm~\ref{alg:envelope}.
Consider a finite family of lines \(\ell_i(t)=s_i t+q_i\) with strictly increasing slopes \(s_1<s_2<\cdots<s_M\).
Define the upper envelope \(E(t)=\max_i \ell_i(t)\).

\begin{lemma}[Envelope structure for increasing slopes]
\label{lem:envelope_structure}
If \(s_i\) are strictly increasing, then there exist indices \(1\le i_1<i_2<\cdots<i_r\le M\) and breakpoints
\(-\infty=\tau_0<\tau_1<\cdots<\tau_r=\infty\) such that for every \(j\in\{1,\dots,r\}\),
\[
E(t)=\ell_{i_j}(t)\quad\text{for all }t\in[\tau_{j-1},\tau_j).
\]
That is, the maximizing index is nondecreasing in \(t\) and changes only at increasing intersection points.
\end{lemma}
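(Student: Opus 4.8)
The plan is to view $E(t)=\max_{1\le i\le M}\ell_i(t)$ as the pointwise maximum of finitely many affine functions and to track the ``leftmost maximizing line'' as a monotone step function of $t$. Concretely, for each $t\in\mathbb{R}$ define the active index
\(
\phi(t):=\min\{i:\ \ell_i(t)=E(t)\},
\)
which is well defined because the family is finite. The heart of the argument is the claim that $\phi$ is nondecreasing. To prove it, suppose $t_1<t_2$ but $\phi(t_1)=a>b=\phi(t_2)$. Optimality at $t_1$ gives $\ell_a(t_1)\ge\ell_b(t_1)$, i.e.\ $(\ell_a-\ell_b)(t_1)\ge 0$; since $a>b$ and slopes are strictly increasing in the index, $\ell_a-\ell_b$ has slope $s_a-s_b>0$ and is strictly increasing, so $(\ell_a-\ell_b)(t_2)>(\ell_a-\ell_b)(t_1)\ge 0$, contradicting $\ell_b(t_2)=E(t_2)\ge\ell_a(t_2)$. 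Hence $\phi$ is nondecreasing.

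Next I would extract the stated structure. Since $\phi$ is a nondecreasing map into the finite set $\{1,\dots,M\}$, it is a step function taking distinct values $i_1<i_2<\cdots<i_r$ with $r\le M$, on consecutive intervals; set $\tau_0=-\infty<\tau_1<\cdots<\tau_{r-1}<\tau_r=\infty$ to be its jump points, so that $\phi(t)=i_j$ for $t\in[\tau_{j-1},\tau_j)$ and therefore $E(t)=\ell_{i_j}(t)$ there. The indices $i_1<\cdots<i_r$ are automatically increasing by construction, which is exactly the ``nondecreasing maximizing index'' claim, and because $i\mapsto s_i$ is strictly increasing this also gives $s_{i_1}<\cdots<s_{i_r}$ along the envelope (consistent with $E$ being convex). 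To pin down the breakpoints as intersection points, I would use continuity of $E$: the piece $\ell_{i_j}$ on $[\tau_{j-1},\tau_j)$ and the piece $\ell_{i_{j+1}}$ on $[\tau_j,\tau_{j+1})$ must agree at $\tau_j$, so $\ell_{i_j}(\tau_j)=\ell_{i_{j+1}}(\tau_j)=E(\tau_j)$. I would also remark that each line is active on at most one interval, since $\{t:\ell_i(t)=E(t)\}=\bigcap_{j\ne i}\{t:\ell_i(t)\ge\ell_j(t)\}$ is an intersection of half-lines and hence an interval; this independently confirms $r\le M$.

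I expect the only mildly delicate point to be the bookkeeping at the breakpoints: at each $\tau_j$ two or more lines tie for the maximum, so the piecewise identity $E=\ell_{i_j}$ must be stated on half-open intervals with a fixed tie-breaking rule (the $\min$ in the definition of $\phi$), and the intersection characterization $\ell_{i_j}(\tau_j)=\ell_{i_{j+1}}(\tau_j)$ is recovered only via continuity of $E$ rather than from the max-rule at a single point. Everything else (convexity and piecewise-affineness of $E$, finiteness of the breakpoint set, monotonicity of $\phi$) is routine once the single-crossing observation above is in place.

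Finally, to connect this to Algorithm~\ref{alg:envelope}, I would note that Lemma~\ref{lem:envelope_structure} (the statement being proved) is exactly what makes the stack procedure correct: processing candidate lines in increasing-slope order, a newly pushed line overtakes the current stack top at a unique threshold $t^\star$ computed by \eqref{eq:intersection}, and if $t^\star$ does not exceed the activation time of the current top, then by the single-crossing property the top is never the pointwise maximum and can be popped; what remains after the sweep is precisely the sequence $(i_j,\tau_j)$ of the lemma, which a single monotone pass then uses to answer all budget queries.
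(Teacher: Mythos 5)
Your proof is correct and takes essentially the same route as the paper's: both rest on the single‑crossing observation that for \(i<j\) the difference \(\ell_j-\ell_i\) is a strictly increasing affine function (because \(s_j-s_i>0\)), and both then conclude that the maximizing index is nondecreasing in \(t\), giving a piecewise‑linear envelope with breakpoints at pairwise intersections. You simply make the paper's terse ``this single‑crossing property implies the argmax index is nondecreasing'' explicit, via the exchange/contradiction argument on \(\phi(t)=\min\{i:\ell_i(t)=E(t)\}\) and the finiteness of the index set. One small bookkeeping slip: with the \(\min\) tie‑breaking rule, \(\phi\) is left‑continuous, so the exact statement is \(\phi(t)=i_j\) on \((\tau_{j-1},\tau_j]\) rather than \([\tau_{j-1},\tau_j)\); the lemma's identity \(E(t)=\ell_{i_j}(t)\) on \([\tau_{j-1},\tau_j)\) nonetheless holds because the lines tie at each \(\tau_j\), so this does not affect the conclusion — but it is worth aligning the convention to avoid confusion.
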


\begin{proof}
For any pair \(i<j\), the difference \(\ell_j(t)-\ell_i(t)\) is an increasing affine function of \(t\) (because \(s_j-s_i>0\)).
Therefore, the set where \(\ell_j\) dominates \(\ell_i\) is either empty or a half-line \([t^\star,\infty)\), where \(t^\star\) is their intersection.
This single-crossing property implies the argmax index over \(\{\ell_i\}\) is nondecreasing in \(t\), and the envelope is piecewise linear with breakpoints at pairwise intersections.
\end{proof}

Algorithm~\ref{alg:envelope} is a constructive implementation of Lemma~\ref{lem:envelope_structure}.
Processing lines in increasing slope order, it maintains a stack of candidate lines \((\ell_{i_1},\dots,\ell_{i_r})\) and the associated increasing breakpoints \((\tau_1,\dots,\tau_{r-1})\).
When a new line \(\ell\) is added, if its intersection with the current top occurs to the left of the top's activation breakpoint, then the top line is never maximal and is removed.
Because each line is pushed and popped at most once, the construction runs in linear time.
The resulting stack and breakpoint list exactly encode the envelope \(E(t)\), and the associated maximizing message length is the piecewise-constant function \(m_b^\star(B)\) in Theorem~\ref{thm:monotone_m}.

\section{Calibration and evaluation templates}
\label{app:calibration}

This appendix collects optional templates for estimating the \emph{effective} parameters of the framework in a given agent stack and for running small diagnostic evaluations.
Because the main paper is theoretical and the modeling assumptions are deliberately simplified, these templates are intended primarily to support qualitative regime checks (e.g., whether \(\alpha_\rho\) is supercritical, or whether saturation is communication- versus correlation-limited), rather than to provide a faithful simulator of all real-world effects.

\subsection{Calibration protocol: estimating \texorpdfstring{$\beta$}{beta}, \texorpdfstring{$\gamma(m)$}{gamma(m)} / \texorpdfstring{$\sigma_c^2(m)$}{sigma\_c^2(m)}, and \texorpdfstring{$\rho$}{rho}}
\label{sec:calibration_protocol}

This subsection operationalizes the ``measurable parameters'' promise from Section~\ref{sec:measurable-params}.
The goal is not to fit a complicated model, but to obtain stable estimates that are accurate enough to place the system in the correct qualitative regime of the phase diagram.

\paragraph{Single-agent scaling (\(\beta\)).}
Fix a leaf-agent template (model, prompt, tools) and define a compute knob \(x\) (e.g., reasoning tokens, number of samples, number of tool calls).
Evaluate the leaf agent on calibration tasks at a grid \(x_1<\cdots<x_K\) (roughly log-spaced).

For binary tasks, estimate accuracy \(\widehat{p}(x_k)\) and convert to bias \(\widehat{\mu}(x_k)=2\widehat{p}(x_k)-1\).
Fit a power law on a \emph{small-signal window} where \(\widehat{\mu}\) is positive and not saturated, e.g.,
\[
\log \widehat{\mu}(x_k) \approx \log k + \beta\log x_k,\qquad k\in\mathcal{K},
\]
with \(\mathcal{K}\) chosen so that \(\widehat{\mu}(x_k)\in[\mu_{\min},\mu_{\max}]\) for task-appropriate constants \(\mu_{\min}>0\ll \mu_{\max}<1\).
Report \(\widehat{\beta}\) (and optionally \(\widehat{k}\)) with bootstrap confidence intervals over tasks.
For continuous tasks, estimate \(v_0(x_k)=\mathbb{E}[(\widehat{Y}-Y)^2]\) and fit \(\log \widehat{v}_0(x_k)\approx \log c - \beta\log x_k\) over a range where scaling is stable.

\paragraph{Communication fidelity (\(\gamma(m)\)) or distortion (\(\sigma_c^2(m)\)).}
Communication is calibrated with a purpose-built transmission task that isolates encoding/decoding from problem solving, using the same message schema and truncation rules as the target pipeline.

For binary tasks, sample \(Y\in\{-1,+1\}\) uniformly, ask a sender to encode \(Y\) into at most \(m\) tokens, and ask a receiver to decode \(\widehat{Y}\) from the message alone.
The empirical flip rate \(\widehat{\epsilon}(m)=\Pr(\widehat{Y}\neq Y)\) yields \(\widehat{\gamma}(m)=1-2\widehat{\epsilon}(m)\).
For continuous tasks, sample \(Y\in[0,1]\), transmit under an \(m\)-token constraint, decode \(\widehat{Y}\), and estimate \(\widehat{\sigma}_c^2(m)=\mathbb{E}[(\widehat{Y}-Y)^2]\).

\paragraph{Shared-failure correlation (\(\rho\)).}
Correlation should be measured at the level where aggregation occurs (siblings), since that is the dependence that limits cancellation.
On tasks \(t=1,\dots,T\) with ground truth \(Y(t)\), run \(M\) leaf agents (different randomness and/or prompts).
For binary tasks define signed correctness \(S_i(t)=\widehat{Y}_i(t)Y(t)\in\{-1,+1\}\), compute pairwise sample correlations across tasks
\[
\widehat{\rho}_{ij} := \widehat{\mathrm{Corr}}_{t\in[T]}\big(S_i(t),S_j(t)\big),
\qquad 1\le i<j\le M,
\]
and aggregate
\[
\widehat{\rho} := \frac{2}{M(M-1)}\sum_{1\le i<j\le M}\widehat{\rho}_{ij}.
\]
For continuous tasks, use residuals \(E_i(t)=\widehat{Y}_i(t)-Y(t)\) and compute residual correlations analogously.
In practice it can be informative to stratify tasks by difficulty (e.g., baseline accuracy bins), since \(\rho\) may increase on harder tasks.

\paragraph{Budget accounting and effective context.}
Finally, record realized costs: tokens generated and tokens read (including system prompts, tool outputs, and formatting overhead), and verify the effective context window \(W\) after templating.
This prevents a common pitfall: attributing gains to ``topology'' when the true driver is uneven budget usage.

\begin{figure}[t]
\centering
\definecolor{Blue}{RGB}{30,100,170}
\definecolor{Orange}{RGB}{220,165,50}
\definecolor{Gray}{RGB}{100,120,145}

\begin{tikzpicture}[x=1.0cm, y=1.0cm]

  \begin{scope}[shift={(0,0)}]
    \node[anchor=south, font=\small\bfseries, align=center] at (1.7,3.2) {(a) Single-agent\\[-2pt]scaling};
    \draw[->, thick, draw=Gray] (0,0) -- (3.4,0) node[below, font=\small] {$\log x$};
    \draw[->, thick, draw=Gray] (0,0) -- (0,2.9) node[left, font=\small] {$\log \hat{\mu}(x)$};
    \draw[thick, draw=Blue] plot[smooth] coordinates {(0.2,0.35) (0.9,0.7) (1.7,1.1) (2.5,1.5) (3.1,1.8)};
    \draw[densely dashed, draw=Gray!70] (0.4,0.4) -- (3.0,1.85);
    \node[font=\scriptsize, text=Gray] at (2.1,2.35) {fit slope $\hat{\beta}$};
  \end{scope}

  \begin{scope}[shift={(5.0,0)}]
    \node[anchor=south, font=\small\bfseries, align=center] at (1.7,3.2) {(b) Communication\\[-2pt]fidelity};
    \draw[->, thick, draw=Gray] (0,0) -- (3.4,0) node[below, font=\small] {$m$ (tokens)};
    \draw[->, thick, draw=Gray] (0,0) -- (0,2.9) node[left, font=\small, align=right] {$\hat{\gamma}(m)$ or\\[-2pt]$1/\hat{\sigma}_c^2(m)$};
    \draw[thick, draw=Orange] plot[smooth] coordinates {(0.2,0.25) (0.9,0.6) (1.7,1.05) (2.5,1.55) (3.1,1.9)};
    \node[font=\scriptsize, text=Gray] at (1.8,2.35) {calibration curve};
  \end{scope}

  \begin{scope}[shift={(10.0,0)}]
    \node[anchor=south, font=\small\bfseries, align=center] at (1.7,3.2) {(c) Shared-error\\[-2pt]correlation};
    \draw[->, thick, draw=Gray] (0,0) -- (3.4,0) node[below, font=\small] {diversity};
    \draw[->, thick, draw=Gray] (0,0) -- (0,2.9) node[left, font=\small] {$\hat{\rho}$};
    \draw[thick, draw=Blue] plot[smooth] coordinates {(0.2,2.3) (0.9,2.0) (1.7,1.5) (2.5,1.0) (3.1,0.7)};
    \node[font=\scriptsize, text=Gray, align=center] at (2.5,2.3) {more diversity\\[-1pt]reduces $\rho$};
  \end{scope}

\end{tikzpicture}

\caption{Calibration experiments (schematic). Left: estimate the single-agent scaling exponent \(\beta\) by fitting a power law in a non-saturated range.
Middle: estimate communication fidelity as a function of message length \(m\) via a transmission task.
Right: estimate shared-error correlation \(\rho\) and verify that diversity interventions reduce it.}
\label{fig:calibration_schematic}
\end{figure}
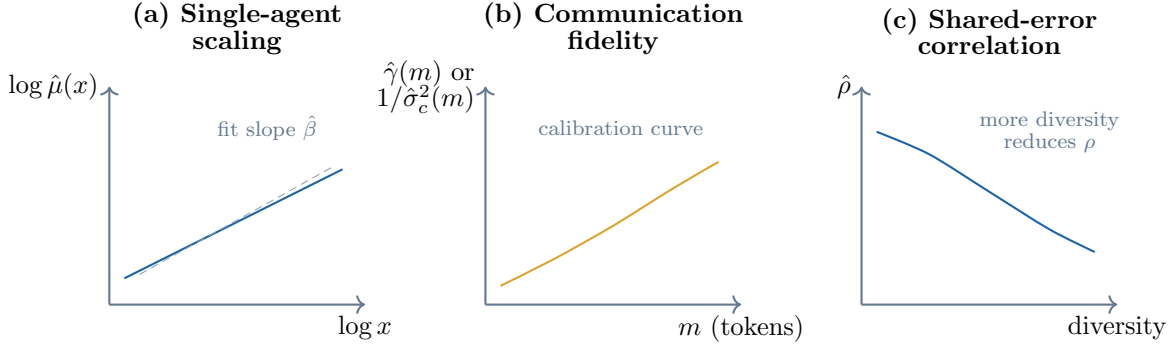

\subsection{LLM diagnostic template: context limits, correlation floors, and saturation}
\label{sec:llm_case_studies}

The synthetic setting checks internal consistency under the assumed generative models.
The more important question is whether the framework remains useful as an engineering diagnostic when agents are real LLM instances with non-Gaussian errors and prompt effects.
Our aim here is therefore qualitative: do calibrated parameters correctly identify which lever is binding and which intervention should help?

We recommend using at least one binary family with unambiguous correctness (e.g., multiple-choice, exact-answer math, pass/fail tests) and one continuous family with graded scores (e.g., rubric grading or scalar estimation with known targets), since the framework makes distinct predictions in each.
Implement single-agent, star, tree (and optionally chain) pipelines under matched realized budgets, standardizing prompts and message schemas across topologies.
For stars, feasibility is \(Nm\le W\); for trees, feasibility is \(bm\le W\), with depth and leaf count chosen using the design rules in Section~\ref{sec:design}.

In this setting, the most useful validations are diagnostic.
First, under fixed \(W\), the best feasible star should saturate in effective parallelism once \(N\) hits \(\lfloor W/m\rfloor\), so additional budget primarily benefits scale-up.
Second, measured \(\widehat{\rho}\) should help explain why naive voting sometimes fails: high \(\widehat{\rho}\) predicts limited aggregation gains even with many agents.
Third, the calibrated quantity \(\widehat{\alpha}_\rho=\widehat{\gamma}(m)(\widehat{\rho}+(1-\widehat{\rho})f_b'(0))\) should act as a feasibility check for trees: when \(\widehat{\alpha}_\rho\le 1\), deeper hierarchies should become unstable or drift toward chance; when \(\widehat{\alpha}_\rho>1\), trees should amplify and then saturate.
Finally, when performance plateaus with depth or budget, the framework predicts that further gains come primarily from improving communication fidelity (longer messages, better schemas) or reducing shared failures (diversity, verification), rather than adding depth.

\paragraph{Diagnostic interventions (compact ablations).}
Three inexpensive interventions connect the theory's parameters to practice under matched budgets: varying message length \(m\) (trading off \(\gamma(m)\) against coordination cost), injecting diversity to reduce \(\widehat{\rho}\) (e.g., prompt, model, temperature, tool-policy variation or explicit verification roles), and swapping topology (single, star, chain, tree) under fixed schemas.
When these interventions move \(\widehat{\gamma}(m)\) or \(\widehat{\rho}\) in the expected direction, the phase-diagram predictions provide testable hypotheses about whether amplification becomes feasible and whether scale-out can outpace scale-up before saturation.

\paragraph{Logging and reproducibility.}
For each run, log realized tokens per node, realized message lengths, and outputs at each level (or hashes if privacy is needed).
These logs enable post hoc estimation of \(\rho\), auditing of budget fairness, and direct diagnosis of failure modes such as repeated identical errors.

These templates are meant to support qualitative diagnostics: whether context is binding, whether measured correlation suggests limited gains, and whether the calibrated \(\widehat{\alpha}_\rho\) places a hierarchy in the amplifying or collapsing regime.
In that sense, the phase diagrams and design curves from Sections~\ref{sec:phase_diagrams}--\ref{sec:design} provide concrete hypotheses about which organization should help under matched budgets and which lever (communication fidelity versus shared-failure reduction) is most likely to matter when it does not.
Establishing quantitative predictiveness in realistic deployments, where prompts, tools, and dependence structure may drift with depth and task difficulty, is an important direction for future work.

\vskip 0.2in
\bibliography{main}

\end{document}